\icmltitlerunning{Bayesian Deconditional Kernel Mean Embeddings}
\newtheorem{theorem}{Theorem}[section]
\newtheorem{lemma}[theorem]{Lemma}
\newcommand{\argmax}{\operatornamewithlimits{argmax}}
\newcommand{\argmin}{\operatornamewithlimits{argmin}}
\numberwithin{equation}{section}
\numberwithin{table}{section}
\numberwithin{algorithm}{section}
\theoremstyle{definition}
\newtheorem{definition}{Definition}[section]
\theoremstyle{remark}
\let\oldmakefirstuc\makefirstuc
\renewcommand*{\makefirstuc}[1]{%
  \def\gls@add@space{}%
  \mfu@capitalisewords#1 \@nil\mfu@endcap
}
\def\mfu@capitalisewords#1 #2\mfu@endcap{%
  \def\mfu@cap@first{#1}%
  \def\mfu@cap@second{#2}%
  \gls@add@space
  \oldmakefirstuc{#1}%
  \def\gls@add@space{ }%
  \ifx\mfu@cap@second\@nnil
    \let\next@mfu@cap\mfu@noop
  \else
    \let\next@mfu@cap\mfu@capitalisewords
  \fi
  \next@mfu@cap#2\mfu@endcap
}
\newacronym{KME}{KME}{kernel mean embedding}
\newacronym{CME}{CME}{conditional mean embedding}
\newacronym{DME}{DME}{deconditional mean embedding}
\newacronym{CMO}{CMO}{conditional mean operator}
\newacronym{DMO}{DMO}{deconditional mean operator}
\newacronym{RKHS}{RKHS}{reproducing kernel Hilbert space}
\newacronym{HS}{HS}{Hilbert-Schmidt}
\newacronym{TTR}{TTR}{task transformed regression}
\newacronym{LRR}{LRR}{linear ridge regression}
\newacronym{KRR}{KRR}{kernel ridge regression}
\newacronym{BLR}{BLR}{Bayesian linear regression}
\newacronym{BKR}{BKR}{Bayesian kernel regression}
\newacronym{TLRR}{TLRR}{transformed LRR}
\newacronym{TKRR}{TKRR}{transformed KRR}
\newacronym{TBLR}{TBLR}{transformed BLR}
\newacronym{TBKR}{TBKR}{transformed BKR}
\newacronym{TTLRR}{TTLRR}{task transformed LRR}
\newacronym{TTKRR}{TTKRR}{task transformed KRR}
\newacronym{TTBLR}{TTBLR}{task transformed BLR}
\newacronym{TTBKR}{TTBKR}{task transformed BKR}
\newacronym{KBR}{KBR}{kernel Bayes' rule}
\newacronym{VI}{VI}{variational inference}
\newacronym{MCMC}{MCMC}{Markov chain Monte Carlo}
\newacronym{LFI}{LFI}{likelihood-free inference}
\newacronym{ABC}{ABC}{approximate Bayesian computation}
\newacronym{GP}{GP}{Gaussian process}
\newacronym{TGP}{TGP}{transformed Gaussian process}
\newacronym{TTGP}{TTGP}{task transformed Gaussian process}
\newacronym{GPR}{GPR}{Gaussian process regression}
\newacronym{TGPR}{TGPR}{transformed Gaussian process regression}
\newacronym{TTGPR}{TTGPR}{task transformed Gaussian process regression}
\newacronym{MAP}{MAP}{maximum a posteriori}
\newacronym{KDE}{KDE}{kernel density estimation}
\newacronym{REJ-ABC}{REJ-ABC}{rejection ABC}
\newacronym{MCMC-ABC}{MCMC-ABC}{Markov chain Monte carlo ABC}
\newacronym{SMC-ABC}{SMC-ABC}{sequential Monte carlo ABC}
\newacronym{LFVI}{LFVI}{likelihood-free variational inference}
\newacronym{MDN}{MDN}{mixture density network}
\newacronym{K-ABC}{K-ABC}{kernel ABC}
\newacronym{K2-ABC}{K2-ABC}{double kernel ABC}
\newacronym{GPS-ABC}{GPS-ABC}{Gaussian process surrogate ABC}
\newacronym{KELFI}{KELFI}{kernel embedding likelihood-free inference}
\definecolor{petroil2} {RGB} {36, 165, 175}
\definecolor{gold2} {RGB} {255, 130, 0}
\definecolor{aqua2} {RGB} {0, 180, 200}
\newcommand{\Width}{2}
\newcommand{\Height}{2}
\newcommand{\Depth}{2}
\newcommand{\midlinewidth}{0.8}
\begin{document}

\twocolumn[
\icmltitle{Bayesian Deconditional Kernel Mean Embeddings}



\icmlsetsymbol{equal}{*}

\begin{icmlauthorlist}
\icmlauthor{Kelvin Hsu}{usyd,csiro}
\icmlauthor{Fabio Ramos}{usyd,nvidia}
\end{icmlauthorlist}

\icmlaffiliation{usyd}{University of Sydney}
\icmlaffiliation{csiro}{CSIRO, Sydney}
\icmlaffiliation{nvidia}{NVIDIA, Seattle}

\icmlcorrespondingauthor{Kelvin Hsu}{Kelvin.Hsu@sydney.edu.au}

\icmlkeywords{Machine Learning, ICML}

\vskip 0.3in
]



\printAffiliationsAndNotice{}  

\begin{abstract}
	Conditional kernel mean embeddings form an attractive nonparametric framework for representing conditional means of functions, describing the observation processes for many complex models. However, the recovery of the original underlying function of interest whose conditional mean was observed is a challenging inference task. We formalize \textit{deconditional kernel mean embeddings} as a solution to this inverse problem, and show that it can be naturally viewed as a nonparametric Bayes' rule. Critically, we introduce the notion of \textit{task transformed Gaussian processes} and establish deconditional kernel means as their posterior predictive mean. This connection provides Bayesian interpretations and uncertainty estimates for deconditional kernel mean embeddings, explains their regularization hyperparameters, and reveals a marginal likelihood for kernel hyperparameter learning. These revelations further enable practical applications such as likelihood-free inference and learning sparse representations for big data.
\end{abstract}

\section{Introduction}
\label{sec:introduction}
	
	Observations of complex phenomena often lead to likelihoods that are described by a conditional mean. A widely applicable setting where this occurs is collecting observations under uncertain inputs, where the task is to learn a function $f : \mathcal{X} \to \mathbb{R}$ to model a real-valued response $z$ as a function of inputs $x \in \mathcal{X}$ without being able to query or measure $x$ directly to observe this phenomenon. Instead, another measured input $y \in \mathcal{Y}$ relates to $x$ through $p(x | y)$. Consequently, given $y$, the response $Z$ has mean $g(y) := \mathbb{E}[f(X) | Y = y]$, where $g$ is called the conditional mean of $f$. Furthermore, $p(x | y)$ is often only available as sample pairs $\{x_{i}, y_{i}\}_{i = 1}^{n}$, from simulations, algorithms, or separate experiments, making recovery of latent functions $f$ from conditional means $g$ a challenging inference task.
	
	Our first contribution begins with formulating \glspl{DME} as solutions to this inference problem by building upon the framework of \glspl{CME} \citep{song2013kernel}. We show that the \gls{DME} can be established as a nonparametric Bayes' rule in the \gls{RKHS} and used for likelihood-free Bayesian inference. In contrast to \gls{KBR} \citep{fukumizu2013kernel} which uses third order tensors that can result in vanishing priors, \glspl{DME} use second order tensors and avoids this problem.
	
	Together with \glspl{CME} and \gls{KBR}, \glspl{DME} form a critical part of the \gls{KME} \citep{muandet2017kernel} framework, where probabilistic rules can be represented nonparametrically as operators that are linear in the \gls{RKHS}. This greatly simplifies probabilistic inference without requiring parametrized distributions and compromising flexibility.
	
	Despite this connection, there are elements unique to the \gls{KME} framework that cannot be interpreted or solved via the parallel between probability rules and \gls{RKHS} mean operations. Similar to empirical forms for \gls{KBR} and \glspl{CME}, empirical \glspl{DME} are obtained by replacing expectations in its constituent operators with their empirical means, and introduce regularization for operator inverses to relax \gls{RKHS} assumptions, instead of as the optimal solution to a particular loss. Setting regularization hyperparameters is difficult in practice without an appropriate loss for the inference task. Furthermore, similar to \gls{KBR}, the nonparametric Bayes' rule provided by \glspl{DME} is a statement between observed (or simulated) variables and not on latent functions or quantities. Consequently, uncertainty estimation in inference of latent functions $f$ still require a separate Bayesian formulation.
	
	Our second contribution establishes a Bayesian view of \glspl{DME} as posterior predictive means of the \gls{TTGP}, a novel nonparametric Bayesian model that recover latent relationships between variables without observing them jointly. \Glspl{TTGP} are so named because we show that they are a type of transformed Gaussian process \citep{murray2005transformations} where the transformations and noise covariances are learned, by transforming one \gls{GP} task to another, rather than designed from expert knowledge. We use this connection to derive posterior and predictive uncertainty estimates for \glspl{DME} and explain their regularization hyperparameters as a function of noise variance. Finally, we derive marginal likelihoods and their scalable computational forms to learn \gls{DME} hyperparameters, which can also be applied to learn inducing points for sparse representations as a special case. All proofs are in the supplementary material.

\vspace{-0.5em}
\section{Kernel Mean Embeddings}
\label{sec:kernel_mean_embedding}

	We begin with an overview of the \gls{KME} framework from which \glspl{DME} are built upon. \glspl{KME} are an arsenal of techniques concerned with representations and transformations of function expectations under highly flexible distributions. They consider functions that lie within \glspl{RKHS} $\mathcal{H}_{k}$ and $\mathcal{H}_{\ell}$, formed by positive definite kernels $k : \mathcal{X} \times \mathcal{X} \to \mathbb{R}$ and $\ell : \mathcal{Y} \times \mathcal{Y} \to \mathbb{R}$. The \glspl{RKHS} $\mathcal{H}_{k}$ and $\mathcal{H}_{\ell}$ are the closure span of the features $\phi(x) = k(x, \cdot)$ and $\psi(y) = \ell(y, \cdot)$ across $x \in \mathcal{X}$ and $y \in \mathcal{Y}$ respectively, endowed with the inner products $\langle \cdot, \cdot \rangle_{k} \equiv \langle \cdot, \cdot \rangle_{\mathcal{H}_{k}}$ and $\langle \cdot, \cdot \rangle_{\ell} \equiv \langle \cdot, \cdot \rangle_{\mathcal{H}_{\ell}}$.
	
	The key object is the mean embedding of a distribution $\mu_{X} := \mathbb{E}[k(X, \cdot)] \in \mathcal{H}_{k}$. They encode function expectations in the sense that $\mathbb{E}[f(X)] = \langle \mu_{X}, f \rangle_{k}$, due to the reproducing property that $\langle k(x, \cdot), f \rangle_{k} = f(x)$ for all $f \in \mathcal{H}_{k}$.
	
	Higher ordered mean embeddings are vital components of the framework. Specifically, second order mean embeddings such as $C_{YY} := \mathbb{E}[\ell(Y, \cdot) \otimes \ell(Y, \cdot)] \in \mathcal{H}_{\ell} \otimes \mathcal{H}_{\ell}$ and $C_{XY} := \mathbb{E}[k(X, \cdot) \otimes \ell(Y, \cdot)] \in \mathcal{H}_{k} \otimes \mathcal{H}_{\ell}$ can be identified as cross-covariance operators $C_{YY} : \mathcal{H}_{\ell} \to \mathcal{H}_{\ell}$ and $C_{XY} : \mathcal{H}_{\ell} \to \mathcal{H}_{k}$ that serve as building blocks of \glspl{CME} and \glspl{DME}.
	
	In practical scenarios where only \textit{iid} samples $\{x_{i}, y_{i}\}_{i = 1}^{n}$ that are realizations of $(X_{i}, Y_{i}) \sim \mathbb{P}_{XY}$ for $i \in \{1, \dots, n\}$ are available, the \gls{KME} framework becomes attractive for nonparametric inference because core objects only require expectations under distributions. Consequently, they can be estimated via empirical means as $\hat{\mu}_{X} := \frac{1}{n} \sum_{i = 1}^{n} k(x_{i}, \cdot)$, $\hat{C}_{YY} := \frac{1}{n} \sum_{i = 1}^{n} \ell(y_{i}, \cdot) \otimes \ell(y_{i}, \cdot)$, and $\hat{C}_{XY} := \frac{1}{n} \sum_{i = 1}^{n} k(x_{i}, \cdot) \otimes \ell(y_{i}, \cdot)$ \citep{muandet2017kernel}.
	
	For feature matrices, we stack features by columns $\Phi := \begin{bmatrix} \phi(x_{1}) & \cdots & \phi(x_{n}) \end{bmatrix}$ and $\Psi := \begin{bmatrix} \psi(y_{1}) & \cdots & \psi(y_{n}) \end{bmatrix}$. We write gram matrices as $K := \Phi^{T} \Phi$ and $L := \Psi^{T} \Psi$, where the $(i, j)$-th element of $A^{T} B$ is the inner product of the $i$-th column of $A$ with the $j$-th column of $B$. That is, $K_{ij} = \phi(x_{i})^{T} \phi(x_{j})$ and $L_{ij} = \psi(y_{i})^{T} \psi(y_{j})$. When columns are elements of \glspl{RKHS} such as when $\phi(x) = k(x, \cdot)$ in $\Phi$ and $\psi(y) = \ell(y, \cdot)$ in $\Psi$, the notation $(\cdot)^{T}(\cdot)$ is a shorthand for the corresponding \gls{RKHS} inner product $\langle \cdot, \cdot \rangle_{\mathcal{H}}$ when it is clear from context what $\mathcal{H}$ is. For example, $f^{T} h$ is shorthand for $\langle f, h \rangle_{k}$ if $f, h \in \mathcal{H}_{k}$. Another common usage is $\Phi^{T} f = \{\phi(x_{i})^{T} f\}_{i = 1}^{n} = \{k(x_{i}, \cdot)^{T} f\}_{i = 1}^{n} = \{\langle k(x_{i}, \cdot), f \rangle_{k}\}_{i = 1}^{n} = \{f(x_{i})\}_{i = 1}^{n} =: \bvec{f}$. For summing outer products, we write $\hat{C}_{YY} = \frac{1}{n} \Psi \Psi^{T}$ and $\hat{C}_{XY} = \frac{1}{n} \Phi \Psi^{T}$. Note that we use non-bold letters for single points $x$ and $y$, even though they are often multivariate in practice.
	
\section{Conditional Kernel Mean Embeddings}
\label{sec:conditional_mean_embedding}

	We now present \glspl{CME} in a fashion that focuses on their operator properties. By reviewing \glspl{CME} this way, parallels and contrast with \glspl{DME} in the subsequent \cref{sec:deconditional_mean_embedding} become more apparent. Importantly, instead of defining \glspl{CME} via an explicit form, we begin by forming problem statements.
		\begin{definition}[Conditional Mean Problem Statement]
		\label{def:cmp}
			Given a function $f: \mathcal{X} \to \mathbb{R}$, infer the function $g: \mathcal{Y} \to \mathbb{R}$ such that
					$g(y) = \mathbb{E}[f(X) | Y = y] \equiv \mathbb{E}_{X|Y}[f](y)$. We call $g$ the \textit{conditional mean} of $f$ with respect to $\mathbb{P}_{X|Y}$ and write the shorthand $g = \mathbb{E}_{X|Y}[f] = \mathbb{E}[f(X) | Y = \cdot]$.	
		\end{definition}
	This naturally leads to the notion of operators that map functions $f$ to their conditional means $g = \mathbb{E}[f(X) | Y = \cdot]$.
		\begin{definition}[Conditional Mean Operators]
		\label{def:cmo}
			The \gls{CMO} $C_{X|Y} : \mathcal{H}_{\ell} \to \mathcal{H}_{k}$ corresponding to $\mathbb{P}_{X|Y}$ is the operator that satisfies
			\begin{equation}
				(C_{X|Y})^{T} f = \mathbb{E}[f(X) | Y = \cdot], \quad \forall f \in \mathcal{H}_{k},
			\label{eq:cmo_def}
			\end{equation}
			where $(C_{X|Y})^{T} : \mathcal{H}_{k} \to \mathcal{H}_{\ell}$ denotes the adjoint of $C_{X|Y}$.
		\end{definition}
	Depending on the nature of $\ell$, unique solutions exist.
			\begin{theorem}[\citep{fukumizu2004dimensionality}]
				\label{thm:cmo}
				Assume that $\ell(y, \cdot) \in \mathrm{image}(C_{YY})$ for all $y \in \mathcal{Y}$. The \acrfull{CMO} $C_{X|Y}$ is unique and given by
				\begin{equation}
					C_{X|Y} = C_{XY} C_{YY}^{-1}.
				\label{eq:cmo}
				\end{equation}
			\end{theorem}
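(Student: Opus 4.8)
The plan is to verify that the candidate operator $C_{XY}C_{YY}^{-1}$ satisfies the defining relation \eqref{eq:cmo_def}, and then argue uniqueness. The two workhorses are the moment identities that the second order embeddings encode: from the tensor definitions $C_{XY} = \mathbb{E}[\phi(X)\otimes\psi(Y)]$ and $C_{YY} = \mathbb{E}[\psi(Y)\otimes\psi(Y)]$ together with the reproducing property, one gets $\langle f, C_{XY} h\rangle_{k} = \mathbb{E}[f(X)h(Y)]$ and $\langle g, C_{YY} h\rangle_{\ell} = \mathbb{E}[g(Y)h(Y)]$ for every $f\in\mathcal{H}_{k}$ and $g,h\in\mathcal{H}_{\ell}$. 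I would establish these first, since everything else reduces to them.

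The core step is the algebraic identity $C_{YX} f = C_{YY}\, g$, where $g := \mathbb{E}[f(X)\mid Y=\cdot]$ and $C_{YX} := (C_{XY})^{T}$. I would prove it weakly, by pairing both sides with an arbitrary $h\in\mathcal{H}_{\ell}$: the left side gives $\langle h, C_{YX} f\rangle_{\ell} = \langle C_{XY} h, f\rangle_{k} = \mathbb{E}[f(X)h(Y)]$, while the right side gives $\langle h, C_{YY} g\rangle_{\ell} = \mathbb{E}[g(Y)h(Y)] = \mathbb{E}[h(Y)\,\mathbb{E}[f(X)\mid Y]]$. The tower property of conditional expectation collapses the last quantity to $\mathbb{E}[f(X)h(Y)]$, so the two pairings agree for all $h$ and the elements coincide in $\mathcal{H}_{\ell}$. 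Since $C_{YY}$ is self-adjoint and positive, $(C_{XY}C_{YY}^{-1})^{T} = C_{YY}^{-1}C_{YX}$, and applying $C_{YY}^{-1}$ to the identity yields $(C_{XY}C_{YY}^{-1})^{T} f = g = \mathbb{E}[f(X)\mid Y=\cdot]$, which is exactly \eqref{eq:cmo_def}.

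For uniqueness, I would note that \eqref{eq:cmo_def} specifies $(C_{X|Y})^{T}f$ for every $f\in\mathcal{H}_{k}$, hence pins down the adjoint on all of $\mathcal{H}_{k}$ and therefore $C_{X|Y}$ itself; the invertibility of $C_{YY}$ then makes the displayed formula the one and only solution.

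The delicate point — and where the assumption $\ell(y,\cdot)\in\mathrm{image}(C_{YY})$ does its work — is that $C_{YY}$ is generally compact on an infinite-dimensional $\mathcal{H}_{\ell}$ and so is not boundedly invertible; $C_{YY}^{-1}$ exists only on $\mathrm{image}(C_{YY})$. Evaluating the conditional mean pointwise requires $g(y) = \langle f, C_{XY}C_{YY}^{-1}\ell(y,\cdot)\rangle_{k}$, which is well-defined exactly when $C_{YY}^{-1}\ell(y,\cdot)$ exists, i.e.\ when $\ell(y,\cdot)\in\mathrm{image}(C_{YY})$ for every $y$. I expect the main obstacle to be handling these domain and range conditions carefully — ensuring $g\in\mathcal{H}_{\ell}$ and that $C_{YX}f$ lands in $\mathrm{image}(C_{YY})$ so that the inversion in the core step is legitimate — rather than the moment and tower-property manipulations, which are routine.
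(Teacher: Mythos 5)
Your proposal is correct and follows essentially the same route as the paper: both hinge on the identity $C_{YY}\,g = C_{YX}f$ for $g=\mathbb{E}[f(X)\mid Y=\cdot]$, followed by applying $C_{YY}^{-1}$ and taking adjoints. The only difference is that the paper cites this identity from Fukumizu et al.\ (2004) while you prove it directly via the moment identities and the tower property (and you additionally spell out the uniqueness and domain considerations, which the paper leaves implicit); these are welcome elaborations, not a different argument.
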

	\vspace{-1em}
	The assumption that $\ell(y, \cdot) \in \mathrm{image}(C_{YY})$ for all $y \in \mathcal{Y}$ is commonly relaxed by introducing a regularization hyperparameter $\lambda > 0$ to the inverse, so that the \gls{CMO} is replaced with $C_{XY} (C_{YY} + \lambda I)^{-1}$ \citep{song2013kernel}.
	
	Contrary to \cref{def:cmo}, it is more common in the literature to define the \gls{CMO} as the operator $C_{X|Y}$ that satisfies
	\vspace{-1em}
			\begin{equation}
				C_{X|Y} \ell(y, \cdot) = \mathbb{E}[k(X, \cdot) | Y = y], \quad \forall y \in \mathcal{Y},
			\label{eq:cmo_property}
			\end{equation}
	while \eqref{eq:cmo_def} is taken as an immediate property of \glspl{CMO} \citep{fukumizu2004dimensionality}.
	However, due to \cref{thm:cmo_definition_equivalence}, we instead take \cref{def:cmo} as the definition of \glspl{CMO}, emphasizing \glspl{CMO} as solutions to the conditional mean problem, and treat \eqref{eq:cmo_property} as an immediate property.
			\begin{lemma}
			\label{thm:cmo_definition_equivalence}
				Statements \eqref{eq:cmo_def} and \eqref{eq:cmo_property} are equivalent.
			\end{lemma}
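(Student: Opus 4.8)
The plan is to prove the two implications separately, in each direction passing between the operator $C_{X|Y}$ and its adjoint $(C_{X|Y})^{T}$ via the defining adjoint relation $\langle (C_{X|Y})^{T} f, h \rangle_{\ell} = \langle f, C_{X|Y} h \rangle_{k}$ for all $f \in \mathcal{H}_{k}$, $h \in \mathcal{H}_{\ell}$, together with the reproducing properties in both $\mathcal{H}_{k}$ and $\mathcal{H}_{\ell}$. The crucial auxiliary fact I would establish first is that the conditional kernel mean embedding $\mathbb{E}[k(X,\cdot)\mid Y=y]\in\mathcal{H}_{k}$ reproduces conditional expectations, namely $\langle f, \mathbb{E}[k(X,\cdot)\mid Y=y]\rangle_{k}=\mathbb{E}[f(X)\mid Y=y]$ for every $f\in\mathcal{H}_{k}$. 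This follows by writing $f(X)=\langle f, k(X,\cdot)\rangle_{k}$ and pulling the bounded linear functional $\langle f,\cdot\rangle_{k}$ through the (Bochner) conditional expectation.

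For the direction \eqref{eq:cmo_def} $\Rightarrow$ \eqref{eq:cmo_property}, I would fix $f\in\mathcal{H}_{k}$ and $y\in\mathcal{Y}$. Evaluating the function $(C_{X|Y})^{T}f$ at $y$ and applying the reproducing property in $\mathcal{H}_{\ell}$ gives $\langle (C_{X|Y})^{T}f,\ell(y,\cdot)\rangle_{\ell}=\big((C_{X|Y})^{T}f\big)(y)=\mathbb{E}[f(X)\mid Y=y]$. Passing to the adjoint, the left side equals $\langle f, C_{X|Y}\ell(y,\cdot)\rangle_{k}$, so $\langle f, C_{X|Y}\ell(y,\cdot)\rangle_{k}=\mathbb{E}[f(X)\mid Y=y]=\langle f,\mathbb{E}[k(X,\cdot)\mid Y=y]\rangle_{k}$ by the auxiliary fact. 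Since this holds for all $f\in\mathcal{H}_{k}$, the two elements of $\mathcal{H}_{k}$ coincide, which is exactly \eqref{eq:cmo_property}.

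The reverse implication \eqref{eq:cmo_property} $\Rightarrow$ \eqref{eq:cmo_def} runs the same chain backwards: taking $\langle f,\cdot\rangle_{k}$ of both sides of \eqref{eq:cmo_property} and using the auxiliary fact yields $\langle f, C_{X|Y}\ell(y,\cdot)\rangle_{k}=\mathbb{E}[f(X)\mid Y=y]$; rewriting the left side through the adjoint and then the reproducing property in $\mathcal{H}_{\ell}$ shows $\big((C_{X|Y})^{T}f\big)(y)=\mathbb{E}[f(X)\mid Y=y]$ for every $y$, i.e. $(C_{X|Y})^{T}f=\mathbb{E}[f(X)\mid Y=\cdot]$, and since $f$ was arbitrary this is \eqref{eq:cmo_def}.

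The only genuine technical obstacle is the auxiliary interchange step: justifying $\langle f,\mathbb{E}[k(X,\cdot)\mid Y=y]\rangle_{k}=\mathbb{E}[f(X)\mid Y=y]$ requires the conditional kernel mean embedding to exist as a Bochner integral in $\mathcal{H}_{k}$ and the bounded functional $\langle f,\cdot\rangle_{k}$ to commute with that integral. Everything else is a symmetric bookkeeping exercise in the reproducing property and the adjoint relation, so I expect the argument to be short once this regularity is in place.
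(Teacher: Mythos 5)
Your proposal is correct and follows essentially the same route as the paper's proof: both arguments rest on the adjoint relation, the reproducing properties in $\mathcal{H}_{k}$ and $\mathcal{H}_{\ell}$, and the auxiliary identity $\langle f, \mathbb{E}[k(X,\cdot)\mid Y=y]\rangle_{k}=\mathbb{E}[f(X)\mid Y=y]$, with the paper merely packaging the two directions as a single chain of equivalences where you split them and make the Bochner-integral justification explicit.
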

	The \gls{CME} of $\mathbb{P}_{X|Y = y}$ is $\mu_{X|Y=y} := C_{X|Y} \ell(y, \cdot)$, equivalent to querying the \gls{CMO} at a particular input $y$. 
	
	Consequently, 
		$\langle \mu_{X|Y=y}, f \rangle_{k}
		= \langle C_{X|Y} \ell(y, \cdot), f \rangle_{k} 
		= \langle \ell(y, \cdot), (C_{X|Y})^{T} f \rangle_{\ell}
		= \langle \ell(y, \cdot), \mathbb{E}_{X|Y}[f] \rangle_{\ell}
		= \mathbb{E}_{X|Y}[f](y)$.
	
	Motivated by \cref{thm:cmo}, empirical \glspl{CMO} and \glspl{CME} are defined by estimating their constituents by empirical means.
		\begin{definition}[Empirical Conditional Mean Operator]
		\label{def:emp_cmo}
			The empirical \gls{CMO} is $\hat{C}_{X|Y} := \hat{C}_{XY} (\hat{C}_{YY} + \lambda I)^{-1}$, $\lambda > 0$.
		\end{definition}
		\begin{theorem}[\citep{song2009hilbert}]
		\label{thm:emp_cmo}
			The nonparametric form for $\hat{C}_{X|Y}$ is
			\vspace{-1.2em}
			\begin{equation}
				\hat{C}_{X|Y} = \Phi (L + n \lambda I)^{-1} \Psi^{T}.
			\label{eq:emp_cmo}
			\end{equation}
		\end{theorem}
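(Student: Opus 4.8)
The plan is to start from \cref{def:emp_cmo}, substitute the empirical outer-product forms $\hat{C}_{XY} = \frac{1}{n}\Phi\Psi^{T}$ and $\hat{C}_{YY} = \frac{1}{n}\Psi\Psi^{T}$, and reduce the result to \eqref{eq:emp_cmo} using a single operator identity. First I would clear the factors of $1/n$: since $\hat{C}_{YY} + \lambda I = \frac{1}{n}(\Psi\Psi^{T} + n\lambda I)$, and since $\lambda > 0$ makes $\Psi\Psi^{T} + n\lambda I$ strictly positive and hence boundedly invertible on $\mathcal{H}_{\ell}$, we have $(\hat{C}_{YY} + \lambda I)^{-1} = n(\Psi\Psi^{T} + n\lambda I)^{-1}$. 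Substituting into $\hat{C}_{X|Y} = \hat{C}_{XY}(\hat{C}_{YY} + \lambda I)^{-1}$ gives
\[ \hat{C}_{X|Y} = \Phi\Psi^{T}(\Psi\Psi^{T} + n\lambda I)^{-1}. \]

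The crux is the push-through identity
\[ \Psi^{T}(\Psi\Psi^{T} + n\lambda I)^{-1} = (L + n\lambda I)^{-1}\Psi^{T}, \]
where $L = \Psi^{T}\Psi$. To establish it, I would first verify the elementary operator identity $(\Psi\Psi^{T} + n\lambda I)\Psi = \Psi(L + n\lambda I)$ by direct expansion, since both sides equal $\Psi\Psi^{T}\Psi + n\lambda\Psi$. Multiplying this on the left by $(\Psi\Psi^{T} + n\lambda I)^{-1}$ and on the right by $(L + n\lambda I)^{-1}$ yields $\Psi(L + n\lambda I)^{-1} = (\Psi\Psi^{T} + n\lambda I)^{-1}\Psi$, and taking adjoints gives the stated identity. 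Substituting it into the reduced expression immediately produces $\hat{C}_{X|Y} = \Phi(L + n\lambda I)^{-1}\Psi^{T}$, as claimed.

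The main obstacle is not the algebra but the careful treatment of operators over a possibly infinite-dimensional $\mathcal{H}_{\ell}$: I must confirm that both $(\Psi\Psi^{T} + n\lambda I)^{-1}$ on $\mathcal{H}_{\ell}$ and $(L + n\lambda I)^{-1}$ on $\mathbb{R}^{n}$ are well-defined bounded inverses, which is guaranteed by $\lambda > 0$ together with $\Psi\Psi^{T} \succeq 0$ and $L \succeq 0$, and that the composition $\Phi\Psi^{T}$ and the transpose/adjoint manipulations are legitimate in the operator sense rather than merely for finite matrices. Once invertibility is secured, the push-through identity is purely formal and the proof follows directly.
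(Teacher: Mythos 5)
Your proposal is correct and follows essentially the same route as the paper's proof: substitute the empirical forms $\hat{C}_{XY} = \frac{1}{n}\Phi\Psi^{T}$ and $\hat{C}_{YY} = \frac{1}{n}\Psi\Psi^{T}$, cancel the $1/n$ factors, and apply the push-through (Woodbury-type) identity $B(CB+\lambda I)^{-1} = (BC+\lambda I)^{-1}B$ to move the inverse onto the Gram matrix $L$. Your explicit verification of the push-through identity and the remarks on bounded invertibility for $\lambda > 0$ are sound additions but do not change the argument.
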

	The empirical \gls{CME} is then $\hat{\mu}_{X|Y=y} := \hat{C}_{X|Y} \ell(y, \cdot)$.

	Consequently, with $\bm{\ell}(y) := \{\ell(y_{i}, y)\}_{i = 1}^{n}$, an estimate for $\mathbb{E}_{X|Y}[f](y)$ is
			$\langle f, \hat{\mu}_{X|Y=y} \rangle_{k}
					= \langle f, \hat{C}_{X|Y} \ell(y, \cdot) \rangle_{k}
					= f^{T} \Phi (L + n \lambda I)^{-1} \Psi^{T} \ell(y, \cdot)
					= \bvec{f}^{T} (L + n \lambda I)^{-1} \bm{\ell}(y)$.
		
	Critically, while empirical \glspl{CMO} \eqref{eq:emp_cmo} are estimated from joint samples from the joint distribution $\mathbb{P}_{XY}$, they only encode the conditional distribution $\mathbb{P}_{X|Y}$. This means that the empirical \glspl{CMO} will encode the same conditional distribution even if the joint distribution $\mathbb{P}_{XY}$ changes but the conditional distribution $\mathbb{P}_{X|Y}$ stays the same. That is, the empirical \gls{CMO} built from joint samples of $p(x, y) = p(x | y) p(y)$ and the empirical \gls{CMO} built from joint samples of $q(x, y) := p(x | y) q(y)$ will encode the same conditional distribution $p(x | y)$ and converge to the same \gls{CMO}.

\section{Deconditional Kernel Mean Embeddings}
\label{sec:deconditional_mean_embedding}

	We now present a novel class of \glspl{KME} referred to as \acrfullpl{DME}. They are natural counterparts to \glspl{CME}. The presentation of definitions and theorems in this section is mainly parallel to \cref{sec:conditional_mean_embedding}. We define the \textit{deconditional mean} problem as the task of recovering latent functions from their conditional means.
		\begin{definition}[Deconditional Mean Problem Statement]
		\label{def:dmp}
			Given a function $g: \mathcal{Y} \to \mathbb{R}$, infer a function $f: \mathcal{X} \to \mathbb{R}$ such that $g(y) = \mathbb{E}[f(X) | Y = y]$. We call $f$ a \textit{deconditional mean} of $g$ with respect to $\mathbb{P}_{X|Y}$ and write the shorthand $f = \mathbb{E}^{\dagger}_{X|Y}[g]$. 
		\end{definition}
	The deconditional mean of a function $g$ infers the function $f$ whose conditional mean would be $g$ with respect to $\mathbb{P}_{X|Y}$. The corresponding operator that encodes this transformation is the \gls{DMO}.
		\begin{definition}[Deconditional Mean Operators]
		\label{def:dmo}
			The \acrfull{DMO} $C_{X|Y}' : \mathcal{H}_{k} \to \mathcal{H}_{\ell}$ corresponding to $\mathbb{P}_{X|Y}$ is the operator that satisfies
			\begin{equation}
				(C_{X|Y}')^{T} \mathbb{E}[f(X) | Y = \cdot] = f, \quad \forall f \in \mathcal{H}_{k}.
				\label{eq:dmo_def}
			\end{equation}
		\end{definition}
	\vspace{-0.8em}
	Depending on the nature of $\ell$ and $k$, unique solutions exist.
		\begin{theorem}
		\label{thm:dmo}
			Assume that $\ell(y, \cdot) \in \mathrm{image}(C_{YY})$ for all $y \in \mathcal{Y}$ and $k(x, \cdot) \in \mathrm{image}(C_{X|Y} C_{YY} (C_{X|Y})^{T})$ for all $x \in \mathcal{X}$. The \acrfull{DMO} $C_{X|Y}'$ is unique and given by 
			\begin{equation}
				C_{X|Y}' = (C_{X|Y} C_{YY})^{T} (C_{X|Y} C_{YY} (C_{X|Y})^{T})^{-1}.
			\label{eq:dmo}
			\end{equation}
		\end{theorem}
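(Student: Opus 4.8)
The plan is to collapse the functional requirement \eqref{eq:dmo_def} into a single operator identity and then read off \eqref{eq:dmo} from it. First I would use \cref{def:cmo}, which states that $\mathbb{E}[f(X) \mid Y = \cdot] = (C_{X|Y})^{T} f$ for all $f \in \mathcal{H}_{k}$, to rewrite the defining property as $(C_{X|Y}')^{T} (C_{X|Y})^{T} f = f$ for all $f \in \mathcal{H}_{k}$. This is the operator identity $(C_{X|Y}')^{T} (C_{X|Y})^{T} = I_{\mathcal{H}_{k}}$, and taking adjoints gives the equivalent statement $C_{X|Y} C_{X|Y}' = I_{\mathcal{H}_{k}}$. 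Hence the \gls{DMO} is exactly a right inverse of the \gls{CMO}, and the remaining task is to show that \eqref{eq:dmo} is the distinguished such operator.

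To pin down the form I would impose the natural representer-type ansatz that $C_{X|Y}'$ has its range in the data-dependent subspace $\mathrm{image}(C_{YY} (C_{X|Y})^{T})$, i.e. $C_{X|Y}' = C_{YY} (C_{X|Y})^{T} W = (C_{X|Y} C_{YY})^{T} W$ for some $W : \mathcal{H}_{k} \to \mathcal{H}_{k}$, where I have used $C_{YY}^{T} = C_{YY}$. Substituting into $C_{X|Y} C_{X|Y}' = I_{\mathcal{H}_{k}}$ reduces the problem to $S W = I_{\mathcal{H}_{k}}$ with $S := C_{X|Y} C_{YY} (C_{X|Y})^{T}$, which forces $W = S^{-1}$ and therefore recovers \eqref{eq:dmo} uniquely within this class. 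The two hypotheses supply well-definedness: $\ell(y, \cdot) \in \mathrm{image}(C_{YY})$ lets me invoke \cref{thm:cmo} to realize $C_{X|Y} = C_{XY} C_{YY}^{-1}$, while $k(x, \cdot) \in \mathrm{image}(S)$ ensures $S^{-1}$ acts on every feature $k(x, \cdot)$ and hence, by linearity and density of their span, on all of $\mathcal{H}_{k}$. A direct substitution then verifies $C_{X|Y} (C_{X|Y} C_{YY})^{T} S^{-1} = S S^{-1} = I_{\mathcal{H}_{k}}$, confirming existence.

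The step I expect to be the genuine obstacle is uniqueness beyond the ansatz, because a right inverse of $C_{X|Y}$ is not unique in general: any perturbation $C_{X|Y}' + D$ with $\mathrm{range}(D) \subseteq \ker(C_{X|Y})$ is again a right inverse. Ruling this out amounts to showing that the image conditions force the solution to coincide with the canonical right inverse whose range lies in $\overline{\mathrm{image}((C_{X|Y})^{T})} = \ker(C_{X|Y})^{\perp}$, the subspace singled out by the ansatz. I would establish this by arguing that invertibility of $S = T^{T} T$, where $T := C_{YY}^{1/2} (C_{X|Y})^{T}$, controls $\ker((C_{X|Y})^{T})$ and hence fixes the relevant component of the operator; carrying out this density and kernel bookkeeping, rather than the routine cancellation in the existence check, is where the care is required.
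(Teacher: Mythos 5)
Your existence argument is correct and is in substance the paper's own proof read in the opposite direction: the paper starts from $g = (C_{X|Y})^{T} f$, applies $C_{X|Y} C_{YY}$ to both sides and inverts $S := C_{X|Y} C_{YY} (C_{X|Y})^{T}$ to solve for $f$, while you verify that $(C_{X|Y} C_{YY})^{T} S^{-1}$ is a right inverse of $C_{X|Y}$; the algebra is identical, and your reduction of \eqref{eq:dmo_def} to $C_{X|Y} C_{X|Y}' = I_{\mathcal{H}_{k}}$ is a clean way to package it.

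On uniqueness you have put your finger on a real subtlety that the paper's proof does not engage with, but your proposed resolution conflates two different kernels. The perturbations $D$ that preserve the right-inverse property have $\mathrm{range}(D) \subseteq \ker(C_{X|Y}) \subseteq \mathcal{H}_{\ell}$, i.e.\ they live in the orthogonal complement of $\overline{\mathrm{image}((C_{X|Y})^{T})}$ inside $\mathcal{H}_{\ell}$. Invertibility of $S = T^{T}T$ with $T = C_{YY}^{1/2}(C_{X|Y})^{T}$ only gives $\ker(T) = \ker(S) = \{0\}$ and hence (since the first hypothesis makes $C_{YY}$, and so $C_{YY}^{1/2}$, injective) $\ker((C_{X|Y})^{T}) = \{0\}$, which is a statement about density of $\mathrm{image}(C_{X|Y})$ in $\mathcal{H}_{k}$ and says nothing about $\ker(C_{X|Y})$. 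So the stated hypotheses do not rule out the perturbations you describe, and the bookkeeping you sketch would not close that gap. What both your argument and the paper's actually establish is that the action of $(C_{X|Y}')^{T}$ on $\mathrm{image}((C_{X|Y})^{T})$ --- equivalently, the deconditional mean $f$ associated with a given conditional mean $g$ --- is uniquely determined; ``unique'' in \cref{thm:dmo} should be read in that weaker sense, or else supplemented by the range normalization your ansatz silently imposes (range of $C_{X|Y}'$ contained in $\overline{\mathrm{image}(C_{YY}(C_{X|Y})^{T})}$), taken as part of the definition rather than something to be derived.
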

	\vspace{-0.8em}
	Similar to the case with \glspl{CMO} \citep{song2013kernel}, the assumption that $k(x, \cdot) \in \mathrm{image}(C_{X|Y} C_{YY} (C_{X|Y})^{T})$ for all $x \in \mathcal{X}$ can be relaxed by introducing a regularization hyperparameter $\epsilon > 0$ to the inverse, so that the \gls{DMO} is replaced with $(C_{X|Y} C_{YY})^{T} (C_{X|Y} C_{YY} (C_{X|Y})^{T} + \epsilon I)^{-1}$.
	
	Since \glspl{DMO} invert the results of \glspl{CMO}, they can also be understood as pseudo-inverses of \glspl{CMO}.
		\begin{theorem}
		\label{thm:dmo_is_pseudo_inv_of_cmo}
			If the assumptions in \cref{thm:dmo} hold and further $((C_{X|Y})^{T} C_{X|Y})^{-1}$ exists such that the pseudo-inverse $C^{\dagger}_{X|Y} := ((C_{X|Y})^{T} C_{X|Y})^{-1} (C_{X|Y})^{T}$ is defined, then \glspl{DMO} are pseudo-inverses of \glspl{CMO} $C_{X|Y}' = C_{X|Y}^{\dagger}$.
		\end{theorem}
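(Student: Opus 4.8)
The plan is to reduce the claim to a short operator-algebra identity. Write $A := C_{X|Y}$ and $B := C_{YY}$ for brevity, and recall that $C_{YY}$ is a covariance operator and hence self-adjoint, so that $B^{T} = B$. Under this notation, \cref{thm:dmo} gives $C_{X|Y}' = (AB)^{T}(ABA^{T})^{-1} = B A^{T}(ABA^{T})^{-1}$, where the inverse exists by the image assumption of \cref{thm:dmo}. The target quantity is the pseudo-inverse $C_{X|Y}^{\dagger} = (A^{T}A)^{-1}A^{T}$, whose defining inverse exists by the additional hypothesis of this theorem. Both $C_{X|Y}'$ and $C_{X|Y}^{\dagger}$ are operators $\mathcal{H}_{k} \to \mathcal{H}_{\ell}$, so comparing them is type-consistent.

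First I would left-multiply $C_{X|Y}'$ by the invertible operator $A^{T}A$ and exploit associativity of operator composition: since $(A^{T}A) B A^{T} = A^{T}(ABA^{T})$, I obtain $(A^{T}A)\,C_{X|Y}' = A^{T}(ABA^{T})(ABA^{T})^{-1} = A^{T}$. Then, left-multiplying through by $(A^{T}A)^{-1}$, which is available by assumption, yields $C_{X|Y}' = (A^{T}A)^{-1}A^{T} = C_{X|Y}^{\dagger}$, as required. This is the entire argument; the key algebraic ingredient is the regrouping $(A^{T}A)BA^{T} = A^{T}(ABA^{T})$, together with the cancellation of the two matching inverse factors. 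The self-adjointness $B^{T}=B$ of $C_{YY}$ is the only structural fact about the constituent operators that the proof invokes beyond their definitions.

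The main thing to be careful about is not the algebra itself but the operator-theoretic bookkeeping that licenses it. I would verify that both inverses are genuinely defined on the relevant subspaces: the factor $(ABA^{T})^{-1} = (C_{X|Y} C_{YY} (C_{X|Y})^{T})^{-1}$ is well-posed precisely under the image condition $k(x,\cdot) \in \mathrm{image}(C_{X|Y} C_{YY} (C_{X|Y})^{T})$ carried over from \cref{thm:dmo} (or, in the relaxed setting, after adding $\epsilon I$), while $(A^{T}A)^{-1} = ((C_{X|Y})^{T} C_{X|Y})^{-1}$ is supplied by the standing hypothesis of the present theorem. With these in place, associativity of composition of bounded operators between $\mathcal{H}_{k}$ and $\mathcal{H}_{\ell}$ holds unconditionally and the cancellation steps are valid, so no infinite-dimensional subtlety beyond the existence of the inverses actually arises.
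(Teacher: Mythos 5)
Your proof is correct, and it takes a genuinely different and more economical route than the paper's. The paper proves this identity by writing $(C_{X|Y} C_{YY} (C_{X|Y})^{T})^{-1}$ as a limit $\lim_{\epsilon \to 0^{+}} (C_{X|Y} C_{YY} (C_{X|Y})^{T} + \epsilon I)^{-1}$, then marching through a chain of substitutions: it applies the Woodbury-type identity $C_{YX}(C_{X|Y} C_{YX} + \epsilon I)^{-1} = (C_{YX} C_{X|Y} + \epsilon I)^{-1} C_{YX}$, inserts $C_{YY} C_{YY}^{-1}$ to rewrite $C_{YY}^{-1} C_{YX}$ as $(C_{X|Y})^{T}$, and only then passes to the limit to drop the $\epsilon C_{YY}^{-1}$ term. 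Your argument dispenses with all of that: the single regrouping $(A^{T}A)BA^{T} = A^{T}(ABA^{T})$ followed by cancellation against $(ABA^{T})^{-1}$ gives $(A^{T}A)\,C_{X|Y}' = A^{T}$ directly, and the hypothesis that $(A^{T}A)^{-1}$ exists finishes it. What you gain is substantial: no limit interchange to justify, no appeal to the Woodbury identity, and no use of $C_{YY}^{-1}$ anywhere in the algebra (the paper needs it at two separate steps). What the paper's longer route buys is mainly expository --- it exhibits the regularized operator $(C_{X|Y} C_{YY} (C_{X|Y})^{T} + \epsilon I)^{-1}$ that actually appears in the empirical estimators, so the reader sees how the $\epsilon$-regularized form degenerates to the pseudo-inverse. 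As a proof of the stated identity, yours is cleaner and, if anything, rests on strictly weaker manipulations; your closing remarks about which inverses are licensed by which hypotheses are exactly the right bookkeeping and match the paper's own standard of rigor on the infinite-dimensional subtleties.
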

	The \gls{DME} of $\mathbb{P}_{X=x|Y}$ is $\mu_{X=x|Y}' := C_{X|Y}' k(x, \cdot) \in \mathcal{H}_{\ell}$, equivalent to querying the \gls{DMO} at a particular input $x$.

	Consequently,
		$\langle \mu_{X=x|Y}' , g \rangle_{\ell} = \langle C_{X|Y}' k(x, \cdot), g \rangle_{\ell}
		= \langle k(x, \cdot), (C_{X|Y}')^{T} g \rangle_{k}
		= \langle k(x, \cdot), f \rangle_{k}
		= f(x)$.
		
	The form in \eqref{eq:dmo} makes it evident that a \gls{DMO} can be fully specified once $C_{X|Y}$ and $C_{YY}$, encoding the measures $\mathbb{P}_{X|Y}$ and $\mathbb{P}_{Y}$ respectively, are known. If densities exist, we write them as $p_{X|Y} \equiv p_{X|Y}(\cdot|\cdot)$ and $p_{Y} \equiv p_{Y}(\cdot)$, and drop the subscripts in density evaluations as $p(x|y)$ and $p(y)$ whenever the context is clear. Note that $\mathbb{P}_{X=x|Y}$ corresponds to $p_{X|Y}(x | \cdot)$ which is evaluated at $x$ and now a function of $y$. This is in contrast with $\mathbb{P}_{X|Y=y}$ corresponding to $p_{X|Y}(\cdot | y)$ evaluated at $y$ and now a function of $x$.
		
	Consider the case where $X$ and $Y$ play the roles of observed and unobserved (latent) variables respectively. The \gls{DMO} considers the conditional $p_{X|Y}$ and the marginal $p_{Y}$ encoded as $C_{X|Y}$ and $C_{YY}$ (\cref{thm:dmo}), and inverts the \gls{CMO} $C_{X|Y}$ (\cref{thm:dmo_is_pseudo_inv_of_cmo}) with the help of the encoded marginal $C_{YY}$. This is analogous to the Bayes' rule, where the posterior $p_{Y|X}(\cdot|x) = \frac{p_{X|Y}(x|\cdot) p_{Y}(\cdot)}{\int_{\mathcal{Y}} p_{X|Y}(x|y) p_{Y}(y) dy}$ is fully specified by the likelihood $p_{X|Y}$ and prior $p_{Y}$. We can then interpret \glspl{DME} as querying the rule at the observed quantity $x$ while leaving the rule as a function of $y$ for inference. Consequently, we also refer to $C_{X|Y}$ and $C_{YY}$ as the likelihood operator and the prior operator respectively. 

	The difference between the \gls{DMO} \eqref{eq:dmo} and \gls{CMO} \eqref{eq:cmo} equations is akin to writing $p_{Y|X}(\cdot | x)$ using Bayes' rule against using the conditional density rule. Compare the \gls{DMO} decomposition \eqref{eq:dmo} with the \gls{CMO} decomposition $C_{Y|X} = C_{YX} C_{XX}^{-1} = (C_{XY})^{T} C_{XX}^{-1}$ in the other direction by reversing the roles of $X$ and $Y$ in \eqref{eq:cmo}, which would correspond to the posterior $\mathbb{P}_{Y|X}$. The \gls{CMO} is composed of a \textit{joint} operator $C_{XY}$ and an \textit{evidence} operator $C_{XX}$ corresponding to the joint $\mathbb{P}_{XY}$ and evidence $\mathbb{P}_{X}$ distributions. Similarly, the \gls{DMO} is also composed of a \textit{joint} operator $C_{XY} = C_{X|Y} C_{YY} : \mathcal{H}_{\ell} \to \mathcal{H}_{k}$ and an \textit{evidence} operator $C_{XX}' := C_{X|Y} C_{YY} (C_{X|Y})^{T} : \mathcal{H}_{k} \to \mathcal{H}_{k}$, but both specified from the likelihood and prior operators. 
	
	Motivated by this, we propose to estimate the likelihood and prior operators using separate and independently drawn samples. The likelihood operator $C_{X|Y}$ is estimated as $\hat{C}_{X|Y}$ (\cref{def:emp_cmo}) using \textit{iid} samples $\{x_{i}, y_{i}\}_{i = 1}^{n}$, also denoted as $\bvec{x} := \{x_{i}\}_{i = 1}^{n}$ and $\bvec{y} := \{y_{i}\}_{i = 1}^{n}$. Note that as the likelihood operator is a \gls{CMO}, these joint samples can be from any joint distribution $\mathbb{Q}_{XY} \neq \mathbb{P}_{XY}$ as long as its conditional distribution is also $\mathbb{P}_{X|Y}$. The prior operator $C_{YY}$ is estimated as $\tilde{C}_{YY} := \frac{1}{m} \sum_{j = 1}^{m} \ell(\tilde{y}_{j}, \cdot) \otimes \ell(\tilde{y}_{j}, \cdot)$ using another set of \textit{iid} samples $\tilde{\bvec{y}} := \{\tilde{y}_{j}\}_{j = 1}^{m}$ from $\mathbb{P}_{Y}$.
		\begin{definition}[Empirical Deconditional Mean Operator]
		\label{def:emp_dmo}
			Let $\epsilon > 0$ be a regularization hyperparameter and define $\hat{C}_{X|Y}$ and $\tilde{C}_{YY}$ as above. The empirical \gls{DMO} is
			\begin{equation}
				\bar{C}_{X|Y}' := (\hat{C}_{X|Y} \tilde{C}_{YY})^{T} (\hat{C}_{X|Y} \tilde{C}_{YY} (\hat{C}_{X|Y})^{T} + \epsilon I)^{-1}.
			\label{eq:emp_dmo_def}
			\end{equation}
		\end{definition}
	\vspace{-0.8em}
	The accents notate the set of samples used for estimation. When both sets are used such as in the estimation of the \gls{DMO} $C_{X|Y}'$, we denote it with a bar such as $\bar{C}_{X|Y}'$.	
		\begin{theorem}
		\label{thm:emp_dmo}
			The nonparametric form for $\bar{C}_{X|Y}'$ is
			\vspace{-0.5em}
			\begin{equation}
				\bar{C}_{X|Y}' = \tilde{\Psi} \big[ A^{T} K A + m \epsilon I \big]^{-1} A^{T} \Phi^{T},
				\vspace{-0.5em}
			\label{eq:emp_dmo}
			\end{equation}
			where $A := (L + n \lambda I)^{-1} \tilde{L}$, $\tilde{L} := \Psi^{T} \tilde{\Psi}$, and $\tilde{\Psi} := \begin{bmatrix} \psi(\tilde{y}_{1}) & \cdots & \psi(\tilde{y}_{m}) \end{bmatrix}$.
		\end{theorem}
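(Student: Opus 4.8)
The plan is to substitute the nonparametric forms of the constituent operators into \cref{def:emp_dmo} and then collapse the resulting expression with an operator push-through identity. First I would recall from \cref{thm:emp_cmo} that $\hat{C}_{X|Y} = \Phi (L + n\lambda I)^{-1} \Psi^{T}$, and observe that the empirical prior operator is $\tilde{C}_{YY} = \frac{1}{m} \tilde{\Psi} \tilde{\Psi}^{T}$ directly from its definition as an average of outer products.

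Next I would assemble the two building blocks of the \gls{DMO}. For the joint operator, $\hat{C}_{X|Y} \tilde{C}_{YY} = \frac{1}{m} \Phi (L + n\lambda I)^{-1} \Psi^{T} \tilde{\Psi} \tilde{\Psi}^{T}$, and using $\Psi^{T} \tilde{\Psi} = \tilde{L}$ this simplifies to $\frac{1}{m} \Phi A \tilde{\Psi}^{T}$ with $A = (L + n\lambda I)^{-1} \tilde{L}$; its adjoint is $\frac{1}{m} \tilde{\Psi} A^{T} \Phi^{T}$. For the evidence operator I would compute $\hat{C}_{X|Y} \tilde{C}_{YY} (\hat{C}_{X|Y})^{T}$ using $(\hat{C}_{X|Y})^{T} = \Psi (L + n\lambda I)^{-1} \Phi^{T}$ (the gram factor is symmetric) together with $\tilde{\Psi}^{T} \Psi = \tilde{L}^{T}$. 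The crucial observation is that $\tilde{L}^{T} (L + n\lambda I)^{-1} = A^{T}$, so the evidence operator collapses to $\frac{1}{m} \Phi A A^{T} \Phi^{T}$. This symmetric $A A^{T}$-sandwiched form is exactly what makes the final reduction possible.

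With $M := \Phi A$, a finite-rank (hence bounded) operator $\mathbb{R}^{m} \to \mathcal{H}_{k}$, the empirical \gls{DMO} becomes $\frac{1}{m} \tilde{\Psi} M^{T} (\frac{1}{m} M M^{T} + \epsilon I)^{-1}$. I would absorb the normalization by writing $\frac{1}{m} M M^{T} + \epsilon I = \frac{1}{m}(M M^{T} + m\epsilon I)$, so that the expression reduces to $\tilde{\Psi} M^{T} (M M^{T} + m\epsilon I)^{-1}$, the $\frac{1}{m}$ and the $m$ from inverting the scaled operator cancelling exactly. Applying the push-through identity $M^{T} (M M^{T} + c I)^{-1} = (M^{T} M + c I)^{-1} M^{T}$ with $c = m\epsilon$, and then substituting $M^{T} M = A^{T} K A$ and $M^{T} = A^{T} \Phi^{T}$, yields $\bar{C}_{X|Y}' = \tilde{\Psi} (A^{T} K A + m\epsilon I)^{-1} A^{T} \Phi^{T}$, as claimed.

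I expect the main obstacle to be justifying the push-through identity in the infinite-dimensional \gls{RKHS} setting, where $M M^{T} + m\epsilon I$ acts on $\mathcal{H}_{k}$ while $M^{T} M + m\epsilon I$ acts on $\mathbb{R}^{m}$. This rests on the algebraic identity $M^{T} (M M^{T} + m\epsilon I) = (M^{T} M + m\epsilon I) M^{T}$ combined with invertibility of both regularized operators, which holds because each is self-adjoint and bounded below by $m\epsilon > 0$. The rest is purely bookkeeping: tracking adjoints correctly and confirming the normalization constants cancel so that the regularizer appears as $m\epsilon I$ inside the finite-dimensional inverse.
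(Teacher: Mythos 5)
Your proposal is correct and follows essentially the same route as the paper's proof: substitute the nonparametric forms of $\hat{C}_{X|Y}$ and $\tilde{C}_{YY}$, reduce the joint operator to $\frac{1}{m}\Phi A \tilde{\Psi}^{T}$ and the evidence operator to $\frac{1}{m}\Phi A A^{T}\Phi^{T}$, cancel the $\frac{1}{m}$ against the regularizer, and apply the push-through form of the Woodbury identity to move the inverse to the finite-dimensional side. Your extra remark on justifying the push-through identity for operators between $\mathbb{R}^{m}$ and $\mathcal{H}_{k}$ is a welcome elaboration of a step the paper takes for granted, but it does not change the argument.
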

	The empirical \gls{DME} is then $\bar{\mu}_{X=x|Y}' := \bar{C}_{X|Y}' k(x, \cdot)$.
	
	Consequently, with $\bm{k}(x) := \{k(x_{i}, x)\}_{i = 1}^{n}$ and $\tilde{\bvec{g}} := \{g(\tilde{y}_{j})\}_{j = 1}^{m}$, an estimate for $\mathbb{E}^{\dagger}_{X|Y}[g](x)$ is $\langle g, \bar{\mu}_{X=x|Y}' \rangle_{\ell} = \tilde{\bvec{g}}^{T} \big[ A^{T} K A + m \epsilon I \big]^{-1} A^{T} \bm{k}(x)$. This motivates the following definitions, where the notation $\tilde{\bvec{g}}$ is replaced with $\tilde{\bvec{z}}$, to be interpreted as target observations of $g$ at $\tilde{\bvec{y}}$.
		\begin{definition}[Nonparametric \gls{DME} Estimator]
		\label{def:nonparametric_dme_estimator}
			The nonparametric \gls{DME} estimator, also called the kernel \gls{DME} estimator or the \gls{DME} estimator in function space view, is 
				$\bar{f}(x) = \bar{\bm{\alpha}}^{T} \bm{k}(x) = \sum_{i = 1}^{n} \bar{\alpha}_{i} k(x_{i}, x)$,
			where 
				$\bar{\bm{\alpha}} := A \big[ A^{T} K A + m \epsilon I \big]^{-1} \tilde{\bvec{z}}$ and $A := (L + n \lambda I)^{-1} \tilde{L}$.
			Equivalently,
				$\bar{f}(x) = \tilde{\bvec{z}}^{T} \big[ A^{T} K A + m \epsilon I \big]^{-1} A^{T} \bm{k}(x)$.
			An alternative form is
				$\bar{f}(x) = \tilde{\bvec{z}}^{T} A^{T} \big[ K A A^{T} + m \epsilon I \big]^{-1} \bm{k}(x)$.
		\end{definition}	
	When features $\phi(x) \in \mathbb{R}^{p}$ and $\psi(y) \in \mathbb{R}^{q}$ are finite dimensional, we define the parametric \gls{DME} estimator as follows by rewriting \cref{def:nonparametric_dme_estimator} using the Woodbury identity
		\begin{definition}[Parametric \gls{DME} Estimator]
		\label{def:parametric_dme_estimator}
			The parametric \gls{DME} estimator, also called the feature \gls{DME} estimator or the \gls{DME} estimator in weight space view, is
				$\bar{f}(x) = \bar{\bvec{w}}^{T} \phi(x)$,
			where 
				$\bar{\bvec{w}} = [\Phi A A^{T} \Phi^{T} + m \epsilon I]^{-1} \Phi A \tilde{\bvec{z}}$ and $A := \Psi^{T} (\Psi \Psi^{T} + n \lambda I)^{-1} \tilde{\Psi}$.
			Equivalently,
				$\bar{f}(x) = \tilde{\bvec{z}}^{T} A^{T} \Phi^{T} [\Phi A A^{T} \Phi^{T} + m \epsilon I]^{-1} \phi(x)$.
		\end{definition}
	In \cref{def:nonparametric_dme_estimator} (resp. \ref{def:parametric_dme_estimator}), computational complexity is dominated by inversions for $L + n \lambda I$ and $A^{T} K A + m \epsilon I$ (resp. $\Psi \Psi^{T} + n \lambda I$ and $\Phi A A^{T} \Phi^{T} + m \epsilon I$) at $O(n^{3})$ and $O(m^{3})$ (resp. $O(q^{3})$ and $O(p^{3})$). For the alternative form in \cref{def:nonparametric_dme_estimator}, both inversions are $O(n^{3})$, allowing for larger $m$ at $O(m)$ without compromising tractability.

\section{Task Transformed Gaussian Processes}
\label{sec:task_transformed_gaussian_process}

	\begin{figure}
	\centering
	\begin{tikzpicture}
	
		\node (LRR) at (0,0,0) {$\scriptstyle\mathsf{LRR}$};
		\node (KRR) at (0,\Width,0) {$\scriptstyle\mathsf{KRR}$};
		\node (BLR) at (\Depth,0,0){$\scriptstyle\mathsf{BLR}$};
		\node (TLRR) at (0,0,\Height){$\scriptstyle\mathsf{TLRR}$};
		\node (BKR) at (\Depth,\Width,0){$\scriptstyle\mathsf{BKR}$};
		\node (TKRR) at (0,\Width,\Height) {$\scriptstyle\mathsf{TKRR}$};
		\node (TBLR) at (\Depth,0,\Height){$\scriptstyle\mathsf{TBLR}$};
		\node (TBKR) at (\Depth,\Width,\Height){$\scriptstyle\mathsf{TBKR}$};

		\draw[->, orange!80!yellow,line width=\midlinewidth] (LRR) -- (TLRR);
		\draw[->, orange!80!yellow,line width=\midlinewidth] (KRR) -- (TKRR);
		\draw[->, orange!80!yellow,line width=\midlinewidth] (BLR) -- (TBLR);
		\draw[->, orange!80!yellow,line width=\midlinewidth] (BKR) -- (TBKR);
				
		\draw[->, cyan!80!blue,line width=\midlinewidth] (LRR) -- (KRR);
		\draw[->, cyan!80!blue,line width=\midlinewidth] (BLR) -- (BKR);
		
		\draw[->, green!80!blue,line width=\midlinewidth] (LRR) -- (BLR);
		\draw[->, green!80!blue,line width=\midlinewidth] (KRR) -- (BKR);
		\draw[->, green!80!blue,line width=\midlinewidth] (TLRR) -- (TBLR);
		\draw[->, green!80!blue,line width=\midlinewidth] (TKRR) -- (TBKR);

		\draw[->, cyan!80!blue,line width=\midlinewidth] (TLRR) -- (TKRR);
		\draw[->, cyan!80!blue,line width=\midlinewidth] (TBLR) -- (TBKR);
	
		\node[anchor=south,scale=0.75] 
		at ([xshift=2cm,yshift=1.5cm]current bounding box.south east)
		{
		\begin{tabular}{|cl|}
		\hline
		{\color{cyan!80!blue} $\rightarrow$} & Kernelize features (inputs) \\
		{\color{green!80!blue} $\rightarrow$} & Bayesian inference (latents) \\
		{\color{orange!80!yellow} $\rightarrow$} & Transform observations (outputs) \\
		\hline
		\end{tabular}
		};
	
	\end{tikzpicture}
	\vspace{-0.5em}
	\caption{Three dimensions of model extensions (T: Transformed, B: Bayesian, K/L: Kernel/Linear, (R)R: (Ridge) Regression). Kernel extensions (blue) specify feature spaces implicitly through a kernel. Bayesian extensions (green) introduce notions of uncertainty on latent quantities (weights or functions). Finally, transformed extensions (orange) capture indirect function observations.}
	\vspace{-1em}
	\label{fig:model-cube}
	\end{figure}
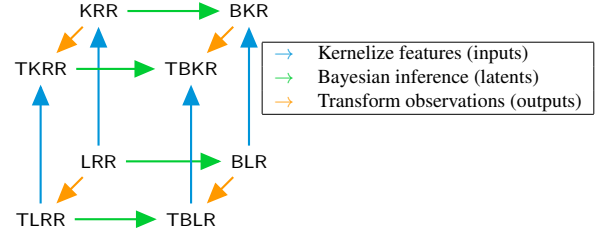
	
	\Glspl{DME} are constructed as solutions to the task of inferring deconditional means, which are often real-valued functions. Regression problems also address inference of real-valued functions from data. This raises curiosity towards whether \glspl{DME} can be formulated as solutions to a regression-like problem, and what insights this connection would provide.
		
	In this section, we formulate the task transformed regression problem to provide regression views of \glspl{DME}. To do this, we first briefly review transformed regression in \cref{sec:transformed_regression} before we present our contributions in \cref{sec:task_transformed_regression}.
		
	\subsection{Transformed Regression}
	\label{sec:transformed_regression}
	
		Standard regression models often assume a Gaussian full data likelihood $p(\bvec{z} | \bvec{f}) = \mathcal{N}(\bvec{z} ; \bvec{f}, \sigma^{2} I)$ with targets $\bvec{z} := \{z_{i}\}_{i = 1}^{n} \in \mathbb{R}^{n}$. In the generalized setting when observations of $\bvec{f}$ at $\bvec{x}$ are not available but observations of linear combinations thereof are, we can use $p(\tilde{\bvec{z}} | \bvec{f}) = \mathcal{N}(\tilde{\bvec{z}} ; M^{T} \bvec{f}, \Sigma)$ for some transformation $M \in \mathbb{R}^{n \times m}$ and noise covariance $\Sigma$, where $\tilde{\bvec{z}} := \{\tilde{z}_{j}\}_{j = 1}^{m} \in \mathbb{R}^{m}$ are the available observations. 
		
		\begin{figure}[!t]
		\begin{tikzpicture}[x=0.75cm,y=0.5cm]
		
			\node[obs]							(tilde_z_1) {$\tilde{z}_{1}$} ; %
			\node[latent, above=of tilde_z_1]	(tilde_g_1) {$\tilde{g}_{1}$} ; %
			\node[const, above=of tilde_g_1]	(tilde_y_1) {$\tilde{y}_{1}$} ; %
		
			\edge {tilde_y_1}  {tilde_g_1};
			\edge {tilde_g_1}  {tilde_z_1};
			
			\node[obs, right=of tilde_z_1]		(tilde_z_m) {$\tilde{z}_{m}$} ; %
			\node[latent, above=of tilde_z_m]	(tilde_g_m) {$\tilde{g}_{m}$} ; %
			\node[const, above=of tilde_g_m]	(tilde_y_m) {$\tilde{y}_{m}$} ; %
		
			\edge {tilde_y_m}  {tilde_g_m};
			\edge {tilde_g_m}  {tilde_z_m};
			\edge[dashed, -, line width=1mm] {tilde_g_1}  {tilde_g_m};
			
			\node[latent, right=of tilde_z_m]	(z_1) {$z_{1}$} ; %
			\node[latent, above=of z_1]			(g_1) {$g_{1}$} ; %
			\node[const, above=of g_1]			(y_1) {$y_{1}$} ; %
		
			\edge {y_1}  {g_1};
			\edge {g_1}  {z_1};
			\edge[-, line width=1mm] {tilde_g_m}  {g_1};
			
			\node[latent, right=of z_1]			(z_n) {$z_{n}$} ; %
			\node[latent, above=of z_n]			(g_n) {$g_{n}$} ; %
			\node[const, above=of g_n]			(y_n) {$y_{n}$} ; %
		
			\edge {y_n}  {g_n};
			\edge {g_n}  {z_n};
			\edge[dashed, -, line width=1mm] {g_1}  {g_n};
			
			\node[latent, below=of z_1] 		(f_1) {$f_{1}$} ; %
			\node[const, below=of f_1] 			(x_1) {$x_{1}^{{\color{white} \star}}$} ; %
			
			\edge {f_1} {z_1};
			\edge {x_1} {f_1};
			
			\node[latent, below=of z_n] 		(f_n) {$f_{n}$} ; %
			\node[const, below=of f_n] 			(x_n) {$x_{n}^{{\color{white} \star}}$} ; %
			
			\edge {f_n} {z_n};
			\edge {x_n} {f_n};	
			\edge[dashed, -, line width=1mm] {f_1}  {f_n};
			
			\node[latent, right=of z_n] 		(z_1_q) {$z_{1}^{\star}$} ; %
			\node[latent, below=of z_1_q] 		(f_1_q) {$f_{1}^{\star}$} ; %
			\node[const, below=of f_1_q] 		(x_1_q) {$x_{1}^{\star}$} ; %
			
			\edge {f_1_q} {z_1_q};
			\edge {x_1_q} {f_1_q};
			\edge[-, line width=1mm] {f_n}  {f_1_q};
			
			\node[latent, right=of z_1_q] 		(z_q_q) {$z_{q}^{\star}$} ; %
			\node[latent, below=of z_q_q] 		(f_q_q) {$f_{q}^{\star}$} ; %
			\node[const, below=of f_q_q] 		(x_q_q) {$x_{q}^{\star}$} ; %
			
			\edge {f_q_q} {z_q_q};
			\edge {x_q_q} {f_q_q};	
			\edge[dashed, -, line width=1mm] {f_1_q}  {f_q_q};	
		
		\end{tikzpicture}
		\vspace{-0.5em}
		\caption{Graphical model (chain graph) for \acrfull{TTGPR}. Circles are random variables. Shaded circles are observed random variables. Undirected edges indicate the \gls{GP} field, where all the random variables on the field are fully connected to each other \citep{rasmussen2006gaussian}. The goal is to infer $\bvec{f}^{\star}$ to predict $\bvec{z}^{\star}$ at $\bvec{x}^{\star}$, using only a \textit{task} or \textit{original} dataset $\{\tilde{y}_{j}, \tilde{z}_{j}\}_{j = 1}^{m}$ and a \textit{transformation} dataset $\{x_{i}, y_{i}\}_{i = 1}^{n}$. To connect the two \glspl{GP}, we posit that the unobserved targets $\bvec{z}$ at $\bvec{x}$ and at $\bvec{y}$ would have been the same if they were observed. Note that like regular \glspl{GP}, to \glspl{TTGP} the inputs $x$ and $y$ are not modeled as random variables but treated as index variables instead.}
		\vspace{-0.5em}
		\label{fig:ttgpr}
		\end{figure}
		
		We refer to this setting as \textit{transformed regression}. They can be seen as another dimension of modeling with \gls{LRR} as the base model (\cref{fig:model-cube}). \Gls{KRR} is obtained from \gls{LRR} via the kernel trick and Woodbury identity, and they are \gls{MAP} solutions or predictive means of \gls{GPR} and \gls{BLR} respectively \citep{rasmussen2006gaussian}. Consequently, we also refer to \gls{GPR} as \gls{BKR}. Analogous relationships hold between transformed models. 
				
	\subsection{Task Transformed Regression}
	\label{sec:task_transformed_regression}
	
		We define \textit{\gls{TTR}} as the problem of learning to predict a target variable $Z$ from features $X$ when no direct sample pairs of $X$ and $Z$ are available but instead indirect samples $\{x_{i}, y_{i}\}_{i = 1}^{n}$ and $\{\tilde{y}_{j}, \tilde{z}_{j}\}_{j = 1}^{m}$ with a mediating variable $Y$ are available. The name illustrates the idea of transforming the task of regressing $Z$ on $Y$ to learn $g : \mathcal{Y} \to \mathbb{R}$, using the \textit{task} or \textit{original} dataset $\{\tilde{y}_{j}, \tilde{z}_{j}\}_{j = 1}^{m}$, to the task of regressing $Z$ on $X$ to learn $f : \mathcal{X} \to \mathbb{R}$, by mediating the \textit{task} dataset through the \textit{transformation} dataset $\{x_{i}, y_{i}\}_{i = 1}^{n}$. As the mediating variable $Y$ links the two sets together, $\bvec{y}$ and $\tilde{\bvec{y}}$ control the task transformation.

		\paragraph{\Gls{DME} as solution to chained loss}
		
		We formulate losses for \gls{TTR}, and establish \glspl{DME} as solutions. We begin with the parametric case with $f(x) = \bvec{w}^{T} \phi(x)$ and $g(y) = \bvec{v}^{T} \psi(y)$.
		\begin{theorem}[\Gls{TTLRR}]
		\label{thm:ttlrr}
			The weights of the parametric \gls{DME} estimator $\bar{f}(x) = \bar{\bvec{w}}^{T} \phi(x)$ (\cref{def:parametric_dme_estimator}) solve chained regularized least square losses,
			\vspace{-1em}
			\begin{equation}
			\begin{aligned}
				\hat{\bvec{v}}[\bvec{w}] :=& \; \argmin_{\bvec{v} \in \mathbb{R}^{q}} \frac{1}{n} \sum_{i = 1}^{n} (\bvec{w}^{T} \phi(x_{i}) - \bvec{v}^{T} \psi(y_{i})))^{2} + \lambda \| \bvec{v} \|^{2}, \\
				\bar{\bvec{w}} :=& \; \argmin_{\bvec{w} \in \mathbb{R}^{p}}  \frac{1}{m} \sum_{j = 1}^{m} (\tilde{z}_{j} - \hat{\bvec{v}}[\bvec{w}]^{T} \psi(\tilde{y}_{j}))^{2} + \epsilon \| \bvec{w} \|^{2}.
			\label{eq:ttlrr}
			\end{aligned}
			\end{equation}
		\end{theorem}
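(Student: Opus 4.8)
The plan is to solve the two nested ridge-regression problems in closed form and verify that the resulting $\bar{\bvec{w}}$ coincides with the weight given in \cref{def:parametric_dme_estimator}. Both objectives are strongly convex in their respective variables (the regularizers $\lambda, \epsilon > 0$ are strictly positive), so in each stage the unique global minimizer is the stationary point, and it suffices to set gradients to zero.

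First I would solve the inner problem. Writing the inner objective in matrix form as $\tfrac{1}{n}\|\Phi^{T}\bvec{w} - \Psi^{T}\bvec{v}\|^{2} + \lambda\|\bvec{v}\|^{2}$ and setting its gradient in $\bvec{v}$ to zero yields the normal equations $(\Psi\Psi^{T} + n\lambda I)\bvec{v} = \Psi\Phi^{T}\bvec{w}$, giving the closed form $\hat{\bvec{v}}[\bvec{w}] = (\Psi\Psi^{T} + n\lambda I)^{-1}\Psi\Phi^{T}\bvec{w}$. The next — and crucial — step is to evaluate this inner regressor at the task inputs $\tilde{\bvec{y}}$: stacking the evaluations $\hat{\bvec{v}}[\bvec{w}]^{T}\psi(\tilde{y}_{j})$ over $j$ produces the prediction vector $\tilde{\Psi}^{T}\hat{\bvec{v}}[\bvec{w}] = \tilde{\Psi}^{T}(\Psi\Psi^{T} + n\lambda I)^{-1}\Psi\Phi^{T}\bvec{w}$. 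Recognizing from \cref{def:parametric_dme_estimator} that $A = \Psi^{T}(\Psi\Psi^{T} + n\lambda I)^{-1}\tilde{\Psi}$, and using symmetry of the inverse, this prediction vector equals exactly $A^{T}\Phi^{T}\bvec{w}$. This identification is the heart of the argument: it collapses the chained losses into an ordinary linear model in $\bvec{w}$ with design $A^{T}\Phi^{T}$.

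Finally, I would substitute into the outer objective, which then reads $\tfrac{1}{m}\|\tilde{\bvec{z}} - A^{T}\Phi^{T}\bvec{w}\|^{2} + \epsilon\|\bvec{w}\|^{2}$. This is a standard ridge regression whose normal equations $(\Phi A A^{T}\Phi^{T} + m\epsilon I)\bvec{w} = \Phi A\tilde{\bvec{z}}$ yield $\bar{\bvec{w}} = (\Phi A A^{T}\Phi^{T} + m\epsilon I)^{-1}\Phi A\tilde{\bvec{z}}$, which matches \cref{def:parametric_dme_estimator} verbatim, completing the proof.

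I do not anticipate a genuine obstacle here; the argument is elementary once the structure is seen. The only point demanding care is the transpose bookkeeping in the substitution step and keeping $A$ in precisely the form of \cref{def:parametric_dme_estimator} — in particular, exploiting the self-adjointness of $(\Psi\Psi^{T} + n\lambda I)^{-1}$ to turn $\tilde{\Psi}^{T}(\Psi\Psi^{T} + n\lambda I)^{-1}\Psi$ into $A^{T}$. Once the inner predictions are identified with $A^{T}\Phi^{T}\bvec{w}$, everything else is routine.
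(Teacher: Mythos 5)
Your proposal is correct and follows essentially the same route as the paper's proof: solve the inner ridge regression in closed form, recognize the stacked predictions $\tilde{\Psi}^{T}\hat{\bvec{v}}[\bvec{w}]$ as $A^{T}\Phi^{T}\bvec{w}$ with $A = \Psi^{T}(\Psi\Psi^{T}+n\lambda I)^{-1}\tilde{\Psi}$, and then solve the resulting outer ridge regression to recover $\bar{\bvec{w}} = (\Phi A A^{T}\Phi^{T}+m\epsilon I)^{-1}\Phi A\tilde{\bvec{z}}$. The only cosmetic difference is that the paper names the collapsed feature matrix $\Theta := \Phi A$ before writing down the normal equations; the substance is identical.
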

		The notation $\hat{\bvec{v}}[\bvec{w}]$ explicitly denotes that $\hat{\bvec{v}}$ depends on $\bvec{w}$. Conceptually, in function space view the first optimization finds $g$ so that $f$ at $\bvec{x}$ best matches with $g$ at $\bvec{y}$, leading to a solution $\hat{g}[f]$ that is dependent on $f$. The second finds $f$ so that $\hat{g}[f]$ at $\tilde{\bvec{y}}$ best matches targets $\tilde{\bvec{z}}$. Using the kernel trick $k(x, x') = \phi(x)^{T} \phi(x')$, we obtain the nonparametric case.
		\begin{lemma}[\Gls{TTKRR}]
		\label{thm:ttkrr}
			The weights of the nonparametric \gls{DME} estimator $\bar{f}(x) = \bar{\bm{\alpha}}^{T} \bm{k}(x)$ (\cref{def:nonparametric_dme_estimator}) satisfies $\bar{\bvec{w}} = \Phi \bar{\bm{\alpha}}$ (the kernel trick).
		\end{lemma}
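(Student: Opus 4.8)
The plan is to reduce both estimators to a common matrix $B := \Phi A$ and then invoke the push-through identity. First I would reconcile the two superficially different definitions of $A$: the nonparametric form $A = (L + n\lambda I)^{-1}\tilde{L}$ from \cref{def:nonparametric_dme_estimator} and the parametric form $A = \Psi^{T}(\Psi\Psi^{T} + n\lambda I)^{-1}\tilde{\Psi}$ from \cref{def:parametric_dme_estimator}. Writing $L = \Psi^{T}\Psi$ and $\tilde{L} = \Psi^{T}\tilde{\Psi}$, these coincide by the push-through identity $(\Psi^{T}\Psi + n\lambda I)^{-1}\Psi^{T} = \Psi^{T}(\Psi\Psi^{T} + n\lambda I)^{-1}$ (the same Woodbury manipulation already cited immediately before \cref{def:parametric_dme_estimator}). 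Hence $A$ is an unambiguous $n \times m$ matrix in both views.

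Next I would substitute $K = \Phi^{T}\Phi$ and set $B := \Phi A$, so that $A^{T}KA = B^{T}B$ and $\Phi AA^{T}\Phi^{T} = BB^{T}$. With this notation the nonparametric weights become $\Phi\bar{\bm{\alpha}} = \Phi A [A^{T}KA + m\epsilon I]^{-1}\tilde{\bvec{z}} = B[B^{T}B + m\epsilon I]^{-1}\tilde{\bvec{z}}$, while the parametric weights read $\bar{\bvec{w}} = [BB^{T} + m\epsilon I]^{-1}B\tilde{\bvec{z}}$.

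The final step, and the only genuinely load-bearing one, is the push-through identity $[BB^{T} + m\epsilon I]^{-1}B = B[B^{T}B + m\epsilon I]^{-1}$, which follows from $B(B^{T}B + m\epsilon I) = (BB^{T} + m\epsilon I)B$ by flanking both sides with the two inverses. Applying it gives $\bar{\bvec{w}} = B[B^{T}B + m\epsilon I]^{-1}\tilde{\bvec{z}} = \Phi\bar{\bm{\alpha}}$, which is exactly the claimed kernel trick.

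The argument is essentially mechanical, so the only point requiring care is scope: $\bar{\bvec{w}}$ is a genuine finite-dimensional vector only when $\phi(x) \in \mathbb{R}^{p}$, so I would phrase the identity in the finite-dimensional feature setting of \cref{def:parametric_dme_estimator}, treating $\Phi$ as an honest $p \times n$ matrix, and then note that the nonparametric prediction is recovered directly from $\bar{f}(x) = \bar{\bvec{w}}^{T}\phi(x) = \bar{\bm{\alpha}}^{T}\Phi^{T}\phi(x) = \bar{\bm{\alpha}}^{T}\bm{k}(x)$. I expect the main (mild) obstacle to be purely bookkeeping the operand order so that the push-through identity is applied on the correct side; once $B = \Phi A$ is introduced there is no analytic difficulty.
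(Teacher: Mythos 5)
Your proposal is correct and follows essentially the same route as the paper: reconcile the two forms of $A$ via the push-through (Woodbury) identity, substitute $K = \Phi^{T}\Phi$, and apply the push-through identity once more to move $\Phi A$ across the regularized inverse, which is exactly the paper's computation (your abbreviation $B = \Phi A$ is only a notational convenience). The scoping remark about working in the finite-dimensional feature setting is a reasonable extra precaution but not something the paper dwells on.
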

	
		\paragraph{\Gls{DME} as posterior predictive mean of \gls{TTGP}}
		
		We extend \gls{TTLRR} and \gls{TTKRR} to the Bayesian case. This connection reveals that \gls{TTR} models are transformed regression models with transformations and noise covariances that are \textit{learned}.
		
		In the parametric case, we have \gls{TTBLR}. We place separate independent Gaussian priors $p(\bvec{v}) = \mathcal{N}(\bvec{v}; \bvec{0}, \beta^{2} I)$ and $p(\bvec{w}) = \mathcal{N}(\bvec{w}; \bvec{0}, \gamma^{2} I)$ for $g$ and $f$ respectively. As $z$ is not observed directly from $f$ but only for $g$, we include noise only for observing $g$ to arrive at likelihoods $p(z | \bvec{v}) = \mathcal{N}(z ; \bvec{v}^{T} \psi(y), \sigma^{2})$ and $p(z | \bvec{w}) = \mathcal{N}(z ; \bvec{w}^{T} \phi(x), 0)$ for $g$ and $f$ respectively. 
		
		In the nonparametric case, we have \gls{TTBKR}. We place \gls{GP} priors $g \sim \mathcal{GP}(0, \ell)$ and $f \sim \mathcal{GP}(0, k)$ on the functions directly. Consequently, \gls{TTBKR} is also referred to as \gls{TTGPR}. Similar to \gls{TTBLR}, the likelihoods are  $p(z | g) = \mathcal{N}(z ; g(y), \sigma^{2})$ and $p(z | f) = \mathcal{N}(z ; f(x), 0)$.
		
		\begin{figure*}[!t]
			\centering
			\includegraphics[width=\linewidth]{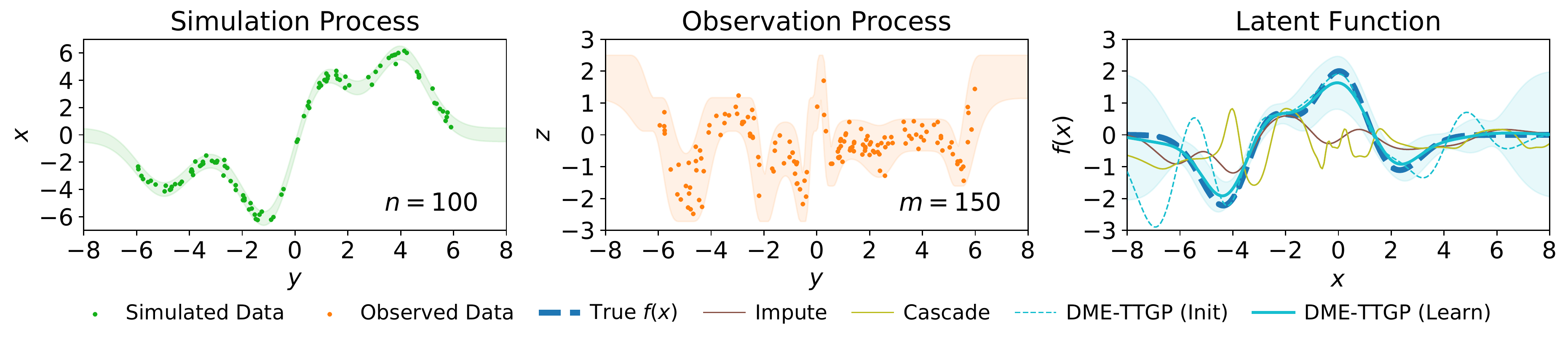}
			\vspace{-2.5em} 
			\caption{Illustration of latent function recovery with a \acrfull{TTGP} on non-trivial simulation and observation processes $p(x|y)$ and $p(z|y)$. (Left) The simulation process $p(x | y)$. (Center) The observation process $p(z | y)$. (Right) The true latent function $f$, the naive solutions using cascaded regressors and imputed data, and the mean and uncertainty bounds of the \gls{TTGP}, also the Bayesian \gls{DME}, with initial and learned hyperparameters. All bounds are 2 standard deviations from the mean.}
		\label{fig:task-transformed-regression}
		\end{figure*}
		The graphical model for \gls{TTGPR} is shown in \cref{fig:ttgpr}. The two \glspl{GP} for $g$ and $f$ are linked by constraining their targets to be the same at $\bvec{y}$ and $\bvec{x}$ respectively. The \gls{GP} for $g$ is used to infer the predictive distribution $p(\tilde{\bvec{z}} | \bvec{z})$, which in turn specifies the overall likelihood $p(\tilde{\bvec{z}} | \bvec{f})$ used to infer $f$. Detailed derivations are provided in the proof of \cref{thm:ttblr_ttbkr}.
			\begin{theorem}[\Acrfull{TTBLR} and \Acrfull{TTBKR}]
			\label{thm:ttblr_ttbkr}
				\textbf{(1)} The \gls{TTBLR} is a \gls{TBLR} with $M = \Psi^{T} (\Psi \Psi^{T} + \frac{\sigma^{2}}{\beta^{2}} I )^{-1} \tilde{\Psi}$ and $\Sigma = \sigma^{2} \tilde{\Psi}^{T} ( \Psi \Psi^{T} + \frac{\sigma^{2}}{\beta^{2}} I )^{-1} \tilde{\Psi} + \sigma^{2} I$ as the transformation and noise covariance. \textbf{(2)} The \gls{TTBKR} is a \gls{TBKR} with transformation $M = (L + \sigma^{2} I)^{-1} \tilde{L}$ and noise covariance $\Sigma = \tilde{\tilde{L}} + \sigma^{2} I - \tilde{L}^{T} (L + \sigma^{2} I)^{-1} \tilde{L}$. \textbf{(3)} The \gls{TTBLR} and \gls{TTBKR} marginal likelihoods are $p(\tilde{\bvec{z}}) = \mathcal{N}(\tilde{\bvec{z}} ; \bvec{0}, [\Sigma^{-1} - \Sigma^{-1} A^{T} \Phi^{T} C \Phi A \Sigma^{-1}]^{-1})$ where $C = [\Phi A \Sigma^{-1} A^{T} \Phi^{T} + \frac{1}{\gamma^{2}} I]^{-1}$ and $p(\tilde{\bvec{z}}) =\mathcal{N}(\tilde{\bvec{z}} ; \bvec{0}, A^{T} K A + \Sigma)$ respectively. \textbf{(4)} For both models, when the posterior for $g$ is approximated via \gls{MAP}, the covariance becomes $\Sigma = \sigma^{2} I$. In this case, the parametric (resp. nonparametric) \gls{DME} estimator (\cref{def:parametric_dme_estimator,,def:nonparametric_dme_estimator}) is the predictive mean of a \gls{TTBLR} (resp. \gls{TTBKR}) with $\lambda = \frac{\sigma^{2}}{n \beta^{2}}$ and $\epsilon = \frac{\sigma^{2}}{m \gamma^{2}}$ (resp. $\lambda = \frac{\sigma^{2}}{n}$ and $\epsilon = \frac{\sigma^{2}}{m}$). An alternative \gls{TTBKR} marginal likelihood is $p(\tilde{\bvec{z}}) = \mathcal{N}(\tilde{\bvec{z}}; \bvec{0}, \sigma^{2} [I - A^{T} (K A A^{T} + \sigma^{2} I)^{-1} K A]^{-1})$. \textbf{(5)} When both posteriors for $g$ and $f$ are approximated via \gls{MAP}, \gls{TTBLR} and \gls{TTBKR} becomes \gls{TTLRR} and \gls{TTKRR} respectively with $\lambda$ and $\epsilon$ from (4). 
			\end{theorem}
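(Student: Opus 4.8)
The plan is to exploit the chain structure of \cref{fig:ttgpr}: the noiseless link $p(z\mid f)=\mathcal{N}(z;f(x),0)$ forces the latent transformation targets to coincide with $\bvec{f}$, i.e.\ $\bvec{z}=\bvec{f}$, while the $g$-model supplies the only source of noise. First I would derive the effective likelihood $p(\tilde{\bvec{z}}\mid\bvec{f})$ by marginalizing out $g$ and then conditioning. In the nonparametric case, the prior $g\sim\mathcal{GP}(0,\ell)$ makes $(\bvec{z},\tilde{\bvec{z}})$ jointly Gaussian with blocks $\mathrm{Cov}(\bvec{z})=L+\sigma^{2}I$, $\mathrm{Cov}(\tilde{\bvec{z}})=\tilde{\tilde{L}}+\sigma^{2}I$, and $\mathrm{Cov}(\bvec{z},\tilde{\bvec{z}})=\tilde{L}$. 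Conditioning on $\bvec{z}$ and substituting $\bvec{z}=\bvec{f}$ yields $p(\tilde{\bvec{z}}\mid\bvec{f})=\mathcal{N}(\tilde{\bvec{z}};M^{T}\bvec{f},\Sigma)$ with $M=(L+\sigma^{2}I)^{-1}\tilde{L}$ and $\Sigma=\tilde{\tilde{L}}+\sigma^{2}I-\tilde{L}^{T}(L+\sigma^{2}I)^{-1}\tilde{L}$, establishing \textbf{(2)}. For \textbf{(1)} I repeat the construction with the parametric prior $\bvec{v}\sim\mathcal{N}(\bvec{0},\beta^{2}I)$, so the covariance blocks acquire factors of $\beta^{2}$; the stated $M$ and $\Sigma$ then follow from the push-through identity $\Psi(\beta^{2}\Psi^{T}\Psi+\sigma^{2}I)^{-1}=(\beta^{2}\Psi\Psi^{T}+\sigma^{2}I)^{-1}\Psi$ together with the simplification $\beta^{2}I-\beta^{4}\Psi(\beta^{2}\Psi^{T}\Psi+\sigma^{2}I)^{-1}\Psi^{T}=\sigma^{2}(\Psi\Psi^{T}+\tfrac{\sigma^{2}}{\beta^{2}}I)^{-1}$.

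With the likelihood in hand, \textbf{(3)} is Gaussian marginalization over the $f$-prior. For \gls{TTBKR}, $\bvec{f}\sim\mathcal{N}(\bvec{0},K)$ gives $p(\tilde{\bvec{z}})=\mathcal{N}(\tilde{\bvec{z}};\bvec{0},M^{T}KM+\Sigma)$, and since the \gls{GP} correspondence sets $n\lambda=\sigma^{2}$ so that $M=A$, this is exactly $\mathcal{N}(\tilde{\bvec{z}};\bvec{0},A^{T}KA+\Sigma)$. For \gls{TTBLR}, I marginalize $\bvec{w}\sim\mathcal{N}(\bvec{0},\gamma^{2}I)$ through $\bvec{f}=\Phi^{T}\bvec{w}$ to obtain covariance $\Sigma+\gamma^{2}A^{T}\Phi^{T}\Phi A$, and apply Woodbury with $V=\Phi A$ to rewrite its inverse as $\Sigma^{-1}-\Sigma^{-1}A^{T}\Phi^{T}C\Phi A\Sigma^{-1}$ with $C=[\Phi A\Sigma^{-1}A^{T}\Phi^{T}+\tfrac{1}{\gamma^{2}}I]^{-1}$, matching the claim.

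For \textbf{(4)} I would argue that replacing the full $g$-posterior by its \gls{MAP} point estimate leaves the mean map $M^{T}$ unchanged but collapses the conditional covariance to the bare observation noise $\Sigma=\sigma^{2}I$, discarding the \gls{GP} predictive term. Computing the predictive mean of the resulting transformed model, namely $\bm{k}(x)^{T}M(M^{T}KM+\Sigma)^{-1}\tilde{\bvec{z}}$ nonparametrically and the analogous weight-space expression $\phi(x)^{T}(\Phi AA^{T}\Phi^{T}+m\epsilon I)^{-1}\Phi A\tilde{\bvec{z}}$ parametrically, and substituting $M=A$, $\Sigma=\sigma^{2}I=m\epsilon I$ under $n\lambda=\sigma^{2}$, $m\epsilon=\sigma^{2}$ (nonparametric) and under $n\lambda=\sigma^{2}/\beta^{2}$, $m\epsilon=\sigma^{2}/\gamma^{2}$ (parametric) reproduces the estimators of \cref{def:nonparametric_dme_estimator,def:parametric_dme_estimator} directly, with the equivalent alternative forms following from the push-through identity. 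Finally \textbf{(5)} follows by composing \textbf{(4)} with \cref{thm:ttlrr,thm:ttkrr}: \gls{MAP} estimation of $f$ coincides with the predictive mean for Gaussians and equals the ridge solution, so taking both posteriors to be \gls{MAP} collapses \gls{TTBLR}/\gls{TTBKR} onto \gls{TTLRR}/\gls{TTKRR}.

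The main obstacle I anticipate is bookkeeping consistency: I must propagate the zero-noise link $\bvec{z}=\bvec{f}$ through the conditioning step without double-counting noise, and keep the gram-matrix and weight-space Woodbury manipulations mutually consistent so that the two forms of $M$, $\Sigma$, and the marginal likelihood agree. The one genuinely conceptual point is justifying why the \gls{MAP} treatment of $g$ discards the predictive covariance $\tilde{\tilde{L}}-\tilde{L}^{T}(L+\sigma^{2}I)^{-1}\tilde{L}$ and leaves $\Sigma=\sigma^{2}I$; I would make this precise by writing the \gls{MAP} approximation as conditioning on the posterior mean of $\bvec{g}$ and propagating only the observation noise into $\tilde{\bvec{z}}$.
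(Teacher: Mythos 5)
Your proposal is correct and follows essentially the same route as the paper's proof: derive the effective likelihood $p(\tilde{\bvec{z}} \mid \bvec{f})$ by propagating the noiseless link $\bvec{z} = \bvec{f}$ through the $g$-model, identify the result as a \gls{TBLR}/\gls{TBKR} with the stated $M$ and $\Sigma$, and then obtain the marginal likelihoods, the \gls{MAP} simplification $\Sigma = \sigma^{2} I$, and the \gls{DME} correspondence by standard Gaussian algebra and the Woodbury/push-through identities. The only cosmetic difference is that you condition the joint Gaussian of $(\bvec{z}, \tilde{\bvec{z}})$ directly, whereas the paper passes through the posterior $p(\bvec{g} \mid \bvec{z})$ (resp.\ $p(\bvec{v} \mid \bvec{z})$) and then the predictive distribution --- these yield identical $M$ and $\Sigma$.
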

		Importantly, our end goal is to infer $f$. While this involves inferring $g$, $g$ is not of direct interest. A simpler alternative is to only perform Bayesian inference on $f$ and approximate $g$ with its \gls{MAP} solution, simplifying the noise covariance via (4) of \cref{thm:ttblr_ttbkr}. This establishes a Bayesian interpretation for \glspl{DME} as \gls{MAP} estimates of \glspl{TTGP}. Critically, by maximizing the \gls{TTGP} marginal likelihood, we can learn \gls{DME} hyperparameters of kernels $k$ and $\ell$, and also $\lambda$ and $\epsilon$. Furthermore, the computational complexity for alternative marginal likelihood is dominated by inversions that are $O(n^{3})$ only, again allowing for larger $m$ at $O(m)$.
		
		In summary, we first establish the \gls{DME} as a nonparametric solution to the \gls{TTR} problem under chained regularized least squares losses that make learning $f$ dependent on learning $g$. While $\lambda$ and $\epsilon$ are previously seen as numerical adjustments to relax \gls{RKHS} assumptions and stabilize matrix inversions in the \gls{KME} framework, they can now be seen as controlling the amount of function regularization under this loss. Secondly, we present \glspl{TTGP} as nonparametric Bayesian solutions to this regression problem and show that \glspl{DME} are their posterior predictive means. Again, inference of $f$ is dependent on the inference of $g$, allowing \gls{GP} uncertainties to propagate through. This connection provides Bayesian interpretations of \glspl{DME} and enable uncertainty estimation in inferring deconditional means. Critically, we use this to derive marginal likelihoods for hyperparameter learning.

\section{Nonparametric Bayes' Rule}
\label{sec:nonparametric_bayes_rule}
	
		\begin{figure*}[!t]
			\centering
			\includegraphics[width=\linewidth]{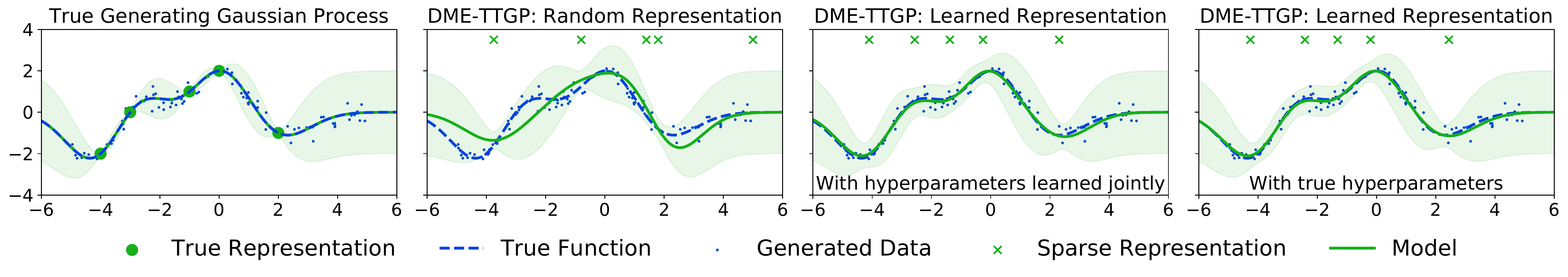}
			\vspace{-2.5em} 
			\caption{Sparse representation learning on a dataset of 100 points generated by using the toy process from \citet{rasmussen2006gaussian} as ground truth. (Left) The true function, represented exactly by a \gls{GP} mean using 5 points. (Left Center) \gls{DME} using 5 random points. (Right Center) \gls{DME} with the 5 points and all hyperparameters learned jointly via its marginal likelihood. (Right) \gls{DME} with the 5 points learned via its marginal likelihood under true hyperparameters. The vertical position of the sparse representation has no meaning.}
		\label{fig:sparse-representation-learning}
		\end{figure*}
			
	While \glspl{DMO} were constructed as solutions to the deconditional mean problem, they also resemble Bayes' rule when we focus solely on considering the encoded relationship between $X$ and $Y$. This was motivated by \cref{thm:dmo}, which revealed that the \gls{DMO} can be fully specified by the \gls{CMO} $C_{X|Y}$ and the second order mean embedding $C_{YY}$ that encoded the likelihood $\mathbb{P}_{Y|X}$ and prior $\mathbb{P}_{Y}$ respectively. To establish this view, we investigate the conditions for which the \gls{DMO} $C_{X|Y}'$ coincide with the \gls{CMO} $C_{Y|X}$ that encodes the posterior $\mathbb{P}_{Y|X}$, leading to a nonparametric Bayes' rule.
		
	While first class citizens of probability rules are density evaluations, first class citizens of the \gls{KME} framework are expectations. Consequently, instead of relating density evaluations, rules under the \gls{KME} framework relate mean embeddings of distributions at various orders. Importantly, while a distribution $Y \sim \mathbb{P}_{Y}$ has one simple density evaluation $p_{Y}(y)$, it can have different \gls{RKHS} representations at different orders such as $\mu_{Y}$ and $C_{YY}$ or higher.
					
	A nonparametric Bayes' rule is a rule which translates Bayes' rule into the \gls{RKHS}, where distributions are represented as \gls{RKHS} operators, alleviating limitations from parametric assumptions such as Gaussian posteriors. It computes a posterior operator $C_{Y|X}$ when given only likelihood operators (e.g. $C_{X|Y}$) and prior operators (e.g. $C_{YY}$). The \gls{DMO} is appealing as all operators involved are of second order and the same second order likelihood and prior operators are used for both the joint and evidence operator.
	
	However, because $C_{XX}'$ is not necessarily the same as $C_{XX}$, the \gls{DMO} $C_{X|Y}' = (C_{XY})^{T} (C_{XX}')^{-1}$ is not necessarily the posterior operator $C_{Y|X} = (C_{XY})^{T} (C_{XX})^{-1}$.  Nevertheless, under certain conditions they coincide with each other.
		\begin{theorem}
		\label{thm:dmo_for_nbr}
			If $C_{X|Y} C_{Y|X} C_{XX} = C_{XX}$, then $C_{XX}' = C_{XX}$ and $C_{X|Y}' = C_{Y|X}$. 
		\end{theorem}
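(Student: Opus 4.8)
The plan is to rewrite the evidence operator $C_{XX}'$ of the \gls{DMO} entirely in terms of the standard covariance and conditional mean operators, so that the hypothesis $C_{X|Y} C_{Y|X} C_{XX} = C_{XX}$ becomes \emph{literally} the statement $C_{XX}' = C_{XX}$. The second claim $C_{X|Y}' = C_{Y|X}$ then falls out immediately, since both the \gls{DMO} and the posterior \gls{CMO} share the same joint factor $(C_{XY})^{T}$ and differ only in which evidence operator they invert.

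First I would simplify $C_{XX}' = C_{X|Y} C_{YY} (C_{X|Y})^{T}$. Using $C_{X|Y} = C_{XY} C_{YY}^{-1}$ (\cref{thm:cmo}) gives $C_{X|Y} C_{YY} = C_{XY}$, and, since $C_{YY}$ is self-adjoint, $(C_{X|Y})^{T} = C_{YY}^{-1} C_{YX}$ with $C_{YX} = (C_{XY})^{T}$. Substituting yields $C_{XX}' = C_{XY} C_{YY}^{-1} C_{YX} = C_{X|Y} C_{YX}$. Next I would match this against the hypothesis: because $C_{Y|X} = C_{YX} C_{XX}^{-1}$, we have $C_{Y|X} C_{XX} = C_{YX}$, hence $C_{X|Y} C_{Y|X} C_{XX} = C_{X|Y} C_{YX} = C_{XX}'$. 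The assumption $C_{X|Y} C_{Y|X} C_{XX} = C_{XX}$ is therefore exactly $C_{XX}' = C_{XX}$, establishing the first conclusion.

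Finally, writing the \gls{DMO} in its joint/evidence form $C_{X|Y}' = (C_{XY})^{T} (C_{XX}')^{-1}$ (\cref{thm:dmo}) and the posterior \gls{CMO} as $C_{Y|X} = (C_{XY})^{T} (C_{XX})^{-1}$, the identity $C_{XX}' = C_{XX}$ gives $C_{X|Y}' = C_{Y|X}$ at once.

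The care needed lies in the operator algebra: the manipulations rely on the self-adjointness and invertibility of $C_{YY}$ and on the cancellation $C_{XX}^{-1} C_{XX} = I$, both of which hold under the image assumptions that guarantee well-defined \glspl{CMO} and \glspl{DMO} (\cref{thm:cmo,thm:dmo}); I would flag these explicitly. The genuinely non-routine step is recognizing that $C_{XX}'$ collapses to $C_{X|Y} C_{Y|X} C_{XX}$, which is precisely what converts the abstract hypothesis into a direct operator equality and makes the rest of the argument a one-line substitution.
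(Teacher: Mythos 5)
Your proposal is correct and follows essentially the same route as the paper's proof: both hinge on the joint-operator factorization $C_{Y|X} C_{XX} = C_{YX} = C_{YY} (C_{X|Y})^{T}$, which turns the hypothesis into the identity $C_{XX}' = C_{X|Y} C_{Y|X} C_{XX} = C_{XX}$, after which $C_{X|Y}' = (C_{XY})^{T} (C_{XX}')^{-1} = C_{Y|X}$ is immediate. The only cosmetic difference is that you expand $C_{XX}'$ first and then match it to the hypothesis, whereas the paper starts from the hypothesis and substitutes the factorization.
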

	A special instance where the assumptions are met is when $X = r(Y)$ where $r$ is not necessarily invertible. Importantly, for empirical \glspl{DMO}, having $x_{i} = x_{j}$ for any $y_{i} = y_{j}$ suffices, which can be achieved if all $y_{i}$ are unique. Furthermore, empirical \glspl{DMO} $\bar{C}_{X|Y}'$ can be seen as generalizations of empirical \glspl{CMO} $\hat{C}_{Y|X}$ in the other direction.
		\begin{theorem}
		\label{thm:emp_dmo_special_case}
			If $m = n$ and $\tilde{y}_{i} = y_{i}$ for all $i \in \{1, \dots, n\}$, then the empirical \gls{DMO} corresponding to $\mathbb{P}_{X|Y}$ becomes the empirical \gls{CMO} corresponding to $\mathbb{P}_{Y|X}$ for $\lambda \to 0^{+}$,
			\begin{equation}
			\begin{aligned}
				\lim_{\lambda \to 0^{+}} \bar{C}_{X|Y}' = \Psi \big[ K + n \epsilon I \big]^{-1} \Phi^{T} = \hat{C}_{Y|X}.
			\end{aligned}
			\end{equation}
		\end{theorem}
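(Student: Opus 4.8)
The plan is to begin from the nonparametric form of the empirical \gls{DMO} in \cref{thm:emp_dmo}, substitute the special-case assumptions, take the limit $\lambda \to 0^{+}$, and recognize the result as the reverse empirical \gls{CMO} of \cref{thm:emp_cmo}. First I would impose $m = n$ and $\tilde{y}_{i} = y_{i}$, which force $\tilde{\Psi} = \Psi$ and hence $\tilde{L} = \Psi^{T} \tilde{\Psi} = \Psi^{T} \Psi = L$. The helper matrix then collapses to $A = (L + n \lambda I)^{-1} \tilde{L} = (L + n \lambda I)^{-1} L$, and \eqref{eq:emp_dmo} reduces to $\bar{C}_{X|Y}' = \Psi [A^{T} K A + n \epsilon I]^{-1} A^{T} \Phi^{T}$ after using $\tilde{\Psi} = \Psi$ and $m = n$.

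The crux is the limiting behaviour of $A$. Since $(L + n\lambda I)^{-1}$ and $L$ are symmetric and commute, $A$ is symmetric, and via a spectral decomposition $L = U \, \mathrm{diag}(\mu_{i}) \, U^{T}$ with $\mu_{i} \ge 0$ we obtain $A = U \, \mathrm{diag}\!\big(\mu_{i} / (\mu_{i} + n\lambda)\big) \, U^{T}$. Thus $A \to I$ as $\lambda \to 0^{+}$ whenever $L$ is invertible, which is the generic situation (e.g. when the $y_{i}$ induce linearly independent features). More care is needed when $L$ is singular, in which case $A$ instead tends to the orthogonal projection $P$ onto $\mathrm{range}(L) = \mathrm{range}(\Psi^{T})$; here I would exploit the identities $\Psi P = \Psi$ and $P \Psi^{T} = \Psi^{T}$ to argue that the projected factors reassemble into the stated expression. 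I expect this degenerate case to be the main technical obstacle, since the block decomposition of $K$ relative to $\mathrm{range}(\Psi^{T})$ and its complement does not obviously commute with the matrix inverse.

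Given $A \to I$, I would pass the limit through the finite-dimensional matrix products and the inverse, justified by continuity of matrix inversion at the positive definite matrix $K + n \epsilon I$ (positive definite because $\epsilon > 0$). Since $A^{T} K A + n \epsilon I \to K + n \epsilon I$ and $A^{T} \Phi^{T} \to \Phi^{T}$, the limit exists and equals $\lim_{\lambda \to 0^{+}} \bar{C}_{X|Y}' = \Psi [K + n \epsilon I]^{-1} \Phi^{T}$.

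Finally, I would identify this limit with $\hat{C}_{Y|X}$. Reversing the roles of $X$ and $Y$ in the empirical \gls{CMO} formula \eqref{eq:emp_cmo} of \cref{thm:emp_cmo}, that is, swapping $\Phi \leftrightarrow \Psi$ and $K \leftrightarrow L$ and replacing the regularizer $n \lambda$ by $n \epsilon$, yields exactly $\hat{C}_{Y|X} = \Psi [K + n \epsilon I]^{-1} \Phi^{T}$, which closes the chain of equalities. The conceptual takeaway is that the deconditional operator for $\mathbb{P}_{X|Y}$ degenerates, in the unregularized-likelihood limit, into the ordinary conditional operator for the reverse conditional $\mathbb{P}_{Y|X}$.
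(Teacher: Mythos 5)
Your proposal is correct and follows essentially the same route as the paper: substitute $\tilde{\Psi} = \Psi$ and $\tilde{L} = L$, note $A = (L + n\lambda I)^{-1}L \to I$ as $\lambda \to 0^{+}$, and pass the limit through the nonparametric form of \cref{thm:emp_dmo} to recover the reversed empirical \gls{CMO}. The only difference is your extra caution about singular $L$, a case the paper silently bypasses by asserting $\lim_{\lambda \to 0^{+}} A = I$ outright; your instinct that the projection limit is a genuine obstacle there is sound (the projected expression does not in general reassemble into $\Psi [K + n\epsilon I]^{-1}\Phi^{T}$), but in the generic invertible case your argument coincides with the paper's.
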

		\vspace{-1em}
	Intuitively, suppose $\{x_{i}, y_{i}\}_{i = 1}^{n}$ are from $p(x, y) := p(x | y) p(y)$ and $\{\tilde{y}_j\}_{j = 1}^{m} = \{y_i\}_{i = 1}^{n}$ is from $p(y)$, then the \gls{DMO} of $p(x | y)$ is equivalent to the \gls{CMO} of $p(y | x) = p(x | y) p(y) / \int_{\mathcal{Y}} p(x | y) p(y) dy$ in the other direction, as per \cref{thm:emp_dmo_special_case}. In general, however, if $\{x_{i}, y_{i}\}_{i = 1}^{n}$ are from $q(x, y) := p(x | y) q(y)$ and $\{\tilde{y}_j\}_{j = 1}^{m}$ is from $p(y)$, then using the joint samples from $q(x, y)$ only to build the \gls{CMO} will yield the \gls{CMO} of $q(y | x) = p(x | y) q(y) / \int_{\mathcal{Y}} p(x | y) q(y) dy$, while using both the joint samples from $q(x, y)$ and marginal samples from $p(y)$ to build the \gls{DMO} will yield the \gls{CMO} corresponding to $p(y | x) = p(x | y) p(y) / \int_{\mathcal{Y}} p(x | y) p(y) dy$. \Cref{sec:dmo_for_nbr} details further parallels with probabilistic rules.
			
\section{Related Work}
\label{sec:related_work}

	\Glspl{DMO} have strong connections to \gls{KBR}. Both provide a nonparametric Bayes' rule under the \gls{KME} framework. In contrast to \glspl{DMO} where both likelihood and prior operators are of second order, both \gls{KBR}(a) and \gls{KBR}(b) \citep{song2013kernel,fukumizu2013kernel} use a third order likelihood operator $C_{XX|Y}$ and a first order prior embedding $\mu_{Y}$ for the evidence operator $C_{XX} = C_{XX|Y} \mu_{Y}$. \Gls{KBR}(b) further uses a different third order likelihood operator $C_{XY|Y}$ for the joint operator $C_{XY} = C_{XY|Y} \mu_{Y}$. Consequently, \gls{KBR} becomes sensitive to inverse regularizations and effects of prior samples $\tilde{\bvec{y}}$ can vanish. For instance, when $\epsilon \to 0^{+}$, \gls{KBR}(b) degenerate to $\bar{C}_{Y|X} = \Psi K^{-1} \Phi^{T}$, which is a \gls{CMO} that no longer depend on $\tilde{\bvec{y}}$. Instead, \glspl{DMO} degenerate to $\bar{C}_{X|Y}' = \tilde{\Psi} A^{T} \big[A A^{T}\big]^{-1} K^{-1} \Phi^{T}$, retaining their original structure. Detailed comparisons are provided in \cref{sec:connections_to_kbr}.
	
	Viewing \glspl{KME} as regressors provides valuable insights and interpretations to the framework. \Glspl{CMO} can be established as regressors where the vector-valued targets are also kernel induced features \citep{grunewalder2012conditional}. In contrast, we establish \glspl{DMO} as solutions to task transformed regressors which recover latent functions that, together with a likelihood, governs interactions between three variables.
	
	The \gls{TTR} problem describes the setting of learning from conditional distributions in the extreme case where only one sample of $x_{i}$ is available for each $y_{i}$ to describe $p(x | y)$. Dual \glspl{KME} \citep{dai2017learning} formulate this setting as a saddle point problem, and employ stochastic approximations to efficiently optimize over the function space. However, without connections to Bayesian models such as \glspl{TTGP} that admit a marginal likelihood, hyperparameter selection often require inefficient grid search.
	
		\begin{figure*}[!t]
			\centering
			\includegraphics[width=0.49\linewidth]{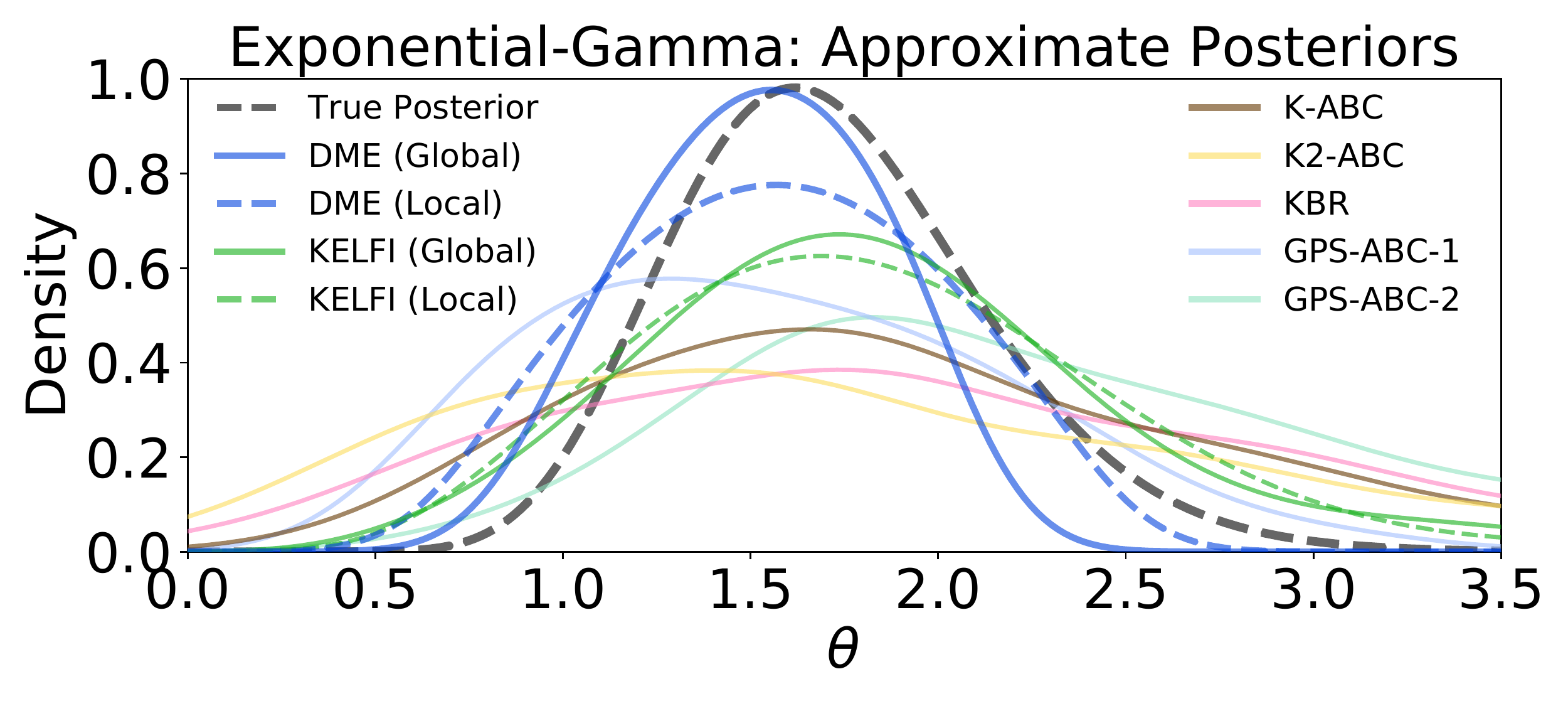}
			\includegraphics[width=0.49\linewidth]{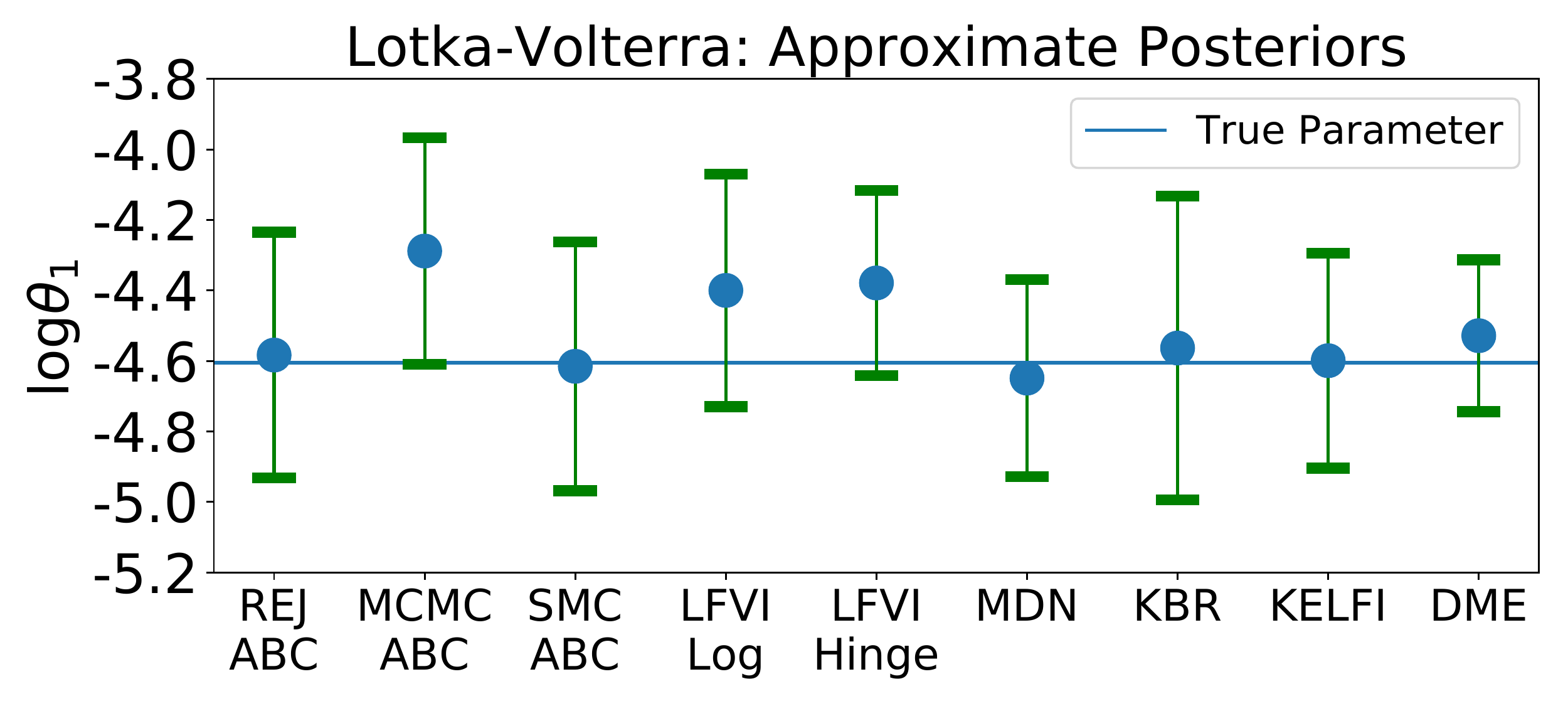}
			\vspace{-1em} 
			\caption{Application to \acrshort{LFI}. (Left) Approximate posteriors under kernel based \acrshort{LFI} methods for the toy exponential-gamma problem using $100$ simulations. `Global' and `Local' refer to the optimality of model hyperparameters with respect to their respective approximate marginal likelihoods. (Right) Approximate posteriors of the first (log) parameter. Error bars represent the middle 95\% credible interval.}
			\vspace{-0.5em} 
		\label{fig:lfi}
		\end{figure*}
		
	Hyperparameter learning of marginal embeddings have been investigated by placing \gls{GP} priors on the embedding itself to yield a marginal likelihood objective \citep{flaxman2016bayesian}. However, it is unclear how this can be extended to \glspl{CME}. Our marginal likelihoods (\cref{thm:ttblr_ttbkr} and \cref{thm:aml_lfi}) provide such objective for \glspl{DME} and, due to \cref{thm:emp_dmo_special_case}, it can also be applied to \glspl{CME} as a special case.

\section{Applications and Experiments}
\label{sec:experiments}

	While \glspl{DME} are developed to complement the theoretical framework of \glspl{KME}, in this section we describe and demonstrate some of their practical applications with experiments.
	
	\subsection{Hyperparameter Learning for \gls{TTR}}
	
		We first illustrate in \cref{fig:task-transformed-regression} the \gls{TTR} problem, the primary application of \glspl{TTGP} and \glspl{DME}. While $X$ and $Y$ are multivariate in general, we use 1D examples to enable visualizations. Although this is a 1D problem, the simulation process $p(x | y)$ and observation process $p(z | y)$ are governed by non-trivial relationships where successful recovery of $f$ requires dealing with difficult multi-modalities in $p(y | x)$. To generate the data, we choose non-trivial functions $r$ and $f$ and generate $X_{i} = r(Y_{i}) + \eta_{i}$ and $\tilde{Z}_{j} = f(r(\tilde{Y}_{j}) + \tilde{\eta}_{j}) + \tilde{\xi}_{j}$, where $Y_{i}, \tilde{Y}_{j} \sim U(-6, 6)$ and $ \eta_{i}, \tilde{\eta}_{j}, \tilde{\xi}_{j} \sim \mathcal{N}(0, 0.25^{2})$ for all $i \in \{1, \dots, n\}$ and $j \in \{1, \dots, m\}$. In this way, $p(x | y) = \mathcal{N}(x; r(y), 0.25^{2})$, $p(z | x) = \mathcal{N}(z; f(x), 0.25^{2})$, and $\mathbb{E}[Z | Y = y] = \mathbb{E}[f(r(y) + \tilde{\eta}) + \tilde{\xi}] = \mathbb{E}[f(X) | Y = y]$.
		
		By optimizing the marginal likelihood in \cref{thm:ttblr_ttbkr} (3), we see that the \gls{DME} is able to adapt from its initial hyperparameters to learn the latent function accurately. 
		
		We compare this to two naive solutions that one may propose when faced with a \gls{TTR} problem. The \textit{cascade} method trains separate regressors from $X$ to $Y$, with the transformation set, and from $Y$ to $Z$, with the task set. They use the former to predict $y^{\star}$ from $x^{\star}$ and the latter to predict $z^{\star}$ from $y^{\star}$. The \textit{impute method} trains a regressor from $Y$ to $Z$ with the task set and predicts $\bvec{z}_{\mathrm{fake}}$ at locations $\bvec{y}$, and trains a regressor on the dataset $(\bvec{x}, \bvec{z}_{\mathrm{fake}})$ to predict $z^{\star}$ from a new $x^{\star}$. We use \gls{GPR} means (\gls{KRR}) for all such regressors. Both methods suffer because uncertainty propagation is lost by training regressors separately. The cascade method suffers further because $p(y | x)$ is usually highly multi-modal such as in this example, so unimodal regressors like \gls{GPR} from $X$ to $Y$ are unsuitable. This also highlights that while \glspl{DME} provides unimodal Gaussian uncertainty on function evaluations and thus $Z$, they capture multi-modality as a nonparametric Bayes' rule between $X$ and $Y$.
	
	\subsection{Sparse Representation Learning with \gls{TTGP}}
	
		A special case of the \gls{TTR} problem is to learn sparse representations for big data with trainable inducing points. Continuing with the notations used so far, we are given a large original dataset $(\tilde{\bvec{y}}, \tilde{\bvec{z}})$ of size $m$, with inputs $Y$ and target $Z$. We let the transformation dataset be a set of $n$ inducing points $\bvec{x} = \bvec{y} = \bvec{u}$ for $Y$ where $n << m$. That is, we degenerate to $X = Y$ and $\mathcal{X} = \mathcal{Y}$. We maximize the alternative marginal likelihood in \cref{thm:ttblr_ttbkr} (4) with respect to the inducing points and learn the \gls{TTGP} hyperparameters jointly. For predictive mean we use the alternative computational form in \cref{def:nonparametric_dme_estimator}. Similar form exists for the covariance. These alternative forms are suitable for this application because $n$ is small for its $O(n^{3})$ inversions and dependence on $m$ is only $O(m)$. We illustrate this process in \cref{fig:sparse-representation-learning}. 

	\subsection{Likelihood-Free Inference with \gls{DME}}
	\label{sec:experiments_likelihood_free_inference}
	
		As a nonparametric Bayes' rule, \glspl{DME} can be used for \gls{LFI} \citep{marin2012approximate} where likelihood evaluations are intractable but sampling from a simulator $\bvec{x} \sim p(\bvec{x} | \bm{\theta})$ is possible. The simulator takes parameters $\bm{\theta}$ and stochastically generates simulated data that are often summarized into statistics $\bvec{x}$. Observed data are also summarized into statistics $\bvec{y}$, and discrepancies with $\bvec{x}$ are often measured by an $\epsilon$-kernel $\kappa_{\epsilon}(\bvec{y}, \bvec{x}) = p_{\epsilon}(\bvec{y} | \bvec{x})$ such that $p_{\epsilon}(\bvec{y} | \bm{\theta}) = \int p_{\epsilon}(\bvec{y} | \bvec{x}) p(\bvec{x} | \bm{\theta}) d\bm{\theta}$. This $\epsilon$ is not to be confused with the regularization used for $C_{XX}'$, which we denote as $\delta$ for this section only. After selecting a prior $p(\bm{\theta})$, the goal is to approximate the posterior $p_{\epsilon}(\bm{\theta} | \bvec{y})$.
		
		Translating notations into the \gls{LFI} setting, we have $x_{i} \rightarrow \bvec{x}_{i}$, $y_{i} \rightarrow \bm{\theta}_{i}$, $\tilde{y}_{j} \rightarrow \tilde{\bm{\theta}_{j}}$, and $x \rightarrow \bvec{y}$. We first simulate $\bvec{x}_{i} \sim p(\bvec{x} | \bm{\theta}_{i})$ on parameters $\{\bm{\theta}_{i}\}_{i = 1}^{n} \sim \pi(\bm{\theta})$ not necessarily from the prior to get $\{\bm{\theta}_{i}, \bvec{x}_{i}\}_{i = 1}^{n}$ for the likelihood, and sample $\{\tilde{\bm{\theta}}_{j}\}_{j = 1}^{m}  \sim p(\bm{\theta})$ for the prior. We then build the \gls{DME} $\bar{\mu}_{\bm{\Theta} | \bvec{X} = \bvec{y}}$ and sample it with kernel herding \citep{chen2010super} for posterior super-samples. This is described in \cref{alg:dme-lfi}. We also provide the approximate marginal likelihood objective $\bar{q}$ to maximize for hyperparameter learning of the \gls{DME}. Derivations are detailed in \cref{sec:experiment_details}.
		
			\begin{algorithm}
				\caption{\Acrlongpl{DME} for \gls{LFI}}
				\label{alg:dme-lfi}
				\begin{algorithmic}[1]
					\STATE {\bfseries Input:} Data $\bvec{y}$, simulations $\{\bm{\theta}_{i}, \bvec{x}_{i}\}_{i = 1}^{n} \sim p(\bvec{x} | \bm{\theta}) \pi(\bm{\theta})$, prior samples $\{\tilde{\bm{\theta}}_{j}\}_{j = 1}^{m} \sim p(\bm{\theta})$, query points $\{\bm{\theta}^{\star}_{r}\}_{r = 1}^{R}$, kernels $k$, $\kappa_{\epsilon}$, $\ell$, and $\ell'$, regularization $\lambda$ and $\delta$
					\STATE $L \leftarrow \{\ell(\bm{\theta}_{i}, \bm{\theta}_{j})\}_{i, j = 1}^{n, n}$, $\tilde{L} \leftarrow \{\ell\bm{\theta}_{i}, \tilde{\bm{\theta}}_{j})\}_{i, j = 1}^{n, m}$
					\STATE $A \leftarrow (L + n \lambda I)^{-1} \tilde{L}$, $\tilde{L}^{\star} \leftarrow \{\ell(\tilde{\bm{\theta}}_{j}, \bm{\theta}^{\star}_{r})\}_{j, r = 1}^{m, R}$, \STATE $K \leftarrow \{k(\bvec{x}_{i}, \bvec{x}_{j})\}_{i, j = 1}^{n, n}$, $\bm{k}(\bvec{y}) \leftarrow \{k(\bvec{x}_{i}, \bvec{y})\}_{i = 1}^{n}$
					\STATE \gls{DME}: $\bm{\mu} \leftarrow (\tilde{L}^{\star})^{T}  A^{T} \big[K A A^{T} + m \delta I \big]^{-1} \bm{k}(\bvec{y}) \in \mathbb{R}^{R}$
					\FOR{$s \in \{1, \dots, S\}$ with $\bvec{a} \leftarrow \bvec{0} \in \mathbb{R}^{R}$ initialized} 
					\STATE $\hat{\bm{\theta}}_{s} \leftarrow \bm{\theta}^{\star}_{r^{\star}}$ where $r^{\star} \leftarrow \argmax_{r} \mu_{r} - (a_{r} / s)$
					\STATE $\bvec{a} \leftarrow \bvec{a} + \{\ell'(\bm{\theta}^{\star}_{r}, \hat{\bm{\theta}}_{s}) \}_{r = 1}^{R}$
					\ENDFOR
					\STATE {\bfseries Output:} Posterior super-samples $\{ \hat{\bm{\theta}}_{s} \}_{s = 1}^{S}$
					\STATE {\bfseries Learning:} $\bar{q} \leftarrow \mathrm{mean}(A^{T} \bm{\kappa}_{\epsilon})$, $\bm{\kappa_{\epsilon}} \leftarrow \{\kappa_{\epsilon}(\bvec{y}, \bvec{x}_{i})\}_{i = 1}^{n}$
				\end{algorithmic}
			\end{algorithm}

		\Cref{fig:lfi} demonstrates \cref{alg:dme-lfi} on two standard benchmarks. For the toy exponential-gamma problem we compare directly with other kernel approaches. As simulations are usually very expensive, we show the case with very limited simulations $(n = 100)$, leading to most methods producing posteriors wider than the ground truth. Nevertheless, by optimizing $\bar{q}$ in line 11, \glspl{DME} can adapt their kernel length scales accordingly. For Lotka-Volterra, the ABC methods used more than 100000 simulations, while MDN used 10000 simulations. To achieve competitive accuracy, kernel approaches such as \glspl{DME}, \acrshort{KELFI} \citep{hsu2019bayesian}, and \gls{KBR} used 2000, 2500, and 2500 simulations. Acronyms and experimental details are described in \cref{sec:experiment_details}.
		
\section{Conclusion}
\label{sec:conclusion}

	The connections of \glspl{DME} with \glspl{CME} and \glspl{GP} produce useful insights towards the \gls{KME} framework, and are important steps towards establishing Bayesian views of \glspl{KME}. \Glspl{DME} provide novel solutions to a class of nonparametric Bayesian regression problems and enable applications such as sparse representation learning and \gls{LFI}. For future work, relaxing assumptions required for \glspl{DMO} as a nonparametric Bayes' rule can have fruitful theoretical and practical implications.
	
\bibliography{references}
\bibliographystyle{icml2019}

\onecolumn
\newpage
\appendix
\section{Supporting Proofs for \Cref{sec:conditional_mean_embedding}}
\label{sec:conditional_mean_embedding_proofs}

	\begin{proof}[\textbf{Proof of \Cref{thm:cmo}}]
		Let $f \in \mathcal{H}_{k}$ and $g(y) := \mathbb{E}[f(X) | Y = y]$. Assuming $g \in \mathcal{H}_{\ell}$, then $C_{YY} g = C_{YX} f$ \citep{fukumizu2004dimensionality}, so that
		\begin{equation}
		\begin{aligned}
			g &= C_{YY}^{-1} C_{YX} f \\
			&= ( (C_{YX})^{T} C_{YY}^{-1} )^{T} f \\
			&= ( C_{XY} C_{YY}^{-1} )^{T} f,
		\end{aligned}
		\end{equation}
		where the inverse $C_{YY}^{-1}$ exists because $\ell(y, \cdot)$ is assumed to be in the image of $C_{YY}$ so that any $g \in \mathcal{H}_{\ell}$ is also in the image. Hence, $C_{XY} C_{YY}^{-1}$ satisfies the definition of a \gls{CMO}.
	\end{proof}
		
	\begin{proof}[\textbf{Proof of \Cref{thm:cmo_definition_equivalence}}]
		Each of the following statements are equivalent to each other.
		\begin{equation}
		\begin{aligned}
			(C_{X|Y})^{T} f &= \mathbb{E}[f(X) | Y = \cdot], \quad \forall f \in \mathcal{H}_{k} \\
			\iff \langle \ell(y, \cdot), (C_{X|Y})^{T} f \rangle_{\mathcal{H}_{\ell}} &= \langle \ell(y, \cdot), \mathbb{E}[f(X) | Y = \cdot] \rangle_{\mathcal{H}_{\ell}}, \quad \forall f \in \mathcal{H}_{k}, \quad \forall y \in \mathcal{Y} \\
			\iff \langle C_{X|Y} \ell(y, \cdot), f \rangle_{\mathcal{H}_{k}} &= \mathbb{E}[f(X) | Y = y] = \langle \mathbb{E}[k(X, \cdot) | Y = y], f \rangle_{\mathcal{H}_{k}}, \quad \forall f \in \mathcal{H}_{k}, \quad \forall y \in \mathcal{Y} \\
			\iff C_{X|Y} \ell(y, \cdot) &= \mathbb{E}[k(X, \cdot) | Y = y], \quad \forall y \in \mathcal{Y}.
		\end{aligned}
		\end{equation}
		Consequently, the first and last statements are equivalent.
	\end{proof}

	\begin{proof}[\textbf{Proof of \Cref{thm:emp_cmo}}]

			We show that the empirical \gls{CMO} can be written as \eqref{eq:emp_cmo}. We use a special case of the Woodbury identity \citep{higham2002accuracy}, $B (CB + \lambda I)^{-1} = (BC + \lambda I)^{-1} B$, where $B$ and $C$ are appropriately defined operators, such matrices with the correct shapes. Using the empirical forms for the cross-covariance operators, we have
			\begin{equation}
			\begin{aligned}
				\hat{C}_{X|Y} :=&\; \hat{C}_{XY} (\hat{C}_{YY} + \lambda I)^{-1} \\
				=&\; \frac{1}{n} \Phi \Psi^{T} (\frac{1}{n} \Psi \Psi^{T} + \lambda I)^{-1} \\
				=&\; \Phi \Psi^{T} (\Psi \Psi^{T} + n \lambda I)^{-1} \\
				=&\; \Phi (\Psi^{T} \Psi + n \lambda I)^{-1} \Psi^{T} \\
				=&\; \Phi (L + n \lambda I)^{-1} \Psi^{T}.
			\end{aligned}
			\end{equation}
	\end{proof}
		
\newpage	
\section{Supporting Proofs for \Cref{sec:deconditional_mean_embedding}}
\label{sec:deconditional_mean_embedding_proofs}

	\begin{proof}[\textbf{Proof of \Cref{thm:dmo}}]
		Let $f \in \mathcal{H}_{k}$ and $g(y) := \mathbb{E}[f(X) | Y = y]$, then from \cref{def:cmo} we have
		\begin{equation}
		\begin{aligned}
			g &= (C_{X|Y})^{T} f \\
			C_{XY} g &= C_{XY} (C_{X|Y})^{T} f \\
			C_{X|Y} C_{YY} g &= C_{X|Y} C_{YY} (C_{X|Y})^{T} f \\
			(C_{X|Y} C_{YY} (C_{X|Y})^{T})^{-1} C_{X|Y} C_{YY} g &= f \\
			( (C_{X|Y} C_{YY})^{T} (C_{X|Y} C_{YY} (C_{X|Y})^{T})^{-1} )^{T} g &= f,
		\end{aligned}
		\end{equation}
		where the inverse $(C_{X|Y} C_{YY} (C_{X|Y})^{T})^{-1}$ exists because $C_{XY} g \in \mathcal{H}_{k}$ and $k(x, \cdot) \in \mathrm{image}(C_{X|Y} C_{YY} (C_{X|Y})^{T})$ so that $C_{XY} g$ for any $g \in \mathcal{H}_{\ell}$ is also in the image. In the last line we also used the fact that $(C_{X|Y} C_{YY} (C_{X|Y})^{T})^{T} = C_{X|Y} C_{YY} (C_{X|Y})^{T}$ is symmetric since $(C_{YY})^{T} = C_{YY}$. Hence, $(C_{X|Y} C_{YY})^{T} (C_{X|Y} C_{YY} (C_{X|Y})^{T})^{-1}$ satisfies the definition of a \gls{DMO}. The assumption $\ell(y, \cdot) \in \mathrm{image}(C_{YY})$ is required so that the original \gls{CMO} exists and is unique.		
	\end{proof}
	
	\begin{proof}[\textbf{Proof of \Cref{thm:dmo_is_pseudo_inv_of_cmo}}]
		Since $\ell(y, \cdot) \in \mathrm{image}(C_{YY})$ for all $y \in \mathcal{Y}$ and $k(x, \cdot) \in \mathrm{image}(C_{X|Y} C_{YY} (C_{X|Y})^{T})$ for all $x \in \mathcal{X}$, we have that $C_{YY}^{-1}$ exists so that $C_{X|Y}$ is unique and $(C_{X|Y} C_{YY} (C_{X|Y})^{T})^{-1}$ exists so that $C_{X|Y}'$ is unique. Due to \cref{thm:dmo} we have $C_{X|Y}' = (C_{X|Y} C_{YY})^{T} (C_{X|Y} C_{YY} (C_{X|Y})^{T})^{-1}$. Since $C_{X|Y} C_{YY} (C_{X|Y})^{T}$ is at least positive semi-definite and invertible we can write $(C_{X|Y} C_{YY} (C_{X|Y})^{T})^{-1} = \lim_{\epsilon \to 0^{+}} (C_{X|Y} C_{YY} (C_{X|Y})^{T} + \epsilon I)^{-1}$, 
		\begin{equation}
		\begin{aligned}
			C_{X|Y}'
			&= \lim_{\epsilon \to 0^{+}} (C_{X|Y} C_{YY})^{T} (C_{X|Y} C_{YY} (C_{X|Y})^{T} + \epsilon I)^{-1} \\
			&= \lim_{\epsilon \to 0^{+}} (C_{X|Y} C_{YY})^{T} (C_{X|Y} (C_{X|Y} C_{YY})^{T} + \epsilon I)^{-1} \\
			&= \lim_{\epsilon \to 0^{+}} (C_{XY})^{T} (C_{X|Y} (C_{XY})^{T} + \epsilon I)^{-1} \\
			&= \lim_{\epsilon \to 0^{+}} C_{YX} (C_{X|Y} C_{YX} + \epsilon I)^{-1} \\
			&= \lim_{\epsilon \to 0^{+}} (C_{YX} C_{X|Y} + \epsilon I)^{-1} C_{YX} \\
			&= \lim_{\epsilon \to 0^{+}} (C_{YY} C_{YY}^{-1} C_{YX} C_{X|Y} + \epsilon C_{YY} C_{YY}^{-1} )^{-1} C_{YX} \\
			&= \lim_{\epsilon \to 0^{+}} (C_{YY}^{-1} C_{YX} C_{X|Y} + \epsilon C_{YY}^{-1})^{-1} C_{YY}^{-1} C_{YX} \\
			&= \lim_{\epsilon \to 0^{+}} (C_{YY}^{-1} C_{YX} C_{X|Y} + \epsilon C_{YY}^{-1})^{-1} C_{YY}^{-1} C_{YX} \\
			&= \lim_{\epsilon \to 0^{+}} ((C_{XY} C_{YY}^{-1})^{T} C_{X|Y} + \epsilon C_{YY}^{-1})^{-1} (C_{XY} C_{YY}^{-1})^{T} \\
			&= \lim_{\epsilon \to 0^{+}} ((C_{X|Y})^{T} C_{X|Y} + \epsilon C_{YY}^{-1})^{-1} (C_{X|Y})^{T} \\
			&= ((C_{X|Y})^{T} C_{X|Y})^{-1} (C_{X|Y})^{T} =: C^{\dagger}_{X|Y}.
		\end{aligned}
		\end{equation}
		In line 6 we used the Woodbury identity \citep{higham2002accuracy}. In the last line,  the limit exists as $((C_{X|Y})^{T} C_{X|Y})^{-1}$ exists.
	\end{proof}
	
	\begin{proof}[\textbf{Proof of \Cref{thm:emp_dmo}}]
			We show that the empirical \gls{DMO} can be written as \eqref{eq:emp_dmo}. From \cref{def:emp_cmo} and \cref{thm:emp_cmo}, the likelihood operator is estimated from $\{x_{i}, y_{i}\}_{i = 1}^{n}$ as
			\begin{equation}
				\hat{C}_{X|Y} := \hat{C}_{XY} (\hat{C}_{XX} + \lambda I)^{-1} = \Phi (L + n \lambda I)^{-1} \Psi^{T}.
			\end{equation}
			The prior operator corresponding to the marginal $\mathbb{P}_{Y}$ is estimated from $\{\tilde{y}_{j}\}_{j = 1}^{m}$ as
			\begin{equation}
				\tilde{C}_{YY} = \frac{1}{m} \tilde{\Psi} \tilde{\Psi}^{T}.
			\end{equation}
			Let $A := (L + n \lambda I)^{-1} \tilde{L}$, the joint operator is estimated as
			\begin{equation}
				\hat{C}_{X|Y} \tilde{C}_{YY} = \frac{1}{m} \Phi (L + n \lambda I)^{-1} \Psi^{T} \tilde{\Psi} \tilde{\Psi}^{T} = \frac{1}{m} \Phi (L + n \lambda I)^{-1} \tilde{L} \tilde{\Psi}^{T} = \frac{1}{m} \Phi A \tilde{\Psi}^{T}.
			\end{equation}
			The evidence operator is estimated as
			\begin{equation}
			\begin{aligned}
				\hat{C}_{X|Y} \tilde{C}_{YY} (\hat{C}_{X|Y})^{T} &= \frac{1}{m} \Phi (L + n \lambda I)^{-1} \tilde{L} \tilde{\Psi}^{T} \Psi (L + n \lambda I)^{-1} \Phi^{T} \\
				&= \frac{1}{m} \Phi (L + n \lambda I)^{-1} \tilde{L} \tilde{L}^{T} (L + n \lambda I)^{-1} \Phi^{T} \\
				&= \frac{1}{m} \Phi A A^{T} \Phi^{T}.
			\end{aligned}
			\end{equation}
			Finally, by \cref{def:emp_dmo}, the \gls{DMO} is estimated as
			\begin{equation}
			\begin{aligned}
				\bar{C}_{X|Y}' &= (\hat{C}_{X|Y} \tilde{C}_{YY})^{T} (\hat{C}_{X|Y} \tilde{C}_{YY} (\hat{C}_{X|Y})^{T} + \epsilon I)^{-1} \\
				&= \bigg[ \frac{1}{m} \Phi A \tilde{\Psi}^{T} \bigg]^{T} \bigg[ \frac{1}{m} \Phi A A^{T} \Phi^{T} + \epsilon I \bigg]^{-1} \\
				&= \big[ \Phi A \tilde{\Psi}^{T} \big]^{T} \big[ \Phi A A^{T} \Phi^{T} + m \epsilon I \big]^{-1} \\
				&= \tilde{\Psi} A^{T} \Phi^{T} \big[ \Phi A A^{T} \Phi^{T} + m \epsilon I \big]^{-1} \\
				&= \tilde{\Psi} \big[ A^{T} \Phi^{T} \Phi A + m \epsilon I \big]^{-1} A^{T} \Phi^{T} \\
				&= \tilde{\Psi} \big[ A^{T} K A + m \epsilon I \big]^{-1} A^{T} \Phi^{T}.
			\end{aligned}
			\end{equation}
	\end{proof}
	
\newpage
\section{Supporting Proofs for \Cref{sec:task_transformed_gaussian_process}}
\label{sec:task_transformed_gaussian_process_proofs}

	\begin{proof}[\textbf{Proof of \Cref{thm:ttlrr}}]
		Each optimization is a standard regularized least squares problem. The first optimization over $\bvec{v}$ can be written as 
			\begin{equation}
			\begin{aligned}
				&\hat{\bvec{v}}[\bvec{w}] = \argmin_{\bvec{v} \in \mathbb{R}^{q}} \| \Phi^{T} \bvec{w} - \Psi^{T} \bvec{v} \|^{2} + n \lambda \| \bvec{v} \|^{2},
			\end{aligned}
			\end{equation}
		where $\bvec{f} = \Phi^{T} \bvec{w}$ is the target and $\Psi$ is the feature matrix. This gives the solution $\hat{\bvec{v}}[\bvec{w}] = (\Psi \Psi^{T} + n \lambda I)^{-1} \Psi (\Phi^{T} \bvec{w})$. Therefore, The second optimization over $\bvec{w}$ can be written as
			\begin{equation}
			\begin{aligned}
				\bar{\bvec{w}} &= \argmin_{\bvec{w} \in \mathbb{R}^{p}}  \| \tilde{\bvec{z}} - \tilde{\Psi}^{T} \hat{\bvec{v}}[\bvec{w}] \|^{2} + m \epsilon \| \bvec{w} \|^{2} \\
				&= \argmin_{\bvec{w} \in \mathbb{R}^{p}}  \| \tilde{\bvec{z}} - \tilde{\Psi}^{T} (\Psi \Psi^{T} + n \lambda I)^{-1} \Psi \Phi^{T} \bvec{w} \|^{2} + m \epsilon \| \bvec{w} \|^{2} \\
				&= \argmin_{\bvec{w} \in \mathbb{R}^{p}}  \| \tilde{\bvec{z}} - A^{T} \Phi^{T} \bvec{w} \|^{2} + m \epsilon \| \bvec{w} \|^{2} \\
				&= \argmin_{\bvec{w} \in \mathbb{R}^{p}}  \| \tilde{\bvec{z}} - \Theta^{T} \bvec{w} \|^{2} + m \epsilon \| \bvec{w} \|^{2},
			\end{aligned}
			\end{equation}
		where we used $A := \Psi^{T} (\Psi \Psi^{T} + n \lambda I)^{-1} \tilde{\Psi}$ as per \cref{def:parametric_dme_estimator} and we define $\Theta := \Phi A$. This is now a regularized least squares problem with $\tilde{\bvec{z}}$ as the target and $\Theta := \Phi A$ as the feature matrix. This gives the solution $\bar{\bvec{w}} = (\Theta \Theta^{T} + m \epsilon I)^{-1} \Theta \tilde{\bvec{z}} = (\Phi A A^{T} \Phi^{T} + m \epsilon I)^{-1} \Phi A \tilde{\bvec{z}}$, which yields the parametric \gls{DME} estimator in \cref{def:parametric_dme_estimator}.
	\end{proof}
	
	\begin{proof}[\textbf{Proof of \Cref{thm:ttkrr}}]
		We first establish that the transformation matrix in \cref{def:nonparametric_dme_estimator} $A = (L + n \lambda I)^{-1} \tilde{L}$ is the same as the transformation matrix in \cref{def:parametric_dme_estimator} $A = \Psi^{T} (\Psi \Psi^{T} + n \lambda I)^{-1} \tilde{\Psi}$ via a special case of the Woodbury identity $B (CB + \delta I)^{-1} = (BC + \delta I)^{-1} B$ for appropriately sized matrices or operators $B$ and $C$ \citep{higham2002accuracy}. Consequently, $(L + n \lambda I)^{-1} \tilde{L} = (\Psi^{T} \Psi + n \lambda I)^{-1} \Psi^{T} \tilde{\Psi} = \Psi^{T} (\Psi \Psi^{T} + n \lambda I)^{-1} \tilde{\Psi}$.
		
		From \cref{def:nonparametric_dme_estimator} we have $\bar{\bm{\alpha}} := A \big[ A^{T} K A + m \epsilon I \big]^{-1} \tilde{\bvec{z}}$ so that
		\begin{equation}
		\begin{aligned}
			\Phi \bar{\bm{\alpha}} &= \Phi A \big[ A^{T} K A + m \epsilon I \big]^{-1} \tilde{\bvec{z}} \\
			&= \Phi A \big[ A^{T} \Phi^{T} \Phi A + m \epsilon I \big]^{-1} \tilde{\bvec{z}} \\
			&= \big[ \Phi A A^{T} \Phi^{T} + m \epsilon I \big]^{-1} \Phi A \tilde{\bvec{z}} \\
			&= \bar{\bvec{w}}.
		\end{aligned}
		\end{equation}
		This relationship is a direct consequence of the kernel trick, where we used $k(x, x') = \phi(x)^{T} \phi(x')$ such that $K = \Phi^{T} \Phi$.
	\end{proof}		
	
	\begin{proof}[\textbf{Proof of \Cref{thm:ttblr_ttbkr} Part 1} -- \Acrfull{TTBLR}]
		In this proof we provide the derivations for \acrfull{TTBLR}. We first reiterate the priors and likelihoods used.
		
		\paragraph{Priors}

			We first place priors on the weights of our linear models $g(y) = \bvec{v}^{T} \psi(y)$ and $f(x) = \bvec{w}^{T} \psi(x)$,
			\begin{equation}
			\begin{aligned}
				p(\bvec{v}) &\sim \mathcal{N}(\bvec{v}; \bvec{0}, \beta^{2} I), \\
				p(\bvec{w}) &\sim \mathcal{N}(\bvec{w}; \bvec{0}, \gamma^{2} I).
			\end{aligned}
			\end{equation}
			
		\paragraph{Likelihoods}
		
			As we only observe from $g$ and never from $f$ directly, there is no need to add noise from $f(x)$ to $z$ and we degenerate the likelihood to $z = f(x)$. The likelihood for $g$ is the regular Gaussian likelihood due to observational noise. Together, we have
			\begin{equation}
			\begin{aligned}
				p(z | \bvec{v}) &= \mathcal{N}(z ; \bvec{v}^{T} \psi(y), \sigma^{2}), \\
				p(z | \bvec{w}) &= \mathcal{N}(z ; \bvec{w}^{T} \phi(x), 0).
			\end{aligned}
			\end{equation}
	
		\paragraph{Prior for $g$}
		
			The prior on the weights of $g$ is
			\begin{equation}
				p(\bvec{v}) = \mathcal{N}(\bvec{v} ; \bvec{0}, \beta^{2} I).
			\end{equation}
		
		\paragraph{Likelihood for $g$}
		
			In task transformed learning, the pairs $(\bvec{y}, \bvec{z})$ are used to learn $g$, and $(\tilde{\bvec{y}}, \tilde{\bvec{z}})$ are the query points for $g$. Although $\bvec{z}$ is not directly available, they are propagated through from $f$. We also refer to $\bvec{z}$ as the pseudo-training targets. This leads to the following likelihood,
			\begin{equation}
			\begin{aligned}
				p(\bvec{z} | \bvec{v}) &= \mathcal{N}(\bvec{z} ; \Psi^{T} \bvec{v}, \sigma^{2} I), \\
				p(\tilde{\bvec{z}} | \bvec{v}) &= \mathcal{N}(\tilde{\bvec{z}} ; \tilde{\Psi}^{T} \bvec{v}, \sigma^{2} I).
			\end{aligned}
			\end{equation}
		
		\paragraph{Marginal Likelihood for $g$}
		
			The marginal likelihood of observing the pseudo-training targets $\bvec{z}$ is
			\begin{equation}
			\begin{aligned}
				p(\bvec{z}) &= \int_{\mathbb{R}^{q}} p(\bvec{z} | \bvec{v}) p(\bvec{v}) d\bvec{v} \\
				&= \mathcal{N}(\bvec{z} ; \bvec{0}, \beta^{2} \Psi^{T} \Psi + \sigma^{2} I).		
			\end{aligned}
			\end{equation}
		
		\paragraph{Posterior for $g$}
		
			The posterior of the weights given the pseudo-training targets $\bvec{z}$ is
			\begin{equation}
			\begin{aligned}
				p(\bvec{v} | \bvec{z}) &= \frac{p(\bvec{z} | \bvec{v}) p(\bvec{v})}{p(\bvec{z})} \\
				&= \mathcal{N}\Bigg(\bvec{v} ; \bigg(\Psi \Psi^{T} + \frac{\sigma^{2}}{\beta^{2}} I\bigg)^{-1} \Psi \bvec{z}, \sigma^{2} \bigg( \Psi \Psi^{T} + \frac{\sigma^{2}}{\beta^{2}} I \bigg)^{-1}\Bigg).
			\end{aligned} 
			\end{equation}
		
		\paragraph{Predictive distribution for $g$}
		
			The posterior predictive distribution of $\tilde{\bvec{z}}$ given the pseudo-training targets $\bvec{z}$ is
			\begin{equation}
			\begin{aligned}
				p(\tilde{\bvec{z}} | \bvec{z}) &= \int_{\mathbb{R}^{q}} p(\tilde{\bvec{z}} | \bvec{v}) p(\bvec{v} | \bvec{z}) d\bvec{v} \\
				&= \mathcal{N}\Bigg(\tilde{\bvec{z}} ; \tilde{\Psi}^{T} \bigg(\Psi \Psi^{T} + \frac{\sigma^{2}}{\beta^{2}} I \bigg)^{-1} \Psi \bvec{z}, \sigma^{2} \tilde{\Psi}^{T} \bigg( \Psi \Psi^{T} + \frac{\sigma^{2}}{\beta^{2}} I \bigg)^{-1} \tilde{\Psi} + \sigma^{2} I \Bigg) \\
				&= \mathcal{N}(\tilde{\bvec{z}} ; A^{T} \bvec{z}, \Sigma),
			\end{aligned}
			\end{equation}
			where $A = \Psi^{T} (\Psi \Psi^{T} + \frac{\sigma^{2}}{\beta^{2}} I )^{-1} \tilde{\Psi}$ and $\Sigma = \sigma^{2} \tilde{\Psi}^{T} ( \Psi \Psi^{T} + \frac{\sigma^{2}}{\beta^{2}} I )^{-1} \tilde{\Psi} + \sigma^{2} I$.
		
			Importantly, the \gls{MAP} solution for learning $g$ amount to just taking the posterior mean $\hat{\bvec{v}} = (\Psi \Psi^{T} + \frac{\sigma^{2}}{\beta^{2}} I )^{-1} \Psi \bvec{z}$ as a point estimate. In this case, the predictive covariance would simplify to $\Sigma = \sigma^{2} I$.
			
		\paragraph{Prior for $f$}
		
			The prior on the weights of $f$ is
			\begin{equation}
				p(\bvec{w}) = \mathcal{N}(\bvec{w} ; \bvec{0}, \gamma^{2} I).
			\end{equation}

		\paragraph{Likelihood for $f$}
		
			As targets $z$ are never directly observed from $f$, the likelihood is a noiseless Gaussian likelihood,
			\begin{equation}
			\begin{aligned}
				p(\bvec{z} | \bvec{w}) &= \mathcal{N}(\bvec{z} ; \Phi^{T} \bvec{w}, 0 I), \\
				p(\bvec{z}^{\star} | \bvec{w}) &= \mathcal{N}(\bvec{z}^{\star} ; (\Phi^{\star})^{T} \bvec{w}, 0 I).
			\end{aligned}
			\end{equation}
			Propagating this likelihood through the predictive distribution of $g$, we have
			\begin{equation}
			\begin{aligned}
				p(\tilde{\bvec{z}} | \bvec{w}) &= \int_{\mathbb{R}^{n}} p(\tilde{\bvec{z}} | \bvec{z}) p(\bvec{z} | \bvec{w}) d\bvec{z} \\
				&= \mathcal{N}(\tilde{\bvec{z}} ; A^{T} \Phi^{T} \bvec{w}, \Sigma).
			\end{aligned}
			\end{equation}
			The above prior-likelihood pair describes a \gls{TBLR} with $M = A = \Psi^{T} (\Psi \Psi^{T} + \frac{\sigma^{2}}{\beta^{2}} I )^{-1} \tilde{\Psi}$ as the transformation matrix and $\Sigma = \sigma^{2} \tilde{\Psi}^{T} ( \Psi \Psi^{T} + \frac{\sigma^{2}}{\beta^{2}} I )^{-1} \tilde{\Psi} + \sigma^{2} I$ as the noise covariance. As such, the remaining distributions exhibit the same forms as shown in \cref{tab:transformed_regression}.
			
		\paragraph{Marginal Likelihood for $f$}
		
			The marginal likelihood for the observed targets $\tilde{\bvec{z}}$ is
			\begin{equation}
			\begin{aligned}
				p(\tilde{\bvec{z}}) &= \int_{\mathbb{R}^{p}} p(\tilde{\bvec{z}} | \bvec{w}) p(\bvec{w}) d\bvec{w} \\
				&= \mathcal{N}(\tilde{\bvec{z}} ; \bvec{0}, \gamma^{2}  A^{T} \Phi^{T} \Phi A + \Sigma) \\
				&= \mathcal{N}(\tilde{\bvec{z}} ; \bvec{0}, [\Sigma^{-1} - \Sigma^{-1} A^{T} \Phi^{T} C \Phi A \Sigma^{-1}]^{-1}),
			\end{aligned}
			\end{equation}
			where $C = [\Phi A \Sigma^{-1} A^{T} \Phi^{T} + \frac{1}{\gamma^{2}} I]^{-1}$. The last line is an alternative form that is more computationally efficient when the number of features is less than $p < m$ where $p$ is the dimensionality of the feature $\phi(x)$ for $f$.
			
		\paragraph{Posterior for $f$}
		
			The posterior of the weights $\bvec{w}$ given the observed targets $\tilde{\bvec{z}}$ is
			\begin{equation}
			\begin{aligned}
				p(\bvec{w} | \tilde{\bvec{z}}) &= \frac{p(\tilde{\bvec{z}} | \bvec{w}) p(\bvec{w})}{p(\tilde{\bvec{z}})} \\
				&= \mathcal{N}(\bvec{w} ;  \bvec{m}, C),
			\end{aligned}
			\end{equation}
			where $\bvec{m} := C \Phi A \Sigma^{-1} \tilde{\bvec{z}}$.
		
		\paragraph{Predictive distribution for $f$}
		
			Finally, the overall predictive distribution of query targets $\bvec{z}^{\star}$ given the observed targets $\tilde{\bvec{z}}$ is
			\begin{equation}
			\begin{aligned}
				p(\bvec{z}^{\star} | \tilde{\bvec{z}}) &= \int_{\mathbb{R}^{p}} p(\bvec{z}^{\star} | \bvec{w}) p(\bvec{w} | \tilde{\bvec{z}}) d\bvec{w} \\
				&= \mathcal{N}(\bvec{z}^{\star} ; {\Phi^{\star}}^{T} \bvec{m}, {\Phi^{\star}}^{T} C \Phi^{\star}).
			\end{aligned}
			\end{equation}
			
		Consider the posterior mean $\bvec{m} := C \Phi A \Sigma^{-1} \tilde{\bvec{z}} = [\Phi A \Sigma^{-1} A^{T} \Phi^{T} + \frac{1}{\gamma^{2}} I]^{-1} \Phi M \Sigma^{-1} \tilde{\bvec{z}}$, which would also be the \gls{MAP} solution for $f$. Using the \gls{MAP} solution for learning $g$ such that $\Sigma = \sigma^{2} I$, we have $\bvec{m} := [\Phi A A^{T} \Phi^{T} + \frac{\sigma^{2}}{\gamma^{2}} I]^{-1} \Phi A \tilde{\bvec{z}}$. This is the same form as the weights $\tilde{\bvec{w}}$ of the parametric \gls{DME} estimator (\cref{def:parametric_dme_estimator}) with $\lambda = \frac{\sigma^{2}}{n \beta^{2}}$ and $\epsilon = \frac{\sigma^{2}}{m \gamma^{2}}$. 
	\end{proof}	

		\setlength{\tabcolsep}{0pt}
		\begin{table*}[!]
			\caption{Summary of TBLR and TBKR, where $C := [\Phi M \Sigma^{-1} M^{T} \Phi^{T} + \frac{1}{\gamma^{2}} I]^{-1}$, $\bvec{m} := C \Phi M \Sigma^{-1} \tilde{\bvec{z}}$, and $S := M^{T} K M + \Sigma$.}
			\label{tab:transformed_regression}
			\centering
			\begin{tabular}{lcc}
				Density & Transformed Bayesian Linear Regression & Transformed Bayesian Kernel Regression \\
				\midrule
				Prior & $p(\bvec{w}) = \mathcal{N}(\bvec{w}; \bvec{0}, \gamma^{2})$ & $p(\bvec{f}) = \mathcal{N}(\bvec{f} ; \bvec{0}, K)$ \\
				Likelihood & $p(\tilde{\bvec{z}} | \bvec{w}) = \mathcal{N}(\tilde{\bvec{z}} ; M^{T} \Phi^{T} \bvec{w}, \Sigma)$ & $p(\tilde{\bvec{z}} | \bvec{f}) = \mathcal{N}(\tilde{\bvec{z}} ; M^{T} \bvec{f}, \Sigma)$ \\
				Evidence & $ p(\tilde{\bvec{z}}) = \mathcal{N}(\tilde{\bvec{z}} ; \bvec{0}, [\Sigma^{-1} - \Sigma^{-1} M^{T} \Phi^{T} C \Phi M \Sigma^{-1}]^{-1}) $ & $p(\tilde{\bvec{z}}) = \mathcal{N}(\tilde{\bvec{z}} ; \bvec{0}, M^{T} K M + \Sigma)$ \\
				Posterior & $p(\bvec{w} | \tilde{\bvec{z}}) = \mathcal{N}(\bvec{w} ;  \bvec{m}, C)$ & $p(\bvec{f} | \tilde{\bvec{z}}) = \mathcal{N}(\bvec{f} ; K M S^{-1} \tilde{\bvec{z}}, K - K M S^{-1} M^{T} K)$ \\
				Predictive & $p(\bvec{z}^{\star} | \tilde{\bvec{z}}) = \mathcal{N}(\bvec{z}^{\star} ; {\Phi^{\star}}^{T} \bvec{m}, {\Phi^{\star}}^{T} C \Phi^{\star})$ & $p(\bvec{z}^{\star} | \tilde{\bvec{z}}) = \mathcal{N}(\bvec{z}^{\star} ;  {K^{\star}}^{T} M S^{-1} \tilde{\bvec{z}}, K^{\star \star} - {K^{\star}}^{T} M S^{-1} M^{T} K^{\star})$
			\end{tabular}
		\end{table*}
		
	\begin{proof}[\textbf{Proof of \Cref{thm:ttblr_ttbkr} Part 2} -- \Acrfull{TTBKR}]
		In this proof we provide the derivations for \acrfull{TTBKR}, also named \acrfull{TTGPR}, whose graphical model is provided in \cref{fig:ttgpr}. We first reiterate the priors and likelihoods used.
		
		\paragraph{Priors}
			
			We place \gls{GP} priors on the functions $g$ and $f$ directly,
			\begin{equation}
			\begin{aligned}
				g &\sim \mathcal{GP}(0, \ell), \\
				f &\sim \mathcal{GP}(0, k).
			\end{aligned}
			\end{equation}
		
		\paragraph{Likelihoods} 
		
			As we only observe from $g$ and never from $f$ directly, there is no need to add noise from $f(x)$ to $z$ and we degenerate the likelihood to $z = f(x)$. The likelihood for $g$ is the regular Gaussian likelihood due to observational noise. Together, we have
			\begin{equation}
			\begin{aligned}
				p(z | g) &= \mathcal{N}(z ; g(y), \sigma^{2}), \\
				p(z | f) &= \mathcal{N}(z ; f(x), 0).
			\end{aligned}
			\end{equation}
		
		\paragraph{Prior for $g$}
		
			The prior of $g$ at $\bvec{y}$ is
			\begin{equation}
				p(\bvec{g}) = \mathcal{N}(\bvec{g} ; \bvec{0}, L).
			\end{equation}
		
		\paragraph{Likelihood for $g$}
		
			In task transformed learning, the pairs $(\bvec{y}, \bvec{z})$ are used to learn $g$, and $(\tilde{\bvec{y}}, \tilde{\bvec{z}})$ are the query points for $g$. Although $\bvec{z}$ is not directly available, they are propagated through from $f$. We also refer to $\bvec{z}$ as the pseudo-training targets. The likelihood of observing $\bvec{z}$ at $\bvec{y}$ is,
			\begin{equation}
				p(\bvec{z} | \bvec{g}) = \mathcal{N}(\bvec{z} ; \bvec{g}, \sigma^{2} I).
			\end{equation}
		
		\paragraph{Marginal Likelihood for $g$}
		
			The marginal likelihood of observing the psuedo-training targets $\bvec{z}$ is:
			\begin{equation}
			\begin{aligned}
				p(\bvec{z}) &= \int_{\mathbb{R}^{n}} p(\bvec{z} | \bvec{g}) p(\bvec{g}) d\bvec{g} \\ &= \mathcal{N}(\bvec{z} ; \bvec{0}, L + \sigma^{2} I).
			\end{aligned}
			\end{equation}
		
		\paragraph{Posterior for $g$}
		
			The posterior of the latent function evaluations $\bvec{g}$ at $\bvec{y}$ given the pseudo-training targets $\bvec{z}$ is
			\begin{equation}
			\begin{aligned}
				p(\bvec{g} | \bvec{z}) &= \frac{p(\bvec{z} | \bvec{g}) p(\bvec{g})}{p(\bvec{z})} \\
				&= \mathcal{N}(\bvec{g} ; L (L + \sigma^{2} I)^{-1} \bvec{z}, L - L (L + \sigma^{2} I)^{-1} L).
			\end{aligned}
			\end{equation}
		
		\paragraph{Predictive distribution for $g$}
		
			To obtain the predictive distribution, we first condition the \gls{GP} field for on the latent function evaluations $\bvec{g}$ at $\bvec{y}$ and to obtain the conditional distribution for $\tilde{\bvec{g}}$ at $\tilde{\bvec{y}}$ given $\bvec{g}$ at $\bvec{y}$,
			\begin{equation}
				p(\tilde{\bvec{g}} | \bvec{g}) = \mathcal{N}(\tilde{\bvec{g}}; \tilde{L}^{T} L^{-1} \bvec{g}, \tilde{\tilde{L}} - \tilde{L}^{T} L^{-1} \tilde{L}).
			\end{equation}
			where $\tilde{\tilde{L}} := \tilde{\Psi}^{T} \tilde{\Psi}$. Now, marginalize the conditional field against the posterior,
			\begin{equation}
			\begin{aligned}
				p(\tilde{\bvec{g}} | \bvec{z}) &= \int_{\mathbb{R}^{n}} p(\tilde{\bvec{g}} | \bvec{g}) p(\bvec{g} | \bvec{z}) d\bvec{g} \\
				&= \mathcal{N}(\tilde{\bvec{g}}; \tilde{L}^{T} (L + \sigma^{2} I)^{-1} \bvec{z}, \tilde{\tilde{L}} - \tilde{L}^{T} (L + \sigma^{2} I)^{-1} \tilde{L})
			\end{aligned}
			\end{equation}
			Finally, marginalize the likelihood $p(\tilde{\bvec{g}} | \bvec{z})$ with the predictive distribution of the latent evaluations $\tilde{\bvec{g}}$ to get the final predictive distribution of the observations $\tilde{\bvec{z}}$,
			\begin{equation}
			\begin{aligned}
				p(\tilde{\bvec{z}} | \bvec{z}) &= \int_{\mathbb{R}^{m}} p(\tilde{\bvec{z}} | \tilde{\bvec{g}}) p(\tilde{\bvec{g}} | \bvec{z}) d\tilde{\bvec{g}} \\
				&= \mathcal{N}(\tilde{\bvec{z}} ; \tilde{L}^{T} (L + \sigma^{2} I)^{-1} \bvec{z}, \tilde{\tilde{L}} + \sigma^{2} I - \tilde{L}^{T} (L + \sigma^{2} I)^{-1} \tilde{L}) \\
				&= \mathcal{N}(\tilde{\bvec{z}} ; A^{T} \bvec{z}, \Sigma),
			\end{aligned}
			\end{equation}
			where $A = (L + \sigma^{2} I)^{-1} \tilde{L}$ and $\Sigma = \tilde{\tilde{L}} + \sigma^{2} I - \tilde{L}^{T} (L + \sigma^{2} I)^{-1} \tilde{L}$.
			
			Importantly, the \gls{MAP} solution for learning $g$ amount to just taking the posterior mean $\tilde{\bvec{g}} = \tilde{L}^{T} (L + \sigma^{2} I)^{-1} \bvec{z}$ as a point estimate. In this case, the predictive covariance would simplify to $\Sigma = \sigma^{2} I$.
			
		\paragraph{Prior for $f$}
		
			The prior of $f$ at $\bvec{x}$ is 	
			\begin{equation}
				p(\bvec{f}) = \mathcal{N}(\bvec{f} ; \bvec{0}, K).
			\end{equation}
		
		\paragraph{Likelihood for $f$}
			
			As targets $z$ are never directly observed from $f$, the likelihood is a noiseless Gaussian likelihood,
			\begin{equation}
				p(\bvec{z} | \bvec{f}) = \mathcal{N}(\bvec{z} ; \bvec{f}, 0 I).
			\end{equation}
			Propagating this likelihood through the predictive distribution of $g$, we have
			\begin{equation}
			\begin{aligned}
				p(\tilde{\bvec{z}} | \bvec{f}) &= \int_{\mathbb{R}^{n}} p(\tilde{\bvec{z}} | \bvec{z}) p(\bvec{z} | \bvec{f}) d \bvec{z} \\
				&= \mathcal{N}(\tilde{\bvec{z}} ; A^{T} \bvec{f}, \Sigma).
			\end{aligned}
			\end{equation}
		
			The above prior-likelihood pair describes a \gls{TBKR} with $M = A = (L + \sigma^{2} I)^{-1} \tilde{L}$ as the transformation matrix and $\Sigma = \tilde{\tilde{L}} + \sigma^{2} I - \tilde{L}^{T} (L + \sigma^{2} I)^{-1} \tilde{L}$ as the noise covariance. As such, the remaining distribution exhibit the same forms as shown in \cref{tab:transformed_regression}.
			
		\paragraph{Marginal Likelihood for $f$}

			The marginal likelihood for the observed targets $\tilde{\bvec{z}}$ is 
			\begin{equation}
			\begin{aligned}
				p(\tilde{\bvec{z}}) &= \int_{\mathbb{R}^{n}} p(\tilde{\bvec{z}} | \bvec{f}) p(\bvec{f}) d\bvec{f} \\
				&= \mathcal{N}(\tilde{\bvec{z}} ; \bvec{0}, A^{T} K A + \Sigma).
			\end{aligned}
			\end{equation}
		
		\paragraph{Posterior for $f$}
			
			The posterior of the function evaluations $\bvec{f}$ at $\bvec{x}$ given the observed targets $\tilde{\bvec{z}}$ is
			\begin{equation}
			\begin{aligned}
				p(\bvec{f} | \tilde{\bvec{z}}) &= \frac{p(\tilde{\bvec{z}} | \bvec{f}) p(\bvec{f})}{p(\tilde{\bvec{z}})} \\
				&= \mathcal{N}(\bvec{f} ; K A (A^{T} K A + \Sigma)^{-1} \tilde{\bvec{z}}, K - K A (A^{T} K A + \Sigma)^{-1} A^{T} K).
			\end{aligned}
			\end{equation}
		
		\paragraph{Predictive distribution for $f$}
		
			Finally, to obtain the predictive distribution we first condition the \gls{GP} field on the latent function evaluations $\bvec{f}$ to obtain the conditional distribution for $\bvec{f}^{\star}$ at $\bvec{x}^{\star}$ given $\bvec{f}$ at $\bvec{x}$.
			\begin{equation}
				p(\bvec{f}^{\star} | \bvec{f}) = \mathcal{N}(\bvec{f}^{\star} ; (K^{\star})^{T} K^{-1} \bvec{f}, K^{\star \star} - (K^{\star})^{T} K^{-1} K^{\star} ).
			\end{equation}
			Now, marginalize the conditional field against the posterior,
			\begin{equation}
			\begin{aligned}
				p(\bvec{f}^{\star} | \tilde{\bvec{z}}) &= \int_{\mathbb{R}^{n}} p(\bvec{f}^{\star} | \bvec{f}) p(\bvec{f} | \tilde{\bvec{z}}) d\bvec{f} \\
				&= \mathcal{N}(\bvec{f}^{\star} ; (K^{\star})^{T} A (A^{T} K A + \Sigma)^{-1} \tilde{\bvec{z}}, K^{\star \star} - (K^{\star})^{T} A (A^{T} K A + \Sigma)^{-1} A^{T} K^{\star}).
			\end{aligned}
			\end{equation}
			Finally, the overall predictive distribution of query targets $\bvec{z}^{\star}$ given the observed targets $\tilde{\bvec{z}}$ is
			\begin{equation}
			\begin{aligned}
				p(\bvec{z}^{\star} | \tilde{\bvec{z}}) &= \int_{\mathbb{R}^{n}} p(\bvec{z}^{\star} | \bvec{f}^{\star}) p(\bvec{f}^{\star} | \tilde{\bvec{z}}) d\bvec{f}^{\star} \\
				&= \mathcal{N}(\bvec{z}^{\star} ; (K^{\star})^{T} A (A^{T} K A + \Sigma)^{-1} \tilde{\bvec{z}}, K^{\star \star} - (K^{\star})^{T} A (A^{T} K A + \Sigma)^{-1} A^{T} K^{\star}).
			\end{aligned}
			\end{equation}
			
			Consider the posterior predictive mean at a particular query point $x$, $\bar{f}(x) = (\bvec{k}(x))^{T} A (A^{T} K A + \Sigma)^{-1} \tilde{\bvec{z}} = \tilde{\bvec{z}}^{T} (A^{T} K A + \Sigma)^{-1} A^{T} \bvec{k}(x)$. Using the \gls{MAP} solution for learning $g$ such that $\Sigma = \sigma^{2} I$, we have $\bar{f}(x) = \tilde{\bvec{z}}^{T} (A^{T} K A + \sigma^{2} I)^{-1} A^{T} \bvec{k}(x)$. This is the same form as the nonparametric \gls{DME} estimator (\cref{def:parametric_dme_estimator}) with $\lambda = \frac{\sigma^{2}}{n}$ and $\epsilon = \frac{\sigma^{2}}{m}$. 
	\end{proof}
		
\newpage
\section{Supporting Proofs for \Cref{sec:nonparametric_bayes_rule}}
\label{sec:nonparametric_bayes_rule_proofs}

	\begin{proof}[\textbf{Proof of \Cref{thm:dmo_for_nbr}}]
		We first factorize the joint operator $C_{YX} = (C_{XY})^{T}$ in both directions,
		\begin{equation}
			C_{Y|X} C_{XX} = C_{YX} = (C_{XY})^{T} = (C_{X|Y} C_{YY})^{T} = C_{YY} (C_{X|Y})^{T}.
		\end{equation}
		This is analogous to the equation $p(y|x) p(x) = p(y, x) = p(x, y) = p(x | y) p(y) = p(y) p(x | y)$.
		
		Since $C_{X|Y} C_{Y|X} C_{XX} = C_{XX}$, we then apply $C_{X|Y}$ on both sides to cancel out $C_{Y|X}$ and obtain the equation for $C_{XX}$,
		\begin{equation}
			C_{XX} = (C_{X|Y} C_{Y|X}) C_{XX} = C_{X|Y} (C_{Y|X} C_{XX}) = C_{X|Y} C_{YY} (C_{X|Y})^{T}.
		\end{equation}
		This is analogous to the equation $p(x) = \int_{\mathcal{Y}} p(y | x) dy p(x) = \int_{\mathcal{Y}} p(y | x) p(x) dy = \int_{\mathcal{Y}} p(y) p(x | y) dy$.
		
		Hence, 
		\begin{equation}
			C_{XX}' := C_{X|Y} C_{YY} (C_{X|Y})^{T} = C_{XX}.
		\end{equation}
		Finally, from \cref{thm:dmo} we have
		\begin{equation}
			C_{X|Y}' = (C_{X|Y} C_{YY})^{T} ( C_{X|Y} C_{YY} (C_{X|Y})^{T})^{-1} = C_{YX} C_{XX}^{-1} = C_{Y|X}.
		\end{equation}
	\end{proof}
	
	\begin{proof}[\textbf{Proof of \Cref{thm:emp_dmo_special_case}}]
		Since $m = n$ and $\tilde{\bvec{y}} = \bvec{y}$, we have that $\tilde{L} = L$, $\tilde{\Psi} = \Psi$. Consequently, $\lim_{\lambda \to 0^{+}} A = \lim_{\lambda \to 0^{+}} (L + n \lambda I)^{-1} L = I$. Substituting this into \eqref{eq:emp_cmo} we have
		\begin{equation}
		\begin{aligned}
			\lim_{\lambda \to 0^{+}} \bar{C}_{X|Y}' &= \lim_{\lambda \to 0^{+}} \tilde{\Psi} \big[ A^{T} K A + m \epsilon I \big]^{-1} A^{T} \Phi^{T}
			&= \Psi \big[ I^{T} K I + n \epsilon I \big]^{-1} I^{T} \Phi^{T}
			&= \Psi \big[K + n \epsilon I \big]^{-1} \Phi^{T}.
		\end{aligned}
		\end{equation}
		Reversing the roles of $X$ and $Y$ in \eqref{eq:emp_cmo} and replacing the notation $\lambda$ with $\epsilon$, we have that $\hat{C}_{Y|X} = \Psi \big[K + n \epsilon I \big]^{-1} \Phi^{T}$. This concludes the proof.
	\end{proof}
	
\newpage
\section{Parallels between Probabilistic Rules and Mean Operators for \Cref{sec:nonparametric_bayes_rule}}
\label{sec:dmo_for_nbr}

	Both the usual and nonparametric Bayes' rule are derived to reverse the relationship specified by the likelihood (density or operator, resp.) by matching the joint. In both cases, the prior (density or operator, resp.) is inevitably required to perform this computation.
		
	Consider the derivation for Bayes' rule. When given a \textit{forward} density $p(x | y)$ and a marginal density on its conditioned variable $p(y)$ which specifies a joint $p(x , y) = p(x | y) p(y)$, we seek a \textit{backward} density $q(y | x)$ and a marginal density $q(x)$ that would yield the same joint $q(y | x) q(x) = p(x, y) = p(x | y) p(y)$. It is only when applying $\int_{\mathcal{Y}} \cdot dy$ on both sides, requiring that $q(y | x)$ is a density, that we have $q(x) = \int_{\mathcal{Y}} p(x | y) p(y) dy$ and thus Bayes' rule.
		
	Similarly, when given a \textit{forward} \gls{CMO} $C_{X|Y} : \mathcal{H}_{\ell} \to \mathcal{H}_{k}$ and a symmetric operator $C_{YY} : \mathcal{H}_{\ell} \to \mathcal{H}_{\ell}$ on its conditioned variable which specifies a joint $C_{XY} = C_{X|Y} C_{YY}$, we seek a \textit{backward} operator $D_{Y|X} : \mathcal{H}_{k} \to \mathcal{H}_{\ell}$ and a symmetric operator $D_{XX} : \mathcal{H}_{k} \to \mathcal{H}_{k}$ that would yield the same joint $D_{Y|X} D_{XX} = C_{YX} = (C_{XY})^{T} = (C_{X|Y} C_{YY})^{T}$. Without further requirement we see that $D_{Y|X} = C_{X|Y}'$ \eqref{eq:dmo} and $D_{XX} = C_{XX}'$ is one solution. It is only when applying $C_{X|Y}$ on both sides, requiring the assumption of \cref{thm:dmo_for_nbr}, that we have $D_{Y|X} = C_{Y|X}$ and $D_{XX} = C_{XX}$ and thus a nonparametric Bayes' rule.
	
	Importantly, it is only when \glspl{DMO} and \gls{KBR} are viewed as a statement for relationship between $X$ and $Y$ that they are seen as nonparametric versions of the Bayes' rule. However, \glspl{DMO} and \gls{KBR} are not Bayesian models with respect to the task of inferring deconditional mean or conditional means. This is because both models only infer point estimates for the deconditional or conditional mean, and no measure of uncertainty in the inferred function is provided. 

	\Cref{tab:encoded_expectations} review mean embeddings and their encoded expectations, providing probabilistic interpretations to \gls{RKHS} embeddings and operators \citep{song2013kernel}.
		
	\setlength{\tabcolsep}{5pt}
	\begin{table}
	\caption{Mean embeddings and their encoded expectations. Switch $X \leftrightarrow Y$ for all combinations. Since the bottom two rows do not apply for the first column, additional equivalences for the last column are provided instead. The kernels $k : \mathcal{X} \times \mathcal{X} \to \mathbb{R}$ and $\ell : \mathcal{Y} \times \mathcal{Y} \to \mathbb{R}$ are positive definite and characteristic on their respective spaces $\mathcal{X}$ and $\mathcal{Y}$. They define the \gls{RKHS} $\mathcal{H}_{k}$ and $\mathcal{H}_{\ell}$ respectively. We define $\mathcal{H}_{\ell \ell} := \mathcal{H}_{\ell} \otimes \mathcal{H}_{\ell}$ and $\mathcal{H}_{k \ell} := \mathcal{H}_{k} \otimes \mathcal{H}_{\ell}$ and let $g, g' \in \mathcal{H}_{\ell}$ and $f \in \mathcal{H}_{k}$ be generic example functions within each \gls{RKHS}.}
	\label{tab:encoded_expectations}
	\centering
	\begin{tabular}{lcccc}
		Random & $Y$ & $(Y, Y)$ & $(X, Y)$ & $X | Y = y$ \\
		Variable & $:\Omega \to \mathcal{Y}$ & $:\Omega \to \mathcal{Y} \times \mathcal{Y}$ & $:\Omega \to \mathcal{X} \times \mathcal{Y}$ & $:\Omega \to \mathcal{X}$ \\
		\midrule
		Density & $p_{Y}$ \hspace*{\fill} $\in \mathcal{P}_{\mathcal{Y}}$ & $p_{YY}$ \hspace*{\fill} $\in \mathcal{P}_{\mathcal{Y} \times \mathcal{Y}}$ & $p_{XY}$ \hspace*{\fill} $\in \mathcal{P}_{\mathcal{X} \times \mathcal{Y}}$ & $p_{X | Y = y}$ \hspace*{\fill} $\in \mathcal{P}_{\mathcal{X}}$ \\
		Function & $p_{Y}(y)$ \hspace*{\fill} $\in \mathbb{R}^{+}$ & $p_{YY}(y, y')$ \hspace*{\fill} $\in \mathbb{R}^{+}$ & $p_{XY}(x, y)$ \hspace*{\fill} $\in \mathbb{R}^{+}$ &  $p_{X | Y = y}(x)$ \hspace*{\fill} $\in \mathbb{R}^{+}$ \\
		\midrule
		Mean Map & $\mu_{Y} :=$ & $\mu_{YY} :=$ & $\mu_{XY} :=$ & $\mu_{X | Y = y} = $ \\
		Definition & $\mathbb{E}[\ell(Y, \cdot)]$ & $\mathbb{E}[\ell(Y, \cdot) \ell(Y, \cdot)^{T}]$ & $\mathbb{E}[k(X, \cdot) \ell(Y, \cdot)^{T}]$ & $\mathbb{E}[k(X, \cdot) | Y = y] $ \\
		\midrule
		Mean & $\mu_{Y}$ \hspace*{\fill} $\in \mathcal{H}_{\ell}$ & $\mu_{YY}$ \hspace*{\fill} $\in \mathcal{H}_{\ell \ell}$ & $\mu_{XY}$ \hspace*{\fill} $\in \mathcal{H}_{k \ell}$ & $\mu_{X | Y = Y}$ \hspace*{\fill} $\in \mathcal{H}_{k}$ \\
		Embedding & $\mu_{Y}(y)$ \hspace*{\fill} $\in \mathbb{R}$ & $\mu_{YY}(y, y')$ \hspace*{\fill} $\in \mathbb{R}$ & $\mu_{XY}(x, y)$ \hspace*{\fill} $\in \mathbb{R}$ & $\mu_{X | Y = y}(x)$ \hspace*{\fill} $\in \mathbb{R}$ \\
		\midrule
		Encoded & $\langle \mu_{Y}, g \rangle_{\mathcal{H}_{\ell}}$ & $\langle \mu_{YY}, g' g^{T} \rangle_{\mathcal{H}_{\ell \ell}}$ & $\langle \mu_{XY}, f g^{T} \rangle_{\mathcal{H}_{k \ell}}$ & $\langle \mu_{X | Y = y}, f \rangle_{\mathcal{H}_{k}}$ \\
		Expectation & $= \mathbb{E}[g(Y)]$ & $= \mathbb{E}[g'(Y) g(Y)]$ & $= \mathbb{E}[f(X) g(Y)]$ & $= \mathbb{E}[f(X) | Y = y] $ \\
		\midrule
		Operator & $C_{X|Y} C_{YY}$ & $C_{YY} := \mu_{YY}$ & $C_{XY} := \mu_{XY}$ & $C_{X|Y} \ell(y, \cdot) := $ \\
		Definition & $= C_{XY}$ & $(C_{YY})^{T} = C_{YY}$ & $(C_{XY})^{T} = C_{YX}$ & $\mu_{X | Y = y}$ \\
		\midrule
		Encoded & $f^{T} C_{XY} =$ & $\langle g', C_{YY} g \rangle_{\mathcal{H}_{\ell}}$ & $\langle f, C_{XY} g \rangle_{\mathcal{H}_{k}}$ & $(C_{X|Y})^{T} f = g := $ \\
		Expectation & $g^{T} C_{YY}$ & $= \mathbb{E}[g'(Y) g(Y)]$ & $= \mathbb{E}[f(X) g(Y)]$ & $ \mathbb{E}[f(X) | Y = \cdot]$ \\
	\end{tabular}
	\end{table}
	
\newpage	
\section{Connections between the Deconditional Mean Operator and Kernel Bayes' Rule for \Cref{sec:related_work}}
\label{sec:connections_to_kbr}

	\setlength{\tabcolsep}{4pt}
	\begin{table*}[!t]
		\caption{Empirical estimators for \gls{DMO} and \gls{KBR}. We use the shorthand $A := (L + n \lambda I)^{-1} \tilde{L}$ and $D := \mathrm{diag}(A \bvec{1})$.}
		\label{tab:dmo_kbr_comparison}
		\centering
		\begin{tabular}{lcccc}
			Method & Joint Operator & Evidence Operator & Posterior Operator & Computational Form \\
			& $\bar{C}_{XY}$ & $\bar{C}_{XX}$ or $\bar{C}_{XX}'$ & $\bar{C}_{Y|X}$ or $\bar{C}_{X|Y}'$ & $\bar{C}_{Y|X}$ or $\bar{C}_{X|Y}'$ \\
			\midrule
			DMO &  $\hat{C}_{X|Y} \tilde{C}_{YY}$ & $\hat{C}_{X|Y} \tilde{C}_{YY} (\hat{C}_{X|Y})^{T}$ & $(\bar{C}_{XY})^{T} (\bar{C}_{XX}' + \epsilon I)^{-1}$ & $\tilde{\Psi} \big[ A^{T} K A + m \epsilon I \big]^{-1} A^{T} \Phi^{T}$ \\
			DMO(W) &  $\hat{C}_{X|Y} \tilde{C}_{YY}$ & $\hat{C}_{X|Y} \tilde{C}_{YY} (\hat{C}_{X|Y})^{T}$ & $(\bar{C}_{XY})^{T} (\bar{C}_{XX}' + \epsilon I)^{-1}$ & $\tilde{\Psi} A^{T} \big[ K A A^{T} + m \epsilon I \big]^{-1} \Phi^{T}$ \\
			KBR(a)-I & $\hat{C}_{X|Y} \tilde{C}_{YY}$ & $\hat{C}_{XX|Y} \tilde{\mu}_{Y}$ & $(\bar{C}_{XY})^{T} (\bar{C}_{XX} + \epsilon I)^{-1}$ & $\tilde{\Psi} A^{T} \big[ KD + m \epsilon I \big]^{-1} \Phi^{T}$ \\
			KBR(a)-II & $\hat{C}_{X|Y} \tilde{C}_{YY}$ & $\hat{C}_{XX|Y} \tilde{\mu}_{Y}$ & $(\bar{C}_{XY})^{T} (\bar{C}_{XX}^{2} + \epsilon I)^{-1} \bar{C}_{XX}$ & $\tilde{\Psi} A^{T} \big[ (KD)^{2} + m^{2} \epsilon I \big]^{-1} K D \Phi^{T}$ \\
			KBR(b)-I & $\hat{C}_{XY|Y} \tilde{\mu}_{Y}$ & $\hat{C}_{XX|Y} \tilde{\mu}_{Y}$ & $(\bar{C}_{XY})^{T} (\bar{C}_{XX} + \epsilon I)^{-1}$ & $\Psi D \big[ KD + m \epsilon I \big]^{-1} \Phi^{T}$ \\
			KBR(b)-II & $\hat{C}_{XY|Y} \tilde{\mu}_{Y}$ & $\hat{C}_{XX|Y} \tilde{\mu}_{Y}$ & $(\bar{C}_{XY})^{T} (\bar{C}_{XX}^{2} + \epsilon I)^{-1} \bar{C}_{XX}$ & $\Psi D \big[ (KD)^{2} + m^{2} \epsilon I \big]^{-1} K D \Phi^{T}$ 
		\end{tabular}
	\end{table*}
	
	Bayesian inference often requires computation of the posterior $\mathbb{P}_{Y|X}$ when given the likelihood $\mathbb{P}_{X|Y}$ and the prior $\mathbb{P}_{Y}$. When density evaluations exist, the Bayes' rule provides their relationship as $p_{Y|X}(\cdot|x) = \frac{p_{X|Y}(x|\cdot) p_{Y}(\cdot)}{\int_{\mathcal{Y}} p_{X|Y}(x|y) p_{Y}(y) dy}$.
		
	Nevertheless, several levels of intractability may arise. The first is when both likelihood and prior density evaluations are tractable but the evidence integral $\int_{\mathcal{Y}} p_{X|Y}(x|y) p_{Y}(y) dy$ is intractable, leading to literatures such as \gls{VI} \citep{blei2017variational} and \gls{MCMC} \citep{hastings1970monte}. The next is when only likelihood evaluations are intractable but sampling is possible, leading literatures such as \gls{LFI} and \gls{ABC} \citep{marin2012approximate}. More rarely, only prior evaluations are intractable but available via sampling, leading to literatures in implicit priors. The last is when both the likelihood and prior evaluations are intractable but available via sampling, leading to newer literatures such as implicit generative models.
		
	While there are many approaches that addresses each of these scenarios, the underlying limitation is that Bayes' rule requires density evaluations that are difficult to approximate in high dimensions from samples. Instead, if relationships between the posterior, likelihood, and prior can be captured without using density evaluations, but directly by using samples, this issue could be more naturally sidestepped. Both \glspl{DMO} and \gls{KBR} provide such a nonparametric Bayes' rule. 
		
	\setlength{\tabcolsep}{4pt}
	\begin{table}[b]
		\caption{Degenerate case for empirical estimators when $\epsilon = 0$}
		\label{tab:dmo_kbr_comparison_degenerate}
		\centering
		\begin{tabular}{ccc}
			DMO(W) & KBR(a) & KBR(b) \\
			\midrule
			$\tilde{\Psi} A^{T} \big[A A^{T}\big]^{-1} K^{-1} \Phi^{T}$ & $\tilde{\Psi} A^{T} D^{-1} K^{-1} \Phi^{T}$ & $\Psi K^{-1} \Phi^{T}$
		\end{tabular}
	\end{table}
	
	\Cref{tab:dmo_kbr_comparison} compares all four forms of \gls{KBR} \citep{song2013kernel} with \gls{DMO}. This table illustrates the different ways each method estimates the joint and evidence operators from likelihood and prior operators, the type of regularization used for inverting the evidence operator, and the final computational form. For \gls{KBR}, (a) and (b) differ in the joint operator, and I and II differ in the type of regularization used for inverting the evidence operator. Via the Woodbury identity, for \gls{DMO} we also show an alternative computational form \gls{DMO}(W) that better illustrate its contrast with \gls{KBR}(a)-I and \gls{KBR}(b)-I. Note that unlike the four types of \gls{KBR}, \gls{DMO}(W) is the same model as \gls{DMO}, just with a different computational form.
		
	In particular, the diagonal matrix $D := \mathrm{diag}(A \bvec{1})$ arises from the use of third order operators. This can make estimators sensitive to regularizations on inverse operators. This is best seen in the degenerate case of $\epsilon \to 0^{+}$, shown in \cref{tab:dmo_kbr_comparison_degenerate}, where for \gls{KBR}(b) the effect of $\tilde{\bvec{y}}$ vanishes, even though $\epsilon$ does not correspond to regularizations from the prior.
		
	Furthermore, the original computational form of \glspl{DMO} involves the inverse of a positive definite matrix. This however is not true for \gls{KBR}(a) and \gls{KBR}(b) since $KD$ is not symmetric and thus the resulting matrix to be inverted cannot be positive definite. For \gls{KBR}(b), by using $D = D^{\frac{1}{2}} D^{\frac{1}{2}}$ and the Woodbury identity, \gls{KBR}(b)-I and \gls{KBR}(b)-II can be written in forms with symmetric matrix inverses as $\bar{C}_{Y|X} = \Psi D^{\frac{1}{2}} \big[ D^{\frac{1}{2}} K D^{\frac{1}{2}} + m \epsilon I \big]^{-1} D^{\frac{1}{2}} \Phi^{T}$ and  $\bar{C}_{Y|X} = \Psi D^{\frac{1}{2}} \big[ D^{\frac{1}{2}} K D K D^{\frac{1}{2}} + m^{2} \epsilon I \big]^{-1} D^{\frac{1}{2}} K D \Phi^{T}$ respectively. However, it is difficult to interpret this form. 

	Finally, similar to \cref{thm:emp_dmo_special_case}, for the other degenerate case where $m = n$, $\tilde{\bvec{y}} = \bvec{y}$, and $\lambda \to 0^{+}$, all estimators revert to a \gls{CME} $\hat{C}_{Y|X} = \Psi (K + n \epsilon I)^{-1} \Phi^{T}$. 

\newpage
\section{Theorems and Experiment Details for \Cref{sec:experiments}}
\label{sec:experiment_details}
		
	\subsection{Hyperparameter Learning of Deconditional Kernel Mean Embeddings for Likelihood Free Inference}
	
		To learn hyperparameters, we maximize the following objective function which approximate the marginal likelihood of the inference problem.
		
		\begin{theorem}[Approximate Marginal Likelihood for \gls{LFI}]
		\label{thm:aml_lfi}
			Assume $\kappa_{\epsilon}(\bvec{y}, \cdot) \in \mathcal{H}_{k}$ and that $\hat{C}_{\bvec{X} | \bm{\Theta}}$ is a bounded operator for all $n$. Denote $\bm{\kappa}_{\epsilon}(\bvec{y}) = \{\kappa_{\epsilon}(\bvec{y}, \bvec{x}_{i})\}_{i = 1}^{n}$ and $\bvec{1}_{m} = \{1\}_{j = 1}^{m}$, then $\bar{q}(\bvec{y}) := \langle \kappa_{\epsilon}(\bvec{y}, \cdot), \hat{C}_{\bvec{X} | \bm{\Theta}} \tilde{\mu}_{\bm{\Theta}} \rangle_{\mathcal{H}_{k}} = \frac{1}{m} \bm{\kappa}_{\epsilon}^{T} A \bvec{1}_{m}$ is an estimator to the marginal likelihood $p_{\epsilon}(\bvec{y})$ and converge at $O_{p}(m^{-\frac{1}{2}} + (n \lambda)^{-\frac{1}{2}} + \lambda^{\frac{1}{2}})$.
		\end{theorem}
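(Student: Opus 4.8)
The plan is to first obtain an \gls{RKHS} representation of the true marginal likelihood $p_{\epsilon}(\bvec{y})$ as a single inner product, then recognise $\bar{q}(\bvec{y})$ as exactly that inner product with the population objects replaced by their empirical plug-ins, and finally control the discrepancy through a decomposition across the two independent sample sizes $n$ and $m$. First I would write the marginal likelihood as a nested integral $p_{\epsilon}(\bvec{y}) = \int \big( \int \kappa_{\epsilon}(\bvec{y}, \bvec{x}) p(\bvec{x} \mid \bm{\theta})\, d\bvec{x} \big) p(\bm{\theta})\, d\bm{\theta}$, using $p_{\epsilon}(\bvec{y} \mid \bvec{x}) = \kappa_{\epsilon}(\bvec{y}, \bvec{x})$. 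Since $\kappa_{\epsilon}(\bvec{y}, \cdot) \in \mathcal{H}_{k}$, the reproducing property turns the inner integral into $\langle \kappa_{\epsilon}(\bvec{y}, \cdot), \mu_{\bvec{X} \mid \bm{\Theta} = \bm{\theta}} \rangle_{k}$ with $\mu_{\bvec{X} \mid \bm{\Theta} = \bm{\theta}} = C_{\bvec{X} \mid \bm{\Theta}}\, \ell(\bm{\theta}, \cdot)$ (\cref{def:cmo}). Integrating against the prior and using boundedness of $C_{\bvec{X} \mid \bm{\Theta}}$ to exchange integral and operator, together with $\mu_{\bm{\Theta}} = \int \ell(\bm{\theta}, \cdot) p(\bm{\theta})\, d\bm{\theta}$, yields the clean identity $p_{\epsilon}(\bvec{y}) = \langle \kappa_{\epsilon}(\bvec{y}, \cdot), C_{\bvec{X} \mid \bm{\Theta}} \mu_{\bm{\Theta}} \rangle_{k}$, which is precisely $\bar{q}(\bvec{y})$ with the population operators substituted in. The closed form then follows mechanically: substituting $\hat{C}_{\bvec{X} \mid \bm{\Theta}} = \Phi(L + n\lambda I)^{-1}\Psi^{T}$ (\cref{thm:emp_cmo}) and $\tilde{\mu}_{\bm{\Theta}} = \frac{1}{m}\tilde{\Psi}\bvec{1}_{m}$ gives $\hat{C}_{\bvec{X} \mid \bm{\Theta}}\tilde{\mu}_{\bm{\Theta}} = \frac{1}{m}\Phi A \bvec{1}_{m}$, and one more application of the reproducing property $\langle \kappa_{\epsilon}(\bvec{y}, \cdot), \Phi \rangle_{k} = \bm{\kappa}_{\epsilon}(\bvec{y})^{T}$ produces $\bar{q}(\bvec{y}) = \frac{1}{m}\bm{\kappa}_{\epsilon}^{T} A \bvec{1}_{m}$.

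For the rate, I would split the error along the two estimation problems,
\begin{equation}
\bar{q}(\bvec{y}) - p_{\epsilon}(\bvec{y}) = \underbrace{\langle \kappa_{\epsilon}(\bvec{y}, \cdot), \hat{C}_{\bvec{X} \mid \bm{\Theta}}(\tilde{\mu}_{\bm{\Theta}} - \mu_{\bm{\Theta}}) \rangle_{k}}_{(\mathrm{I})} + \underbrace{\langle \kappa_{\epsilon}(\bvec{y}, \cdot), (\hat{C}_{\bvec{X} \mid \bm{\Theta}} - C_{\bvec{X} \mid \bm{\Theta}})\mu_{\bm{\Theta}} \rangle_{k}}_{(\mathrm{II})}.
\end{equation}
For term $(\mathrm{I})$, Cauchy--Schwarz gives $|(\mathrm{I})| \le \|\kappa_{\epsilon}(\bvec{y}, \cdot)\|_{k}\,\|\hat{C}_{\bvec{X} \mid \bm{\Theta}}\|_{\mathrm{op}}\,\|\tilde{\mu}_{\bm{\Theta}} - \mu_{\bm{\Theta}}\|_{\ell}$; the operator norm is bounded uniformly in $n$ by assumption, and the standard $O_{p}(m^{-1/2})$ concentration of empirical \glspl{KME} for \textit{iid} prior samples controls the remaining factor, so $(\mathrm{I}) = O_{p}(m^{-1/2})$. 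For term $(\mathrm{II})$, Cauchy--Schwarz reduces the problem to the operator-consistency bound $\|(\hat{C}_{\bvec{X} \mid \bm{\Theta}} - C_{\bvec{X} \mid \bm{\Theta}})\mu_{\bm{\Theta}}\|_{k}$; invoking the classical conditional-mean-operator convergence result \citep{song2009hilbert}, whose bias--variance decomposition contributes a regularisation bias of order $\lambda^{1/2}$ and an estimation variance of order $(n\lambda)^{-1/2}$, gives $(\mathrm{II}) = O_{p}((n\lambda)^{-1/2} + \lambda^{1/2})$. The triangle inequality then assembles the stated rate $O_{p}(m^{-1/2} + (n\lambda)^{-1/2} + \lambda^{1/2})$.

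The hard part will be term $(\mathrm{II})$: the $O_{p}((n\lambda)^{-1/2} + \lambda^{1/2})$ bound is the genuinely delicate piece of conditional-mean-embedding theory, requiring either a careful reuse of the existing consistency theorems under their regularity hypotheses (in particular that $C_{\bvec{X}\mid\bm{\Theta}}\mu_{\bm{\Theta}}$ lies in an appropriate range space so that the regularisation bias is truly $O(\lambda^{1/2})$ rather than slower), or a from-scratch argument decomposing $\hat{C}_{\bvec{X}\mid\bm{\Theta}} - C_{\bvec{X}\mid\bm{\Theta}}$ into the empirical-operator fluctuation and the regularisation term and separately bounding each. By contrast, term $(\mathrm{I})$, the \gls{RKHS} representation, and the closed-form derivation are all routine.
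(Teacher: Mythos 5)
Your proposal is correct and follows essentially the same route as the paper: both split the error at the intermediate quantity $\langle \kappa_{\epsilon}(\bvec{y},\cdot), \hat{C}_{\bvec{X}|\bm{\Theta}}\mu_{\bm{\Theta}}\rangle_{\mathcal{H}_{k}}$, bound the prior-sampling term by Cauchy--Schwarz and boundedness of $\hat{C}_{\bvec{X}|\bm{\Theta}}$ to get $O_{p}(m^{-1/2})$, and reduce the remaining term to \gls{CMO} consistency at $O_{p}((n\lambda)^{-1/2}+\lambda^{1/2})$ (the paper routes this through $\mathbb{E}[\|(\hat{C}_{\bvec{X}|\bm{\Theta}}-C_{\bvec{X}|\bm{\Theta}})\ell(\bm{\Theta},\cdot)\|_{\mathcal{H}_{k}}]$ and the \acrshort{HS} norm rather than Cauchy--Schwarz against $\mu_{\bm{\Theta}}$ directly, but this is cosmetic). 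Your closing caveat about the range-space condition needed for the $\lambda^{1/2}$ bias rate is a fair point that the paper leaves implicit.
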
	

		\begin{proof}[Proof of \Cref{thm:aml_lfi}]
		
			Consider the absolute difference between $\bar{q}(\bvec{y})$ and $p_{\epsilon}(\bvec{y})$,
			\begin{equation}
			\begin{aligned}
				| \bar{q}(\bvec{y}) - p_{\epsilon}(\bvec{y}) | &\leq | \bar{q}(\bvec{y}) - q(\bvec{y}) | + | q(\bvec{y}) - p_{\epsilon}(\bvec{y}) |.
			\end{aligned}
			\end{equation}
			where $q(\bvec{y}) := \langle \kappa_{\epsilon}(\bvec{y}, \cdot), \hat{C}_{\bvec{X} | \bm{\Theta}} \mu_{\bm{\Theta}} \rangle_{\mathcal{H}_{k}} = \mathbb{E}[\langle \kappa_{\epsilon}(\bvec{y}, \cdot), \hat{C}_{\bvec{X} | \bm{\Theta}} \ell(\bm{\Theta}, \cdot) \rangle_{\mathcal{H}_{k}}]$. The first term is
			\begin{equation}
			\begin{aligned}
				| \bar{q}(\bvec{y}) - q(\bvec{y}) | &= | \langle \kappa_{\epsilon}(\bvec{y}, \cdot), \hat{C}_{\bvec{X} | \bm{\Theta}} (\tilde{\mu}_{\bm{\Theta}} - \mu_{\bm{\Theta}}) \rangle_{\mathcal{H}_{k}}| = | \langle (\hat{C}_{\bvec{X} | \bm{\Theta}})^{T} \kappa_{\epsilon}(\bvec{y}, \cdot), (\tilde{\mu}_{\bm{\Theta}} - \mu_{\bm{\Theta}}) \rangle_{\mathcal{H}_{\ell}}| \\
				&\leq \| (\hat{C}_{\bvec{X} | \bm{\Theta}})^{T} \kappa_{\epsilon}(\bvec{y}, \cdot) \|_{\mathcal{H}_{\ell}} \| (\tilde{\mu}_{\bm{\Theta}} - \mu_{\bm{\Theta}}) \|_{\mathcal{H}_{\ell}} \\
				&\leq c \| (\tilde{\mu}_{\bm{\Theta}} - \mu_{\bm{\Theta}}) \|_{\mathcal{H}_{\ell}}.
			\end{aligned}
			\end{equation}
			for some constant $c$ since $\hat{C}_{\bvec{X} | \bm{\Theta}}$ is a bounded operator for all $n$. Hence, $| \bar{q}(\bvec{y}) - q(\bvec{y}) |$ decays at $O(m^{-\frac{1}{2}})$.
			
			For the second term, we have $p_{\epsilon}(\bvec{y}) = \mathbb{E}[p_{\epsilon}(\bvec{y} | \bm{\Theta})] = \mathbb{E}[\langle \kappa_{\epsilon}(\bvec{y}, \cdot), \mu_{\bvec{X} | \bm{\Theta} = \bm{\Theta}} \rangle_{\mathcal{H}_{k}}] = \mathbb{E}[\langle \kappa_{\epsilon}(\bvec{y}, \cdot), C_{\bvec{X} | \bm{\Theta}} \ell(\bm{\Theta}, \cdot) \rangle_{\mathcal{H}_{k}}]$, similar to $q(\bvec{y}) = \mathbb{E}[\langle \kappa_{\epsilon}(\bvec{y}, \cdot), \hat{C}_{\bvec{X} | \bm{\Theta}} \ell(\bm{\Theta}, \cdot) \rangle_{\mathcal{H}_{k}}]$. Since we use bounded kernels, define $\bar{\ell} := \sup_{\bm{\theta}} \| \ell(\bm{\theta}, \cdot) \|_{\mathcal{H}_{\ell}}$ and $\bar{\kappa}_{\epsilon} := \sup_{\bvec{y}} \| \kappa_{\epsilon}(\bvec{y}, \cdot) \|_{\mathcal{H}_{k}}$. The second term becomes 
			\begin{equation}
			\begin{aligned}
				| q(\bvec{y}) - p_{\epsilon}(\bvec{y}) | &= | \mathbb{E}[\langle \kappa_{\epsilon}(\bvec{y}, \cdot), \hat{C}_{\bvec{X} | \bm{\Theta}} \ell(\bm{\Theta}, \cdot) \rangle_{\mathcal{H}_{k}}] - \mathbb{E}[\langle \kappa_{\epsilon}(\bvec{y}, \cdot), C_{\bvec{X} | \bm{\Theta}} \ell(\bm{\Theta}, \cdot) \rangle_{\mathcal{H}_{k}}] | \\
				&\leq \mathbb{E}[| \langle \kappa_{\epsilon}(\bvec{y}, \cdot), \hat{C}_{\bvec{X} | \bm{\Theta}} \ell(\bm{\Theta}, \cdot) \rangle_{\mathcal{H}_{k}}] - \mathbb{E}[\langle \kappa_{\epsilon}(\bvec{y}, \cdot), C_{\bvec{X} | \bm{\Theta}} \ell(\bm{\Theta}, \cdot) \rangle_{\mathcal{H}_{k}} | ] \\
				&= \mathbb{E}[| \langle \kappa_{\epsilon}(\bvec{y}, \cdot), (\hat{C}_{\bvec{X} | \bm{\Theta}} - C_{\bvec{X} | \bm{\Theta}}) \ell(\bm{\Theta}, \cdot) \rangle_{\mathcal{H}_{k}} |] \\
				&\leq \mathbb{E}[ \| \kappa_{\epsilon}(\bvec{y}, \cdot) \|_{\mathcal{H}_{k}} \|(\hat{C}_{\bvec{X} | \bm{\Theta}} - C_{\bvec{X} | \bm{\Theta}}) \ell(\bm{\Theta}, \cdot) \|_{\mathcal{H}_{k}} ] \\
				&= \| \kappa_{\epsilon}(\bvec{y}, \cdot) \|_{\mathcal{H}_{k}} \mathbb{E}[ \|(\hat{C}_{\bvec{X} | \bm{\Theta}} - C_{\bvec{X} | \bm{\Theta}}) \ell(\bm{\Theta}, \cdot) \|_{\mathcal{H}_{k}} ] \\
				&= \bar{\kappa}_{\epsilon} \mathbb{E}[ \| (\hat{C}_{\bvec{X} | \bm{\Theta}} - C_{\bvec{X} | \bm{\Theta}}) \ell(\bm{\Theta}, \cdot) \|_{\mathcal{H}_{k}} ] \\
				&\leq \bar{\kappa}_{\epsilon} \mathbb{E}[ \| \hat{C}_{\bvec{X} | \bm{\Theta}} - C_{\bvec{X} | \bm{\Theta}} \|_{HS} \| \ell(\bm{\Theta}, \cdot) \|_{\mathcal{H}_{\ell}} ] \\
				&= \bar{\kappa}_{\epsilon} \mathbb{E}[ \| \hat{C}_{\bvec{X} | \bm{\Theta}} - C_{\bvec{X} | \bm{\Theta}} \|_{HS} \sqrt{\ell(\bm{\Theta}, \bm{\Theta})} ] \\
				&= \bar{\kappa}_{\epsilon} \mathbb{E}[ \sqrt{\ell(\bm{\Theta}, \bm{\Theta})} ]  \| \hat{C}_{\bvec{X} | \bm{\Theta}} - C_{\bvec{X} | \bm{\Theta}} \|_{HS} \\
				&\leq \bar{\kappa}_{\epsilon} \mathbb{E}[ \bar{\ell} ]  \| \hat{C}_{\bvec{X} | \bm{\Theta}} - C_{\bvec{X} | \bm{\Theta}} \|_{HS} \\
				&= \bar{\kappa}_{\epsilon} \bar{\ell}  \| \hat{C}_{\bvec{X} | \bm{\Theta}} - C_{\bvec{X} | \bm{\Theta}} \|_{HS}.
			\end{aligned}
			\end{equation}
			Hence, in the worst case $| q(\bvec{y}) - p_{\epsilon}(\bvec{y}) |$ decays at the rate $\| \hat{C}_{\bvec{X} | \bm{\Theta}} - C_{\bvec{X} | \bm{\Theta}} \|_{HS}$ decays, which is $O_{p}((n \lambda)^{-\frac{1}{2}} + \lambda^{\frac{1}{2}})$. Together with the first term, we have the claimed convergence rate.
			
			Finally, the empirical form is obtained from substituting the empirical forms for the likelihood \gls{CMO} and prior embedding, $\bar{q}(\bvec{y}) := \langle \kappa_{\epsilon}(\bvec{y}, \cdot), \hat{C}_{\bvec{X} | \bm{\Theta}} \tilde{\mu}_{\bm{\Theta}} \rangle_{\mathcal{H}_{k}} = \langle \kappa_{\epsilon}(\bvec{y}, \cdot), (\Phi (L + n \lambda I)^{-1} \Psi^{T}) (\frac{1}{m} \tilde{\Psi} \bvec{1}_{m}) \rangle_{\mathcal{H}_{k}} = \frac{1}{m} \bm{\kappa}_{\epsilon}^{T} A \bvec{1}_{m}$.
		\end{proof}
			 
	To satisfy $\kappa_{\epsilon}(\bvec{y}, \cdot) \in \mathcal{H}_{k}$, we use $\kappa_{\epsilon}(\bvec{y}, \bvec{x}) = p_{\epsilon}(\bvec{y} | \bvec{x})= \mathcal{N}(\bvec{y} ; \bvec{x}, \epsilon^{2} I)$ and Gaussian kernel for $k$ with length scale $\epsilon$, so that $\kappa_{\epsilon}$ is just the normalized version of the reproducing kernel $k$. 

	Importantly, while the approximate marginal likelihood $\bar{q}(\bvec{y})$ depends on the hyperparameters of the kernels $k$ and $\ell$ and the regularization $\lambda$, it does not depend on $\epsilon$. At first, it seems that this objective cannot help us learn $\epsilon$. Nevertheless, due to points (4) and (5) of \cref{thm:ttblr_ttbkr}, we have that a good proxy for setting $\epsilon$ once $\lambda$ is learned is $\epsilon = \frac{n}{m} \lambda$.
	
	Nevertheless, for simplicity in our experiments we optimize all kernel hyperparameters and keep the regularization hyperparameters fixed, which has already achieved sufficiently accurate results.
	
	\subsection{Exponential-Gamma Experiment}

		The toy exponential-gamma problem is a standard benchmark for likelihood-free inference, where the true posterior $p_{\epsilon}(\bm{\theta} | \bvec{y})$ is known and tractable even for $\epsilon = 0$. We follow the experimental setup of \citet{meeds2014gps}.
		
		Of the kernel based methods that we have benchmarked against, \gls{K-ABC} \citep{nakagome2013kernel}, \gls{K2-ABC} \citep{park2016k2}, \gls{KBR} \citep{fukumizu2013kernel}, and \gls{KELFI} \citep{hsu2019bayesian} are also \gls{LFI} methods based on the \gls{KME} framework. Consequently, they are very suitable for comparisons towards \gls{DME}. For all these methods, we apply kernel herding on their posterior embeddings to get posterior samples, and plot the approximate posterior density in \cref{fig:lfi} (left) using \gls{KDE} on the posterior samples. In contrast, \gls{GPS-ABC} \citep{meeds2014gps} has its own adaptive \gls{MCMC} based sampling algorithm. We set a simulation budget of 200 simulations and run it until either $10000$ posterior samples are generated or the simulation budget is reached. For hyperparameters, we used standard median heuristic for \gls{K-ABC}, \gls{K2-ABC}, and \gls{KBR}. In contrast, \gls{DME} and \gls{KELFI} have their own marginal likelihoods for hyperparameter learning. For both cases, we find global and local optimums of the marginal likelihood for the hyperparameters and show their results, emphasizing that maximizing the marginal likelihood objective produces better inference results. The hyperparameters of the \gls{GP} surrogate itself used in \gls{GPS-ABC} are learned by maximizing the marginal likelihood of the \gls{GPR} \citep{rasmussen2006gaussian}. However, for hyperparameters of \gls{GPS-ABC} that are not part of the surrogate, we select them based on the original paper \citep{meeds2014gps}. We then report its best two results.
		
	\subsection{Lotka-Volterra Experiment}

		The Lotka-Volterra simulator describes the population dynamics of a well known predator-prey system. For most parameters, the simulation produces chaotic behavior. Realistic scenarios with oscillatory behavior appears only for a small set of parameters.  Consequently, inference on the Lotka-Volterra simulator is extremely challenging.
		
		We follow the setup of \citet{papamakarios2016fast} and \citet{tran2017hierarchical}. There are 4 parameters and 9 normalized summary statistics. We place the same uniform prior on the log parameters and use the same ground truth parameters. After performing inference on all four parameters, we similarly show in \cref{fig:lfi} (right) the marginal posterior distribution for $\log{\theta_{1}}$ in the same format as \citet{papamakarios2016fast} and \citet{tran2017hierarchical}.
		
		For \gls{KBR} \citep{fukumizu2013kernel}, \gls{KELFI} \citep{hsu2019bayesian}, and \gls{DME}, we again sample their posterior mean embeddings with kernel herding to get $10000$ posterior samples. Finally, to compute the $95\%$ interval, we compute the empirical $2.5\%$ quantile and $97.5\%$ quantiles on marginal samples of $\log{\theta_{1}}$ from the $10000$ posterior samples. For \gls{MDN} \citep{papamakarios2016fast} and the two \gls{LFVI} methods \citep{tran2017hierarchical}, we report the results from the original source, as well as their results for \gls{REJ-ABC}, \gls{MCMC-ABC}, and \gls{SMC-ABC}.

\end{document}